\documentclass[twoside]{article}

\usepackage[preprint]{aistats2026}

\usepackage[american]{babel}

\usepackage[round]{natbib} 
    \bibliographystyle{apalike}

\usepackage{mathtools} 
\usepackage{booktabs} 
\usepackage{tikz} 


\usepackage{graphicx} 
\usepackage[utf8]{inputenc} 
\usepackage[T1]{fontenc}    
\PassOptionsToPackage{colorlinks=true,linkcolor=red,citecolor=magenta}{hyperref}
\usepackage{booktabs}       
\usepackage{amsfonts}       
\usepackage{amsthm}         
\usepackage{nicefrac}       
\usepackage{microtype}      
\usepackage{xcolor}         
\usepackage{amsmath}        
\usepackage{amssymb}        
\usepackage{mathtools}      
\usepackage{longtable} 
\usepackage{float}     

\usepackage{algorithm}      
\usepackage[noend]{algpseudocode} 

\usepackage{etoc}          
\usepackage{tikz}  
\usepackage{pgfplots} 
\pgfplotsset{compat=1.18} 
\usetikzlibrary{calc} 
\usepackage{subcaption} 
\usepackage{hyperref}   


 
\newtheorem{theorem}{Theorem}
\newtheorem{lemma}{Lemma}
\newtheorem{remark}{Remark}
\newtheorem{definition}{Definition}
\newtheorem{proposition}{Proposition}

\newtheorem{assumption}{Assumption}


\DeclarePairedDelimiter{\brk}{[}{]}
\DeclarePairedDelimiter{\crl}{\{}{\}}

\DeclareMathOperator*{\argmax}{arg\,max}

\newcommand{\on}{\mathrm{on}} 
\newcommand{\off}{\mathrm{off}} 
\newcommand{\boff}{b_h^{\off}} 
\newcommand{\bont}{b_h^{t,\on}} 
\newcommand{\boffp}[1]{b_{#1}^{\off}} 
\newcommand{\ovpihs}{\overline{\pi}_h(s)} 
\newcommand{\BPS}{\mathrm{BPS}_{\Delta}} 
\newcommand{\PPS}{\mathrm{PPS}_{\Delta}} 
\newcommand{\PS}{\mathrm{PS}_{\Delta}} 
\newcommand{\lowq}{\widetilde{W}} 
\newcommand{\highq}{\widetilde{U}} 
\newcommand{\lowv}{W} 
\newcommand{\highv}{U} 
\newcommand{\ohighv}{\overline{V}} 
\newcommand{\olowv}{\underline{V}} 
\newcommand{\ohighq}{\overline{Q}} 
\newcommand{\olowq}{\underline{Q}} 

\newboolean{voirnote}
\setboolean{voirnote}{true} 

\newcounter{notecounter}
\usepackage{framed}
\usepackage{ifthen}

\ifthenelse{\boolean{voirnote}}
{
\newcommand{\note}[2]{\refstepcounter{notecounter} \begin{leftbar}  \textcolor{red}{\textbf{#1}} \textcolor{violet}{(\textbf{Note} \thenotecounter)}: \begin{sf}{\color{blue} #2} \end{sf}  \end{leftbar}}
}{
\newcommand{\note}[2]{}
}


\newcommand{\sandwich}{bounding interval }



\begin{document}

\twocolumn[

\aistatstitle{Learning Upper–Lower Value Envelopes to Shape Online RL: A Principled Approach}

\aistatsauthor{ Sebastian Reboul \And Hélène Halconruy }

\aistatsaddress{
  SAMOVAR, \\
  Télécom SudParis, \\
  Institut Polytechnique de Paris, \\
  91120 Palaiseau \\[1.5ex]
  Wiremind Cargo \\
  16 Bd Poissonnière \\
  75009 Paris
  \And
  SAMOVAR, \\
  Télécom SudParis, \\
  Institut Polytechnique de Paris, \\
  91120 Palaiseau \\[1.5ex]
  Modal'X, \\
  Université Paris-Nanterre, \\
  92000 Nanterre
} ]

\begin{abstract}
We investigate the fundamental problem of leveraging offline data to accelerate online reinforcement learning - a direction with strong potential but limited theoretical grounding. Our study centers on how to \emph{learn} and \emph{apply} value envelopes within this context. To this end, we introduce a principled two-stage framework: the first stage uses offline data to derive upper and lower bounds on value functions, while the second incorporates these learned bounds into online algorithms. Our method extends prior work by decoupling the upper and lower bounds, enabling more flexible and tighter approximations. In contrast to approaches that rely on fixed shaping functions, our envelopes are data-driven and explicitly modeled as random variables, with a filtration argument ensuring independence across phases. The analysis establishes high-probability regret bounds determined by two interpretable quantities, thereby providing a formal bridge between offline pre-training and online fine-tuning. Empirical results on tabular MDPs demonstrate substantial regret reductions compared with both UCBVI and prior methods while remaining competitive with related approaches.





\end{abstract}

\section{Introduction}

%

%

\subsection{Motivation}

\noindent
Reinforcement Learning (RL) provides a general framework where agents interact with an environment to accomplish specific goals. Most existing theoretical guarantees for RL algorithms are framed in the \emph{worst case}, meaning they apply regardless of the structure of the underlying problem \citep{azar_minimax_2017,jin2018q,russo2019worst,zhang2024settling}. While this universality is appealing, such bounds are often overly pessimistic: they ignore problem-specific features that could make learning dramatically easier.

One way to go beyond worst-case analysis is through \emph{gap-dependent regret bounds} \citep{simchowitz2019non,zanette2019tighter}, which exploit the sub-optimality gaps between actions or policies. Another complementary approach is to assume that the learner has access to some form of \emph{prior knowledge} about the environment. A natural and powerful instance of such prior knowledge is given by \emph{shaping functions}, approximations of the optimal value or $Q$-functions.

Building on this idea, \cite{gupta_unpacking_2022} introduced a framework showing how approximate shaping functions can be leveraged to rescale exploration bonuses and clip value estimates. By focusing exploration on the “effective” subset of states or state–action pairs that matter, shaping functions effectively prune away regions that are likely suboptimal, yielding significantly tighter regret bounds. However, as \cite{gupta_unpacking_2022} point out, their analysis assumes that a sufficiently accurate shaping function is provided \emph{a priori}. This leads to a natural and pressing question:



\begin{center}
    \emph{Can we \textbf{learn} a shaping function in a principled way?}\vspace{-0.2cm}
\end{center}
\subsection{Contributions}
In many practical scenarios, we may have access to a batch of historical data from a related task or a previous policy. This offline data, while perhaps insufficient for learning a near-optimal policy directly, could contain valuable information for guiding a subsequent, more expensive online interaction phase.

In this work, we propose a two-stage framework that uses offline data to learn a shaping function. Our key contributions are:
\begin{enumerate}
    \item We use a model-based offline RL algorithm to process a batch dataset of $K$ trajectories and produce bounds on the optimal value function. Importantly, these value estimates are \textbf{optimistic} and \textbf{pessimistic}, contrary to the standard pessimistic approach in offline reinforcement learning. We call these bounds \emph{value envelopes}.
    \item We use the framework from \cite{gupta_unpacking_2022} to integrate these estimates into an online algorithm using them as shaping functions. Contrary to the original work, these shaping functions are thus random variables.
    \item We derive regret bounds that connect the offline dataset to online sample complexity using a custom empirical Bernstein that remains valid when bounds are random. Our bounds introduce two key quantities, related to the quality of the learned envelopes and the MDP.
    \item We validate our results by examining the effect of these two quantities as well as the offline dataset size on regret through experiments on parametrized layered tabular MDPs, demonstrating significant improvements over Bernstein UCBVI \cite{azar_minimax_2017} and prior approaches as well as a version of UCBVI with augmented initial counts (see Section~\ref{sec:experiments}).
\end{enumerate}

Our work provides bounds that explicitly connect offline sample size to online sample complexity, demonstrating how a limited batch of data can be provably leveraged to accelerate subsequent exploration and learning without explicitly leveraging the offline data in the online phase.

\begin{remark}[Privacy Compatibility]
    An important aspect of our method is that the transition data and estimates from the offline phase are not reused in the online phase. This allows for situations in which the actual data found in the state transitions pertain to sensitive or confidential data. Only passing on value envelopes maintains \textbf{privacy} and fulfills our main working hypothesis (see Assumption~\ref{assumption:sandwich}).
\end{remark}

\subsection{Organization of the paper}
The paper is organized as follows. Section \ref{sec:setting} introduces the RL framework and algorithms. Section \ref{sec:method} presents our envelope-shaping approach, while Section \ref{sec:bounds} derives regret bounds, focusing on the $Q$-shaping variant (with $V$-shaping detailed in Section \ref{sec:V_shaping} and the Appendix). Section \ref{sec:experiments} reports comparisons with Bernstein UCBVI, Hoeffding UCBVI and \cite{gupta_unpacking_2022}, and Section \ref{sec:discussion} concludes.

\subsection{Related work}



\paragraph{Gap-dependent regret bounds}
To address this limitation, researchers have investigated \emph{gap-dependent regret bounds} \citep{simchowitz2019non,zanette2019tighter}, which incorporate the sub-optimality gap of actions or policies. By taking this separation into account, such bounds often become much tighter when suboptimal choices are clearly worse, capturing the intuition that learning is easier when the optimal policy is well-distinguished from the rest \citep{dann2021beyond,lykouris2021corruption}.

\paragraph{Prior knowledge}
Beyond gap-based analyses, another approach to moving past worst-case guarantees is to assume that the learning algorithm has access to certain forms of \emph{prior knowledge} about the problem instance. For instance, the algorithm may be given predictions of the optimal $Q$-function that are accurate on a large but unknown subset of state–action pairs and satisfy a \emph{distillation} condition \citep{golowich2022can}, ensuring sufficient agreement with the true $Q^\star$ to let the learner effectively “distill” useful knowledge. Such prior knowledge can also take the form of shaping functions derived from the optimal value or $Q$-value functions \citep{gupta_unpacking_2022}. In a similar vein, \emph{heuristics} \citep{kolobov2012planning} (i.e., approximate value functions) constructed from domain knowledge or offline data can effectively shorten the problem horizon, enabling faster learning in online RL \citep{cheng2021heuristic}.

\paragraph{Accelerating online RL with offline knowledge}
These perspectives connect naturally to a growing body of work on leveraging offline data to accelerate online reinforcement learning. Training RL agents from scratch often requires large amounts of interaction data \citep{zhai2022computational,ye2020towards,kakade2002approximately}, whereas offline datasets can provide strong initializations that allow faster and more efficient online fine-tuning \citep{nakamoto2023cal, huang_augmenting_2025, pattanaik_2025}. This can be achieved by incorporating offline trajectories into replay buffers \citep{song2022hybrid,schaal1996learning, lee_2022}, adding behavioral cloning losses \citep{kang2018policy,zhu2019dexterous}, or extracting skill representations for downstream learning \citep{ajay2020opal,gupta2019relay}.

\paragraph{Reward shaping}
Another line of work is \emph{reward shaping}, which augments the reward function with an additional term to accelerate learning while preserving the optimal policy \citep{ng1999policy}. \citet{brys2015reinforcement} use demonstrations - trajectories of state–action pairs from human knowledge or expert behavior - as heuristic guidance for exploration, incorporating them through potential-based shaping. However, this approach relies heavily on extensive human input, which can be difficult to obtain in complex environments. To mitigate this, fully autonomous methods have been proposed. 
For example, \citet{li_automatic} derive potential functions from offline datasets using causal state-value upper bounds under confounded data, and embed them into online learning for exploration. Other approaches automate shaping through meta-learning \citep{zou2019reward} or self-adaptive schemes that adjust potentials over time \citep{ma2025highly}.

\paragraph{Fine-tuning} involves starting from a pre-trained model (which has already learned general patterns from a large, broad dataset) and further training it on a specific task or domain \citep{zheng2025learning, li_2023}. It’s a key technique in \emph{transfer learning} \citep{PanYang2010,zhuang2020comprehensive}, where one uses prior experience in related tasks to accelerate learning on a new one. 



\section{Statistical setting}\label{sec:setting}
\label{sec:setting}

\paragraph{Markov decision process} We consider a finite-horizon episodic Markov Decision Process (MDP) defined by the tuple $\mathcal{M} {=} \crl{\mathcal{S}, \mathcal{A}, H, \crl{P_h}_{h=1}^{H}, \crl{r_h}_{h=1}^{H}, \rho}$, where $\mathcal{S}$ is the state space, $\mathcal{A}$ is the action space, $H$ is the horizon, $P_h(s'|s,a)$ is the transition probability at step $h$, $r_h(s,a) \in [0,1]$ is the deterministic known reward, and $\rho$ is the initial state distribution.

A policy $\pi {=} \{\pi_h\}_{h=1}^H$ is a sequence of functions where $\pi_h: \mathcal{S} \to \Delta(\mathcal{A})$. Here, for any set $\mathcal{X}$ equipped with some $\sigma$-field, $\Delta(\mathcal{X})$ denotes the set of probability distributions over $\mathcal{X}$.
The value function and action-value function for a policy $\pi$ are defined as:
\[
V_h^\pi(s) {=} \mathbb{E}^\pi \bigg[ \sum_{k=h}^H r_k(s_k, a_k) \Big| s_h=s \bigg],
\]    
and
\[
 Q_h^\pi(s,a) {=} r_h(s,a) + \mathbb{E}_{s' \sim P_h(\cdot|s,a)} \brk{V_{h+1}^\pi(s')}. 
\]
The optimal policy is denoted $\pi^\star$, with corresponding value functions $V_h^\star$ and $Q_h^\star$.
We place ourselves in the tabular setting, meaning that the state space $\mathcal{S}$ and action space $\mathcal{A}$ are finite sets. We denote the total number of states by $S:{=} |\mathcal{S}|$ and the number of actions by $A \coloneq |\mathcal{A}|$.

Furthermore, to simplify our analysis, we adopt the standard assumption (see \cite{xu_fine-grained_2021, zhang_provably_2021}) that the MDP is layered.\footnote{Any finite-horizon tabular MDP can be transformed into an equivalent layered MDP by augmenting the state space with the timestep, i.e., defining new states as $(s,h)$ for $s \in \mathcal{S}$ and $h \in \lbrace 1,\dots,H\rbrace$.}
The state space $\mathcal{S}$ is partitioned into $H$ disjoint subsets $\mathcal{S} {=} \bigcup_{h=1}^{H} \mathcal{S}_h$, with an additional terminal layer $\mathcal{S}_{H+1}{=}\{\perp\}$. Transition probability distributions $\{P_h\}_{h=1}^H$ are layer-dependent: for each $h \in \{1,\dots,H\}$, $P_h:\mathcal{S}_h\times\mathcal{A}\to\Delta(\mathcal{S}_{h+1})$ maps state–action pairs to distributions over the next layer. Rewards $\{r_h\}_{h=1}^H$ are similarly defined layer-wise, with $r_h:\mathcal{S}_h\times\mathcal{A}\to[0,1]$. The initial distribution $\rho\in\Delta(\mathcal{S}_1)$ is supported entirely on the first layer $\mathcal S_1$. We note $h(s)$ the layer $h$ that $s$ belongs to. Reinforcement learning algorithms typically take one of two forms: \emph{offline} and \emph{online}.


\paragraph{Offline algorithms (see \cite{li_settling_2024}) } In this setting, the algorithm does not have access to the MDP. Instead it receives a fixed batch of data $\mathcal{D}$ corresponding to $K$ trajectories, collected by a \emph{behavioural} policy $\pi^{\mathsf{b}}{=}\{\pi^{\mathsf{b}}_h\}_{1\leq h\leq H}$: 
\[\forall k \in [K], \quad 
\tau^{(k)}{=}\big(s^{(k)}_1,a^{(k)}_1,r^{(k)}_1,\dots,s^{(k)}_H,a^{(k)}_H,r^{(k)}_H\big), 
\]
where we have set $[K]:{=}\{1,\ldots,K\}$. This dataset is made up of i.i.d trajectories obtained by running $\pi^{\mathsf{b}}$ through the MDP both of which the learner may not have access to. Analysis usually packs this dataset in a single $\sigma$-family $\mathcal{G}_K {=} \sigma(\{\tau^{(k)}\}_{1\leq k \leq K}).$ The learner must extract estimates from these trajectories and produce a policy minimizing regret, with no possibility of adaptively querying new sample trajectories. These estimates are thus measurable with respect to $\mathcal{G}_K$.

\paragraph{Online algorithms (see \cite{zhang2024settling})} The algorithm interacts with the MDP for $T$ episodes, using a $t$-dependent policy $\pi^t$. Denoting each trajectory 
\[\forall t \in [T], \quad 
\tau_{t}{=}\big(s^{t}_1,a^{t}_1,r^{t}_1,\dots,s^{t}_H,a^{t}_H,r^{t}_H\big),
\]
the filtration $\mathcal J{=}(\mathcal{J}_t)_{t\in[T]}$ such that $\mathcal{J}_t {=} \sigma(\crl{\tau_{i}}_{i=1}^{t})$ encodes the information after $t$ episodes. Each policy $\pi^t$ used to explore episode $t$ is computed from the past $t{-}1$ episodes and is thus $\mathcal{J}_{t-1}$ measurable. This adaptivity distinguishes the online case from the offline one.

\section{Method: Envelope Shaping}\label{sec:method}

\subsection{From shaping functions to value envelopes}

Our algorithm builds on \cite{gupta_unpacking_2022} in which given \emph{shaping} functions $\widetilde{V}$ satisfying a \emph{$\beta$-sandwich property}
\[\widetilde{V}_h(s) \leq V_h^{\star}(s) \leq \beta \widetilde{V}_h(s),\]
and uses these bounds to rescale exploration bonuses and clip value estimates. This leads to regret bounds that scale with the size of an \emph{effective} state space, or state-action space, effectively pruning areas that the shaping function identifies as suboptimal. However, their analysis assumes that a sufficiently accurate shaping function is provided \emph{a priori}. Our algorithm allows for learned shaping functions as well as separate lower and upper bounds.

We propose \textbf{Envelope Shaping} in which we first perform an offline algorithm on a dataset $\mathcal{D}$ of $K$ offline trajectories that produces the following families of value \emph{envelopes}:
\begin{equation}
  \label{eq:envelopes}
  \crl{\lowv_{h}}_{h=1}^{H}, \quad \crl{\highv_{h}}_{h=1}^{H}, \quad \crl{\lowq_h}_{h=1}^{H}, \quad \crl{\highq_h}_{h=1}^{H}
\end{equation}
where $\highv_{h}$ and $\lowv_{h}$ are state value functions (i.e. functions of $s$), while $\lowq_h$ and $\highq_h$ are state-action value functions (i.e. functions of $(s,a)$) and the following holds for all $h \in [H]$ and $(s,a) \in \mathcal{S}_h \times \mathcal{A}$:
\[
\lowv_{h}(s) {=} \max_{a \in \mathcal{A}} \lowq_h(s,a), \quad \highv_{h}(s) {=} \max_{a \in \mathcal{A}} \highq_h(s,a)
\]

Our main working assumption is the following:

\begin{assumption} \label{assumption:sandwich} There exists $\delta>0$ and envelopes as in \eqref{eq:envelopes} such that defining the event $\mathcal{E}^{\mathrm{Off}}_\delta$ by  for all $h \in [H]$ and $(s,a) \in \mathcal{S}_h \times \mathcal{A}$, 
  \begin{equation}
  \label{eq:v_sandwich}
 \lowq_h(s,a) \leq Q_h^\star(s,a) \leq \highq_h(s,a)
\end{equation}
\begin{equation}
  \label{eq:q_sandwich}
 \lowv_{h}(s) \leq V_h^\star(s) \leq \highv_{h}(s), 
\end{equation}
we have $\mathbb{P}(\mathcal{E}^{\mathrm{Off}}_\delta) \geq 1-\delta$.
\end{assumption}

Our formalism departs from the single $\beta$-sandwich formulation used in \cite{gupta_unpacking_2022} and instead uses a pair of separate bounding functions $(\lowv_{h} , \highv_{h})$. This setting strictly generalizes the $\beta$-model as any $(\widetilde{V}, \beta)$ yields $\lowv_{h} {=} \widetilde{V}_h$ and $\highv_{h} {=} \beta\widetilde{V}_h$. However our formulation is more flexible as it decouples the lower and upper bound allowing for bounds obtained from \emph{separate independent procedures} and possibly even asymmetrically surrounding $V_h^\star$. It also allows for per layer bounds whereas $\beta$ is uniform over layers $h$.

\begin{remark}
    Our shaping functions are $\mathcal{G}_K$-measurable random variables. For our results, it suffices that assumptions \eqref{eq:v_sandwich} and \eqref{eq:q_sandwich} hold with high probability and that their computation is statistically independent of the MDP noise during the online phase, ensuring concentration inequalities apply. At episode $t$, step $h$, we then use an empirical Bernstein bound conditional on the $n$-sample $\crl{S_i'}_{i=1}^n$, i.e., $\highv_{h+1}(S_i') \mid (S_h^t{=}s, A_h^t{=}a)$.
\end{remark}

Our filtration \(\mathcal{J}\) separates the offline phase from the online phase, meaning that the fact that the shaping functions are $\mathcal{G}_K$-measurable guarantees this independence.

\subsection{Envelope-based Algorithm}

The algorithm we propose is a variant of the UCBVI algorithm from \cite{azar_minimax_2017} adapted to use the shaping functions $\highv_{h}$ and $\lowv_{h}$ to scale exploration bonuses and clip value estimates. At episode $t$ of our online phase, estimates will be $\mathcal{F}_{t-1}$-measurable where: 
\[\mathcal{F}_t {=} \sigma \bigl ( \mathcal{G}_{K} \cup \mathcal{J}_t \bigr ) \]

\paragraph{UCBVI \cite{azar_minimax_2017}.} UCBVI is a model-based algorithm that, at each episode $t$ and step $h$, forms empirical transition estimates $\widehat{P}^t_h(\cdot\mid s,a){=}(N_h^t(s,a,\cdot)/N_h^t(s,a))$ 
where $N_h^t(s,a)$ counts past occurrences of $(s,a,h)$. It then computes optimistic value estimates $\widehat{V}_h^t, \widehat{Q}_h^t$ by adding exploration bonuses - via concentration inequalities - to the rewards and running Value Iteration  with $\widehat{P}^t_h$. The exploration policy $\pi^t$ is \emph{greedy} w.r.t. $\widehat{Q}_h^t$.

\paragraph{Value Clipping.}
The clipping step remains unchanged from the modification to \textbf{UCBVI} applied in \cite{gupta_unpacking_2022}. We examine both ways of clipping value estimates: $V$-shaping and $Q$-shaping where the respective updates at episode $t$ and step $h$ are given by
\begin{align*}
\widehat{V}_h^t(s) & \gets \mathrm{clip} \left \lbrace \max_{a\in\mathcal{A}} \widehat{Q}_h^t(s,a),  \highv_h(s) \right \rbrace, \\
\widehat{Q}_h^t(s,a) & \gets \mathrm{clip} \left \lbrace r_h(s,a) + \widehat{P}_t  \widehat{V}_{h+1}^t + b_h^{t}(s,a),  \highq_h(s,a) \right \rbrace.
\end{align*}

Define 
\begin{equation}
D_h(s) \coloneqq \highv_{h}(s) - \lowv_{h}(s)
\end{equation}
\begin{equation}
    M_h(s) \coloneqq \dfrac{1}{2} (\highv_{h}(s) + \lowv_{h}(s))
\end{equation}
\begin{equation}
R_{h}\coloneqq\max_{s'}\highv_{h}(s')-\min_{s'}\lowv_{h}(s')
\end{equation}

\paragraph{Bonus Scaling}
Aside from the offline learning of $(\lowv, \highv)$, this is where the main modification arises. The bonus used at online episode $t$ and step $h$ is defined as:
\begin{equation}
    \bont(s,a) \coloneqq
c_1 \sigma_{h+1}^t(s,a) \sqrt{\frac{ L}{N_h^t(s,a)}} + 
c_2 \frac{R_{h+1} L}{N_h^t(s,a)},
\end{equation}
with
\begin{align*}
\sigma_{h+1}^t(s,a) & \coloneqq \sqrt{\mathrm{Var}_{s'\sim\widehat{P}_t(\cdot|s,a)} (M_{h+1}(s') )}\\ 
& + \dfrac12\sqrt{\mathbb{E}_{s'\sim\widehat{P}_t(\cdot|s,a)} [D_{h+1}(s')^2 ]},
\end{align*}
where $L$ is a logarithmic term, $c_1$ and $c_2$ are constants.
In \textbf{UCBVI}, using the empirical Bernstein bound from  \cite{maurer_empirical_2009}, the bonus must upper bound the right hand term below:
\[ |(\widehat{P}^t_h - P_h^\star, V^\star_{h+1})| \leq \sqrt{\frac{\mathrm{Var}(V^\star_{h+1})L}{N_h^t(s,a)}} + 
c_2 \frac{R_{h+1} L}{N_h^t(s,a)}.\]
In our case we bound the variance of the optimal value function with $\sigma_{h+1}^t$. The intuition of this bound is that as the width of the bounding interval tightens ($D_{h+1} \to 0$) we get $M_{h+1} \to V_{h+1}^\star$ recovering the original bound (without ever having access to $V^\star$).

\begin{algorithm}[h!]
\caption{$Q$-shaping}
\label{alg:q_shaping}
\begin{algorithmic}[1]
\Require Bounds $\lowv, \highv$; confidence level $\delta$
\State Initialize counts $N \gets 0$; $\widehat{V}_{H+1}^{t} \gets 0$;
\For{$t \in  [1,2,\dots, T]$}
    \State Build $\widehat{P}_t$ from $N$
    \For{$h {=} H, \dots, 1$}
        \ForAll{$(s,a)$}
            \State Compute width-based bonus $b_h^{t}(s,a)$
            \State $\widehat{Q}_h^t(s,a) \gets \mathrm{clip} \Big\{ r_h(s,a) {+} \langle \widehat{P}^t_{h,s,a}, \widehat{V}_{h+1}^t \rangle$
            \Statex \hspace{10.0em}${+} \bont(s,a), \highq_h(s,a) \Big\}$
        \EndFor
        \ForAll{$s$}
            \State $\widehat{V}_h^t(s) \gets \max_{a} \widehat{Q}_h^t(s,a)$
        \EndFor
    \EndFor
    \For{$h {=} 1, \dots, H$}
        \State $a_{t,h} \in \arg\max_{a} \widehat{Q}_h^t(s_{t,h}, a)$
        \State Take $a_{t,h}$, observe $(r_{t,h}, s_{t,h+1})$, update $N$
    \EndFor
\EndFor
\end{algorithmic}
\end{algorithm}

\subsection{Shaping Online from Offline data}

We propose a \emph{stand-in} procedure to meet Assumption~\ref{assumption:sandwich}, by learning the envelope functions via standard offline Value Iteration . Unlike standard offline RL, which replaces \textbf{optimism} with \textbf{pessimism} to derive value lower bounds, our goal is not regret minimization but envelope construction for the online phase. We therefore combine optimism and pessimism, assuming \emph{favorable} conditions:

\begin{assumption} \label{assumption:fc} States are reachable from the initial distribution $\rho$, and the behavior policy $\pi^{\mathsf{b}}$ provides full coverage of the MDP, formally: 
\begin{equation}
    \forall (s,h) \in \mathcal{S}\times[H], \mathbb{P}^{\pi^{\mathsf{b}}}(s_h = s) \geq d^{\mathsf b}_{\min} > 0
\end{equation}
\end{assumption}


Given $K$ trajectories in a dataset $\mathcal{D}$, the offline algorithm computes Bernstein-based bonuses $\boff$, ensuring the optimal value functions are bounded with probability at least $1-\delta$, where $\delta$ is a parameter of $\boff$. The complete proof is deferred to the appendix.

\begin{remark}[Full coverage vs concentrability]
Standard offline RL analysis often relies on a weaker single-policy concentrability assumption ($C^\star$). However, such guarantees are generally limited to the pure offline setting or hybrid settings that assume the learner retains access to the offline dataset during the online phase. For instance, \cite{li_2023} employ a three-stage (offline-online-offline) protocol, \cite{pattanaik_2025} allow simultaneous access to offline samples, and \cite{xie_policy_2021} use offline data to solve the initial steps of the horizon (up to $h^{\star}$) and online for the rest. In contrast, our strict two-stage framework calculates envelopes once and discards the raw data to satisfy privacy constraints, passing only the bounds to the online agent. 
Remark~\ref{rem:paireff} further clarifies our rationale for adopting Assumption \ref{assumption:fc}.
\end{remark}

\begin{algorithm}[h!]
\caption{Upper-Lower Offline Reinforcement Learning}
\label{alg:offline}
\begin{algorithmic}[1]
\Require $\mathcal{D}{=}\{(s_1^{(i)}, a_1^{(i)}, r_1^{(i)}, \dots,s_H^{(i)}, a_H^{(i)}, r_H^{(i)})\}_{i=1}^K$ (offline dataset)
\State Randomly split $\mathcal{D}$ into $\{\mathcal{D}_{h}\}_{h=1}^H$ with $m_h=|\mathcal{D}^{(h)}| \ge \left \lfloor K/H \right \rfloor$.
\State Let $N_{h}(s,a)$ and $N_{h}(s,a,s')$ be the visitation count of $(s,a)$ and $(s,a,s')$ at step $h$ within $\mathcal{D}_h$.
\State Set $\olowv_{H+1} {=} \ohighv_{H+1} = 0$
\For{$h {=} H, \dots, 1$}
    \ForAll{$(s,a)$}
        \State Compute bonus $\boff(s,a)$
        \State $
\ohighq_h(s,a){=}r_h(s,a){+}\langle\widehat{P}_{h,s,a},\ohighv_{h+1}\rangle{+}\boff(s,a)$
        \State $\olowq_h(s,a){=}r_h(s,a){+}\langle\widehat{P}_{h,s,a},\olowv_{h+1}\rangle{-}\boff(s,a)$
    \EndFor
    \ForAll{$s$}
        \State $\ohighv_h(s)=\max_{a\in\mathcal{A}}\ohighq_h(s,a)$
        \State $\olowv_h(s)=\max_{a\in\mathcal{A}}\olowq_h(s,a)$
    \EndFor
\EndFor
\end{algorithmic}
\end{algorithm}

\section{Theoretical bounds}\label{sec:bounds}

\subsection{Regret bound for $Q$-shaping}

\begin{figure}[h!]
\centering
\includegraphics[width=\columnwidth]{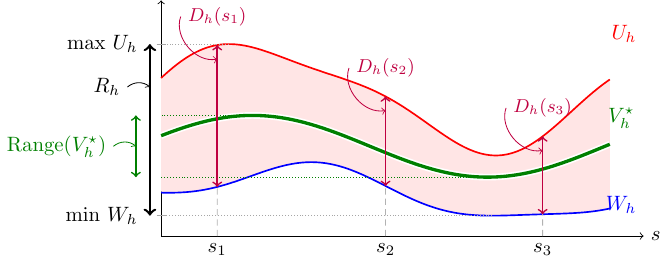}
\caption{Upper and lower envelopes $\highv_{h}$ and $\lowv_{h}$ around $V_h^\star$. The vertical indicators $D_h(s_i)$ show the varying gap width between the envelopes at specific points.}
\label{fig:my_envelopes}
\end{figure}

Our analysis of the online algorithm guided by these offline learned bounds yields a regret decomposition with two novel factors: the maximum width of the bounding interval and the maximum range of the interval, respectively noted as:
\begin{equation}\label{eq:D_R_max}
D^{\max}\coloneq\max_h \sup_s D_h(s) \quad \text{and} \quad R^{\max}\coloneq\max_h R_h.
\end{equation}

As $D_h$ is a width of a point wise envelope $(\lowv, \highv)$, this will depend on the quality of the envelope and can theoretically be very small. On the other hand $R_h$ inherits a lower bound from the MDP as illustrated in Figures \ref{fig:my_envelopes} and \ref{fig:D_H}:
\begin{equation*}
    \mathrm{Range}(V_h^{\star}) \coloneq \sup_{s} V_h^{\star}(s) - \inf_{s} V_h^{\star}(s)
\end{equation*}
This formulation yields several advantages over the $\beta \widetilde{V}$ approach. First, $R^\mathrm{max}_h$ is analogous to $\beta \widetilde{V}^\mathrm{max}_h$ but is strictly smaller, as it leverages the lower bound $\lowv_{h}$ to tighten the effective range. Second, when learning the envelope with our offline algorithm: under suitable offline conditions (e.g, Assumption \ref{assumption:fc}), $D^\mathrm{max}_h(s) \to 0$ as the offline dataset size $K$ grows, causing uncertainty-dependent terms to vanish and further accelerating convergence. This directly links the offline data to the online sample complexity improvements, providing a formal bridge between offline pre-training and online fine-tuning. 

For the $V$-shaping, the analysis of \cite{gupta_unpacking_2022} incorporates pseudo sub-optimality sets ($\PPS$, $\BPS$ and $\PS$ defined in Section~\ref{sec:V_shaping}). The $Q$-shaping analysis does not require these definitions. For readability we focus on the $Q$-shaping version of the algorithm for the clarity of its regret. The $V$-shaping regret bound can be found in Section~\ref{sec:V_shaping}. 
\begin{figure}[H]
\centering
\includegraphics[width=0.65\columnwidth]{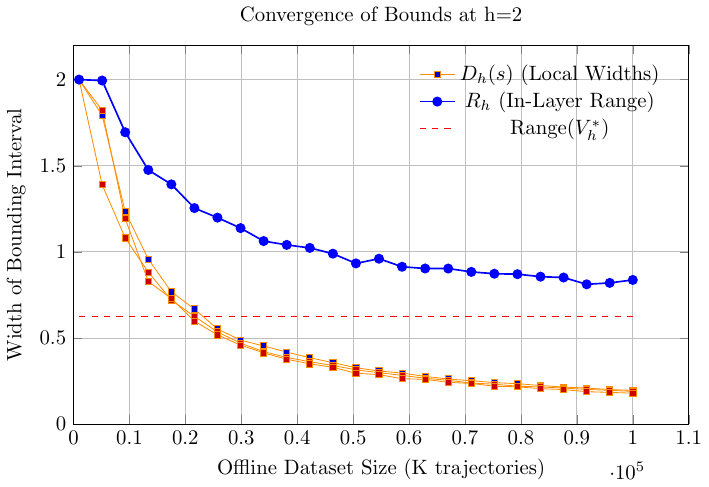}
\caption{Illustration of how $D_h {\to} 0$ and  $R_h{\to} \mathrm{Range}(V^{\star}_h)$ as $K {\to} \infty$. Here there are 3 actions, $H{=}4$ with 3 states per layer. The plot represents $D_2(s)$ for the states in layer $h=2$ with respect to $R_h$ for varying $K$.}
\label{fig:D_H}
\end{figure}
\begin{theorem}
  \label{thm:q_shaping}
Suppose that Assumption~\ref{assumption:sandwich} holds with some constant $\delta>0$. Then, with probability at least $1-4\delta$, the regret of Algorithm~\ref{alg:q_shaping} after $T$ episodes satisfies:
\begin{align*}
\mathrm{Regret}(T) 
\le
R^{\max}\widetilde{\mathcal O}\Big(\sqrt{TH|\mathrm{PairEff}|}+|\mathrm{PairEff}|+\sqrt{T}\Big) \\ +
D^{\max}\widetilde{\mathcal O}\Big(\sqrt{TH|\mathrm{PairEff}|}+|\mathcal{S}|H|\mathrm{PairEff}| +|\mathcal{S}|H\sqrt{T}\Big)
\end{align*}
where $\widetilde{\mathcal O}$ hides polylog factors in 
$T,|\mathcal{S}|,|\mathcal{A}|,H,$ and $1/\delta$, where $R^{\max}$ and $D^{\max}$ are defined by \eqref{eq:D_R_max} respectively, and
\begin{equation}\label{eq:paireff}
    \mathrm{PairEff} \coloneq \left \lbrace (s,a) \in \mathcal{S} \times \mathcal{A} ~ | ~ \highq_{h(s)}(s,a) \geq V^{\star}_{h(s)}(s) \right \rbrace
\end{equation} 
\end{theorem}

Using $\ohighq_h$ and $\olowq_h$ produced by Algorithm \ref{alg:offline} as $\highq_h$ and $\lowq_h$ in Algorithm~\ref{alg:q_shaping} yields the following regret bound:

\begin{theorem}
  \label{thm:offline_q_shaping}
  With probability at least $1-4\delta$, the regret  after $T$ episodes of running Algorithm~\ref{alg:q_shaping} using the output of Algorithm~\ref{alg:offline} as an envelope satisfies:
\begin{flalign*}
& \mathrm{Regret}(T) 
\le
\widetilde{\mathcal O} \Bigg(R^{\max}\Big(\sqrt{TH|\mathrm{PairEff}|}+|\mathrm{PairEff}|+\sqrt{T}\Big) && \\ 
& {+} \sqrt{\dfrac{H^{5}}{K d^{\mathsf b}_{\min}}}\Big(\sqrt{TH|\mathrm{PairEff}|}+|\mathcal{S}|H|\mathrm{PairEff}| +|\mathcal{S}|H\sqrt{T}\Big) && \\
& {+} \dfrac{H^{3}}{(K d^{\mathsf b}_{\min})}\Big(\sqrt{TH|\mathrm{PairEff}|}+|\mathcal{S}|H|\mathrm{PairEff}| +|\mathcal{S}|H\sqrt{T}\Big) \Bigg).&&
\end{flalign*}
\end{theorem}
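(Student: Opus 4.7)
The strategy is to apply Theorem~\ref{thm:q_shaping} with the specific envelope $(\olowq_h, \ohighq_h)$ produced by Algorithm~\ref{alg:offline}, and then express the two envelope-dependent quantities $R^{\max}$ and $D^{\max}$ in terms of the offline sample size $K$ and the coverage constant $d^{\mathsf b}_{\min}$. The quantity $R^{\max}$ remains of order $H$ (since $V^\star_h\in[0,H]$ and the envelopes are valid), so the real work is to show $D^{\max}=\widetilde{\mathcal O}\bigl(\sqrt{H^5/(Kd^{\mathsf b}_{\min})}+H^3/(Kd^{\mathsf b}_{\min})\bigr)$ with high probability.

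\paragraph{Step 1: validity of the envelope.} First I would invoke the analysis of Algorithm~\ref{alg:offline} stated at the end of Section~\ref{sec:method} (full proof in the appendix): with probability at least $1-\delta$, the Bernstein-type offline bonus $\boff$ is large enough that backwards induction on $h$ yields $\olowq_h\le Q^\star_h\le\ohighq_h$ and $\olowv_h\le V^\star_h\le\ohighv_h$ for every $h$. This is exactly event $\mathcal{E}^{\mathrm{Off}}_\delta$, so Assumption~\ref{assumption:sandwich} is satisfied for the chosen envelope. In particular $R^{\max}\le H$ deterministically on this event.

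\paragraph{Step 2: recursive control of $D^{\max}$.} From $\ohighv_h(s)-\olowv_h(s)\le\max_a\bigl(\ohighq_h(s,a)-\olowq_h(s,a)\bigr)$ (the standard $|\max f-\max g|\le\max|f-g|$ argument), the updates in Algorithm~\ref{alg:offline} give
\begin{equation*}
D_h(s)\;\le\;\max_{a}\Bigl(\langle\widehat P_{h,s,a},D_{h+1}\rangle+2\boffp{h}(s,a)\Bigr),\qquad D_{H+1}\equiv 0.
\end{equation*}
Unrolling this recursion over the $H$ layers yields $D^{\max}\le 2H\cdot\max_{h,s,a}\boffp{h}(s,a)$, so it suffices to bound $\boff$ uniformly. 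The offline Bernstein bonus has the form $\boffp{h}(s,a)\lesssim\sqrt{H^2 L/N_h(s,a)}+HL/N_h(s,a)$. Under the coverage assumption $\mathbb{P}^{\pi^{\mathsf b}}(s_h{=}s)\ge d^{\mathsf b}_{\min}$, a Chernoff/multiplicative-Bernstein bound applied to the $\lfloor K/H\rfloor$ trajectories composing each split $\mathcal{D}_h$ shows that simultaneously for all $(s,a)$ reached by $\pi^{\mathsf b}$, $N_h(s,a)\gtrsim Kd^{\mathsf b}_{\min}/H$ with probability at least $1-\delta$ (absorbing polylog factors into $L$). Plugging this into the bonus and multiplying by $H$ from the recursion gives
\begin{equation*}
D^{\max}\;=\;\widetilde{\mathcal O}\!\left(\sqrt{\frac{H^5}{Kd^{\mathsf b}_{\min}}}\;+\;\frac{H^3}{Kd^{\mathsf b}_{\min}}\right).
\end{equation*}

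\paragraph{Step 3: assembly.} Applying Theorem~\ref{thm:q_shaping} on the intersection of $\mathcal{E}^{\mathrm{Off}}_\delta$ and the coverage event, and substituting the bound on $D^{\max}$ together with $R^{\max}\le H$, I obtain exactly the three-term decomposition stated in Theorem~\ref{thm:offline_q_shaping}. A union bound over (i) the three concentration events inside Theorem~\ref{thm:q_shaping} (total $3\delta$) plus (ii) the offline validity and coverage events gives the final confidence level $1-4\delta$ after folding the coverage event into $\mathcal{E}^{\mathrm{Off}}_\delta$ or by redefining $\delta$ up to a constant.

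\paragraph{Main obstacle.} The delicate point is the uniform lower bound $N_h(s,a)\gtrsim Kd^{\mathsf b}_{\min}/H$: the working hypothesis in Section~\ref{sec:method} is stated only on the state marginal $\mathbb{P}^{\pi^{\mathsf b}}(s_h{=}s)$, so the argument must either restrict the outer maxima to actions charged by $\pi^{\mathsf b}$ (the only ones appearing in the offline bonus analysis and the only ones for which $\boff$ needs to be finite) or upgrade the coverage assumption to $(s,a)$ pairs. Secondarily, the recursion for $D_h$ uses the empirical $\widehat P_h$ rather than $P_h$; controlling it via $\widehat P_h$ is actually convenient because $\widehat P_h$ is the quantity that appears in the definitions of $\ohighq_h$ and $\olowq_h$, so no additional concentration step on transitions is needed at this stage — the transition concentration is already baked into $\boff$ via the Bernstein construction.
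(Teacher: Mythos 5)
Your proposal follows essentially the same route as the paper: envelope validity via the offline Bernstein bonuses (backward induction), a telescoping width recursion bounding $D^{\max}$ by accumulated offline bonuses, a Chernoff lower bound on the offline counts ($N_h \gtrsim K d^{\mathsf b}_{\min}/H$, the paper's Lemma~\ref{lem:high_prob_Nmin}), and substitution of the resulting $D^{\max}=\widetilde{\mathcal O}\bigl(\sqrt{H^5/(Kd^{\mathsf b}_{\min})}+H^3/(Kd^{\mathsf b}_{\min})\bigr)$ into Theorem~\ref{thm:q_shaping} — this is exactly the paper's Proposition~\ref{prop:offline_width} combined with its one-line proof of the theorem. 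The two caveats you flag are real but shared by the paper: its Lemma~\ref{lem:high_prob_Nmin} silently requires action-level coverage $d^{\mathsf b}_h(s,\overline{\pi}_h(s))>0$ even though the main text only assumes state coverage, and adding the count event $\mathcal{E}^{\mathrm{min}}_\delta$ formally pushes the confidence to $1-5\delta$, a bookkeeping point the paper also leaves implicit.
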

\begin{remark}[$\mathrm{PairEff}$ and Assumption \ref{assumption:fc}]
\label{rem:paireff}
   Note that under the weaker partial coverage assumption (with concentrability coefficient $C^\star$), tight envelopes would be guaranteed only along the optimal policy $\pi^\star$. \textbf{This would leave suboptimal actions effectively uncovered}, resulting in \emph{vacuous upper bounds} $\highq_h(s,a) \approx H$ in \eqref{eq:paireff}. Since $H \ge V^\star_h(s)$ and allowing for these pairs to remain in $\mathrm{PairEff}$. In contrast, Assumption \ref{assumption:fc} tightens $\highq$ uniformly, allowing the algorithm to \textbf{actively identify and discard suboptimal actions}, thereby shrinking $\mathrm{PairEff}$ and accelerating online exploration. A consequence of this is that poor behavior policies (e.g., far from $\pi^{\star}$) such as uniform policies still provide provable acceleration.
\end{remark}

\subsection{Proof Sketch}

The proof sketch follows \cite{gupta_unpacking_2022}, but introduces $R_h$ and $D_h$ when adjusting bonus scaling, decomposing regret, and bounding the online bonus via a martingale argument. Finally, $D^{\max}$ is controlled by the offline dataset.

Our main \textbf{technical novelty}, and the key point where our proof departs from prior shaping analyses, is a \textbf{conditional version of empirical Bernstein} inequality that remains valid when the bounds necessary to apply Bernstein \cite{maurer_empirical_2009} are random (learned offline), but measurable with respect to the offline sigma-algebra and therefore independent of the online transitions. This result may be of independent interest. In our setting, it is precisely what allows us to rigorously control the online bonus on the event where the offline envelopes are valid, since these random bounds coincide exactly with the learned envelopes. At first sight, one cannot directly plug these into the standard inequality: the high-probability event “the offline envelopes are valid” is not independent of the random variables that appear in the online martingale differences, 
so conditioning on it naively could invalidate the concentration step. We resolve this by making the measurability structure explicit, this technical result is given in Section~\ref{sec:cond_bernstein} of the Appendix.

\paragraph{Bonus Scaling.} Contrary to \cite{gupta_unpacking_2022}, our concentration argument relies on classical variance aware results from \cite{maurer_empirical_2009} and does not require anytime formulations. In practice, we apply our conditional empirical Bernstein inequality to:
 \[Z_i = \frac{V_{h+1}^{\star}(S_i) - \min_{s'} \lowv_{h+1}(s')}{R_{h+1}} \in [0,1]\]
 which is where $R_{h+1}$ is introduced, later showing up as $R^{\max}$ in Theorem~\ref{thm:q_shaping}. The empirical variance at step $t$ of $V^{\star}_{h+1}$ can then be bounded by $\sigma^t_{h+1}$ which contains both $\mathrm{Var}_{s'\sim\widehat{P}_t(\cdot|s,a)} (M_{h+1}(s') )$ and $\mathbb{E}_{s'\sim\widehat{P}_t(\cdot|s,a)} [D_{h+1}(s')^2 ]$.
 
\paragraph{Regret Decomposition.} Standard techniques (see \cite{agarwal2019reinforcement}, Lemma 7.10) for bounding the regret derive expressions such as
\begin{align*}
V^\star_{1}(s)-V^{\pi^t}_{1}(s)
 {\le} C \cdot
\sum_{h=1}^{H}\mathbb{E}^{\pi^t} \left[2\bont(s_h,a_h) + \nu_h^t(s_h,a_h) \right],
\end{align*}
by combining the following recursion
\begin{align*}
\Delta_h(s_h) &\le 2\bont(s_h,a_h) + \xi_h^{t}(s_h,a_h) \\
&+ \mathbb{E} \left[\Delta_{h+1}(s_{h+1})\mid s_h,a_h\right]
\end{align*}
where $\xi_h^{t}(s,a){=}\left \langle \widehat{P}_{h,s,a}^{t} -P_{h,s,a}^{\star}, \widehat{V}_{h+1}^t - V_{h+1}^\star \right \rangle$ and $\Delta_h(s_h){=} \widehat{V}^t_h(s_h){-} V^{\pi_t}_h(s_h)$ together with the inequality 
\[
\xi_h^{t}(s,a) \le \frac{\mathbb{E}_{s' \sim P_{h,s,a}^{\star}} \left[ \Delta_{h+1}(s') \right]}{H} + \nu_h^{t}(s,a).
\]
This inequality holds when the expression of $\nu_h^{t}$ is well chosen and specifically depends on a uniform bound on  $(\widehat{V}_{h+1}^t - V_{h+1}^\star)$. Using 
\[ 
\widehat{V}_{h+1}^t(s') - V_{h+1}^\star(s') \le \highv_{h+1}(s') - \lowv_{h+1}(s') \le D_{h+1}^{\max},
\]
allows for explicit integration of $D_{h+1}^{\max}$ in the regret.
\paragraph{Martingale Argument} Going from an expectation to a high probability bound is classically done by applying martingale arguments \cite{hoeffding_probability_1963, azuma_weighted_1967}  requiring a uniform bound on specific martingale difference sequences. Using
\begin{align*}
\bont(s,a){\le} 
c_1\left(\tfrac12R_{h+1}{+} \tfrac12 D^{\max}_{h+1}\right)\sqrt{\frac{L}{N_h^t(s,a)}}{+} 
c_2\frac{R_{h+1}L}{N_h^t(s,a)},
\end{align*}
we can maintain the presence of $D^{\max}$ and $R_h$ in the high probability bound.

\paragraph{Offline to Online} The final step bounds $D^{\max}$ in terms of the dataset size $K$ and the minimum coverage $d^{\mathsf b}_{\min}$ of $\pi^{\mathsf b}$. For any $h$, $D_h$ is controlled by the expected future bonuses. Uniformly lower bounding $N_h^t$ with a Chernoff bound then yields a uniform upper bound on these bonuses, and thus on $D_h^{\max}$.



\section{$V$-shaping}\label{sec:V_shaping}
We include here the corresponding result for applying method to the $V$-shaping algorithm.

\begin{algorithm}[h!] 
\caption{$V$-shaping}
\label{alg:v_shaping}
\begin{algorithmic}[1]
\Require Bounds $\lowv, \highv$; confidence level $\delta$
\State Initialize counts $N \gets 0$; $\widehat{V}_{H+1}^{t} \gets 0$
\For{$t = 1,2,\dots$}
    \State Build $\widehat{P}_t$ from $N$
    \For{$h = H, \dots, 1$}
        \ForAll{$(s,a)$}
            \State Compute width-based bonus $b_h^{t}(s,a)$
            \State $\widehat{Q}_h^t(s,a) {\gets} r_h(s,a) {+} \langle \widehat{P}^t_{h,s,a},\widehat{V}_{h+1}^t \rangle {+} b_h^{t}(s,a)$
        \EndFor
        \ForAll{$s$}
            \State $\widehat{V}_h^t(s) \gets \min\left\{ \max_{a} \widehat{Q}_h^t(s,a),  \highv_{h}(s) \right\}$
        \EndFor
    \EndFor
    \For{$h = 1, \dots, H$}
        \State $a_{t,h} \in \arg\max_{a} \widehat{Q}_h^t(s_{t,h}, a)$
        \State Take $a_{t,h}$, observe $(r_{t,h}, s_{t,h+1})$, update $N$
    \EndFor
\EndFor
\end{algorithmic}
\end{algorithm}
The regret is more intricate and, following \cite{gupta_unpacking_2022}, can be interpreted by defining $Q_h^{\highv}(s,a)=r_h(s,a)+\mathbb{E}_{s'}[\highv_{h+1}(s')]$ and, for $\Delta>0$, the sets:
\begin{align*}
\PS
&=\Big\{(s,a,h): Q_h^{\highv}(s,a)\le V_h^\star(s)-\Delta\Big\},\\
\PPS 
&= \left\{\, s \in \mathcal{S} \;\middle|\; 
\substack{\text{all feasible paths from } \rho \text{ to } s \\ 
\text{intersect } \PS} \right\},\\
\BPS
&= \left \lbrace (s,a,h) \in \PS ~ \vert ~ s \in \PPS  \right \rbrace.
\end{align*}
Our contribution is the presence of $D^{\max}$ and $R^{\max}$ in the bound, and that $\highv_{h}$ and $\lowv_{h}$ stem from our offline protocol.
The broad strokes of the proof are that any pair $(s,a)$ in $\PS$ is eliminated after $O(\mathrm{poly}(R_h/\Delta^2))$
visits. Furthermore, states that are reachable only through such pairs constitute $\PPS$ and require only
limited exploration ($\BPS$). Our adaptation preserves this structure, replacing $(\widetilde{V},\beta\widetilde{V})$ with $(\lowv,\highv)$.

\begin{theorem}[$V$-shaping Regret]
    \label{thm:v_shaping_regret}
    For all $\Delta > 0$, the regret of Algorithm \ref{alg:v_shaping} can be bounded with high probability by:
    \begin{align*}
    \mathrm{Regret}(T) & \leq \widetilde O\Big(
(R^{\max}{+}D^{\max})\sqrt{T H |\mathcal{S}\setminus \PPS| |\mathcal{A}|} \\
&{+}\sqrt{T}(R^{\max}{+}|\mathcal{S}| H D^{\max}) \\
&{+} R^{\max} |\mathcal{S}\setminus \PPS| |\mathcal{A}|{+} D^{\max} |\mathcal{S}\setminus \PPS|^2 |\mathcal{A}| H\\
&{+} \min\{\Theta_{1}(\Delta),\Theta_{2}(\Delta)\}
\Big)
\end{align*}
where
\begin{align*}
  \Theta_{1}(\Delta)& =\dfrac{R^{\max}+D^{\max}}{\Delta}\sqrt{|\BPS||\mathcal{S}| |\mathcal{A}|H} \\
& + D^{\max}H|\mathcal{A}||\mathcal{S}^2| + R^{\max}|\mathcal{S}||\mathcal{A|},
\end{align*}
\[
\Theta_{2}(\Delta)=\dfrac{(1 + R^{\max})^2}{\Delta^2}|\BPS|(R^{\max}+|\mathcal{S}| H D^{\max})H.
\]
\end{theorem}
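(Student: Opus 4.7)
The overall plan is to mirror the $Q$-shaping analysis of Theorem \ref{thm:q_shaping} but to replace the per-pair clipping by the $V$-level clipping of Algorithm \ref{alg:v_shaping} and to exploit the pseudo-suboptimality structure introduced by \cite{gupta_unpacking_2022}. First, under Assumption \ref{assumption:sandwich}, I would verify that the clipped estimates still enjoy two-sided control: optimism $\widehat{V}_h^t \ge V_h^\star$ follows because $\max_a \widehat{Q}_h^t \ge V_h^\star$ and $\highv_h \ge V_h^\star$, so the $\min\{\cdot,\highv_h\}$ clip preserves the upper bound on $V_h^\star$; and the uniform slack $\widehat{V}_h^t - V_h^\star \le \highv_h - V_h^\star \le D_h^{\max}$ drops out of the clip. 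The bonus-scaling and empirical Bernstein arguments are then inherited verbatim from the $Q$-shaping sketch: applying \cite{maurer_empirical_2009} to the renormalised target $(V^\star_{h+1} - \min_{s'}\lowv_{h+1}(s'))/R_{h+1} \in [0,1]$ produces the same $\bont$ in which $R^{\max}$ and $D^{\max}$ enter only through the envelope width and midpoint.

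The second, $V$-shaping specific, ingredient is the pseudo-suboptimality elimination argument. The key observation is that for any $(s,a,h)\in\PS$ the $\highv$-clip propagates forward one step to give $\widehat{Q}_h^t(s,a) \le r_h(s,a) + \langle \widehat{P}_h^t,\highv_{h+1}\rangle + \bont(s,a) \le Q_h^{\highv}(s,a) + \bont(s,a) + \text{concentration error}$, and by definition $Q_h^{\highv}(s,a) \le V_h^\star(s) - \Delta$. A visit-counting argument then shows that each $\PS$-pair can be selected by the greedy rule at most $\widetilde{\mathcal O}((R^{\max}+D^{\max})^2/\Delta^2)$ times before its bonus shrinks below $\Delta$ and the pair is permanently dominated by an action consistent with $\highv$. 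From this one obtains the two alternative controls on the $\BPS$-exploration cost: a variance-weighted pigeonhole over $|\BPS|$ pairs gives $\Theta_2(\Delta)$, while a direct summation in $1/\Delta$ (preferable when $\Delta$ is not too small) gives $\Theta_1(\Delta)$. States reachable only through $\PS$-pairs, namely those in $\PPS$, incur regret only during this elimination phase, so the main UCBVI-style term is restricted to $\mathcal{S}\setminus\PPS$ and produces the $\sqrt{TH|\mathcal{S}\setminus\PPS||\mathcal{A}|}$ and $|\mathcal{S}\setminus\PPS|^2|\mathcal{A}|H$ factors.

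The remaining steps are a routine adaptation of the $Q$-shaping template: write $\Delta_h(s)=\widehat{V}_h^t(s)-V_h^{\pi^t}(s)$, iterate the recursion $\Delta_h \le 2\bont+\xi_h^t+\mathbb{E}[\Delta_{h+1}]$ using the uniform slack $\|\widehat{V}_{h+1}^t-V_{h+1}^\star\|_\infty \le D_{h+1}^{\max}$ to control $\xi_h^t$, sum over episodes and layers, and upgrade to high probability via the same Azuma--Hoeffding argument used for Theorem \ref{thm:q_shaping}, relying on the uniform bonus bound $\bont \le c_1(\tfrac12 R_{h+1}+\tfrac12 D^{\max}_{h+1})\sqrt{L/N_h^t(s,a)}+c_2 R_{h+1} L/N_h^t(s,a)$. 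The step I expect to be the main obstacle is verifying that $V$-clipping (rather than $Q$-clipping) still enforces elimination of $\PS$-pairs under the \emph{decoupled} envelope $(\lowv,\highv)$: one must track carefully how the $\Delta$-gap in $Q_h^{\highv}$ survives one step of value iteration under the $\min\{\cdot,\highv_h\}$ clip up to additive errors of order $D^{\max}$, which is precisely what produces the $D^{\max}|\mathcal{S}|^2|\mathcal{A}|H$ correction inside $\Theta_1$ and the extra $|\mathcal{S}|H$ factor inside $\Theta_2$ relative to the $\beta$-sandwich bound of \cite{gupta_unpacking_2022}.
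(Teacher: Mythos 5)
Your proposal follows essentially the same route as the paper's proof: optimism and the uniform slack $D^{\max}$ survive the $\min\{\cdot,\highv_h\}$ clip (the paper's Total Ordering lemma, which also gives $\widehat{Q}_h^t \le Q_h^{\highv}+2\bont$), each $\PS$-pair is eliminated after $\widetilde{\mathcal O}(\mathrm{poly}(R_{h+1})L/\Delta^2)$ visits, and the regret sum is split over $\mathcal{U}=(\mathcal{S}\times\mathcal{A})\setminus(\PPS\times\mathcal{A}\cup\PS)$ versus its complement, with the two pigeonhole routes (Cauchy--Schwarz on $\sum N^T \le H|\BPS|n(\Delta)$ versus counting trajectories intersecting $\mathcal{U}^{C}$) yielding $\Theta_1$ and $\Theta_2$ exactly as in the paper. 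The one small imprecision is your per-pair visit cap $\widetilde{\mathcal O}((R^{\max}+D^{\max})^2/\Delta^2)$: the lower-order bonus term only gives a $1/\Delta$ condition of the form $n \gtrsim R_{h+1}L/\Delta$, and the paper converts it to a $1/\Delta^2$ form using $\Delta \le 1+R_{h+1}$, which is why the cap and $\Theta_2$ carry the factor $(1+R^{\max})^2$ rather than $(R^{\max}+D^{\max})^2$ (the latter could fail to dominate when the envelope range is small).
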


Substituting $\ohighv_h,\olowv_h$ from Algorithm~\ref{alg:offline} into Algorithm~\ref{alg:q_shaping} gives a bound depending on $K$ and $d^{\mathrm b}_{\min}$, analogous to the $Q$-shaping case, deferred to the appendix.

\section{Experiments}\label{sec:experiments}

\paragraph{Effect of $K$.} We compare our $Q$-shaping versions with learned value envelopes offline using $K$ trajectories against \textbf{Bernstein UCBVI}, the baseline algorithm without any offline shaping, serving as a reference for performance. Additionally we compare against a version of \textbf{Bernstein UCBVI} in which offline transitions have been directly added to the initial counts state-action counts, we name this method \textbf{Count-Initialized UCBVI}.

\begin{figure}[h!]
\centering
\includegraphics[width=0.70\linewidth]{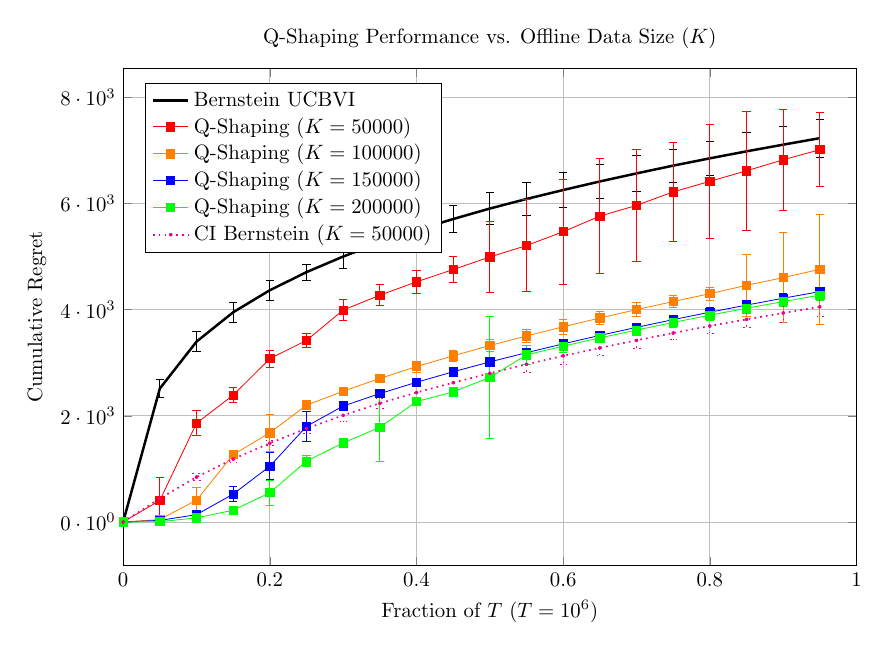}
\caption{$Q$-shaping with varying $K$. Layered MDP with $H {=} 10$ and $30$ states, with $3$ actions per state. Average over $10$ and std-bars over seeds.}
\label{fig:regret}
\end{figure}

Figure~\ref{fig:regret} shows us that as $K$ increases cumulative regret decreases, yielding significant gains over Bernstein UCBVI. Count-Initialized UCBVI is however more performant with respect to offline budget ($K$), though it necessitates knowledge of the offline dataset which is not the case of our method which respects privacy and is nevertheless competitive. As $K$ increases, cumulative regret decreases largely due to the shrinking $D^{\max}$ terms. While $R^{\max}$ also shrinks, it is bounded below by $\mathrm{Range}(V^\star)$. To study this effect, we parametrize the MDP to control $\mathrm{Range}(V^\star)$:

\paragraph{Influence of $R^{\max}$.}
To assess the effect of $R^{\max}$ on regret, we run experiments on layered tabular MDPs, whose structure constrains $\mathrm{Range}(V^\star)$ and thus $R^{\max}$. Specifically, we set $r_h(s,a) = 0$ for all steps $h < H$, concentrating the reward signal at the final transition. The terminal rewards $\{r_{H}(s,a)\}_{(s,a){\in} \mathcal{S}_{H}\times\mathcal{A}}$ are drawn from a range $[r_1, r_2] {\subseteq} [0,1]$. By backward induction, we can show $V_h^\star(s){\in}[r_1,r_2]$ for all $h,s$ as the Bellman optimality equation is $V_h^\star {=}\max_a (r_h + \langle P^{\star}_h,  V_{h+1}^\star\rangle)$ and:\\
$\bullet$ For $h=H$, $V_{H+1}^\star \equiv 0$, so $V_H^\star{=} \max_a r_H \in [r_1,r_2]$;
\vspace{4pt} \\
$\bullet$ For $h{<}H$, $r_h{=} 0$. If $V_{h+1}^\star{\in}[r_1,r_2]$, then its expectation $\langle P^{\star}_h,  V_{h+1}^\star\rangle$ is also in $[r_1,r_2]$, and so is $V_h^\star {=} \max_a \langle P^{\star}_h,  V_{h+1}^\star\rangle$.

Because the scale of the regret is dependent on the MDP instance, we normalize performance by measuring the \textbf{Relative Regret Improvement}, defined as $(\text{Regret}_{\text{Baseline}} - \text{Regret}_{\text{Algo}}) / \text{Regret}_{\text{Baseline}}$.

To compare our method to the $\beta \widetilde{V}$ shaping from \cite{gupta_unpacking_2022}, we experiment with shaping using only the upper bound, meaning setting $\lowv_{h}$ to $0$. This means replacing 
$\sigma_{h+1}^t$ with
\begin{align*}
\widetilde{\sigma}_{h+1}^t(s,a) 
&\coloneqq \dfrac{1}{2}\sqrt{\mathrm{Var}_{s'\sim\widehat{P}_t(\cdot|s,a)} (\highv_{h+1}(s') )}\\
&+ \dfrac12\sqrt{\mathbb{E}_{s'\sim\widehat{P}_t(\cdot|s,a)} [\highv_{h+1}(s')^2 ]}
\end{align*}
and replacing $R_{h+1}$ with $\max_{s'}\highv_{h+1}(s')$. This  is closely related to the way \cite{gupta_unpacking_2022} shape their bonus (using $\highv_{h}{=} \beta\widetilde{V}_h$). Since $\widetilde{\sigma}_{h+1}^t(s,a) {\leq} \sqrt{\mathbb{E}_{s'\sim\widehat{P}_t}[\highv_{h+1}(s')^2]}$, if the algorithm outperforms $\lowv_h=0$ then it outperforms \cite{gupta_unpacking_2022}. We call this \textbf{Upper-Bonus} Shaping. Our main novelty lies in computing envelopes and shaping the bonus with them, which we compare to this method without clipping i.e $V$-shaping refered to as \textbf{Full-Bonus}. As a baseline algorithm, we use Hoeffding UCBVI to provide a fair comparison against \cite{gupta_unpacking_2022} as their method is only competitive with respect to Hoeffding UCBVI.


 Figure~\ref{fig:sliding} depicts two setups that vary the range intervals $[r_1,r_2]$.  They both plot as a function of $x$ the Relative Regret Improvement after $T$ episodes with ranges:  $[1-x,1]$ for  \textbf{Expanding Range} and  $[x, x + w]$ for \textbf{Sliding Range} where $w {=} 0.1$ is the width of the range. Here, the MDP has $H{=}10$ and $20$ states per layer with $3$ actions. We use $K{=}6000$ for the offline phase and $T {=} 10^6$. The algorithms are run with $10$ seeds and averaged. These experiments are meant to accentuate the settings in which our method should have advantages over the prior Upper Bonus Shaping version:

\begin{figure}[H]
    \centering
\includegraphics[width=\columnwidth]{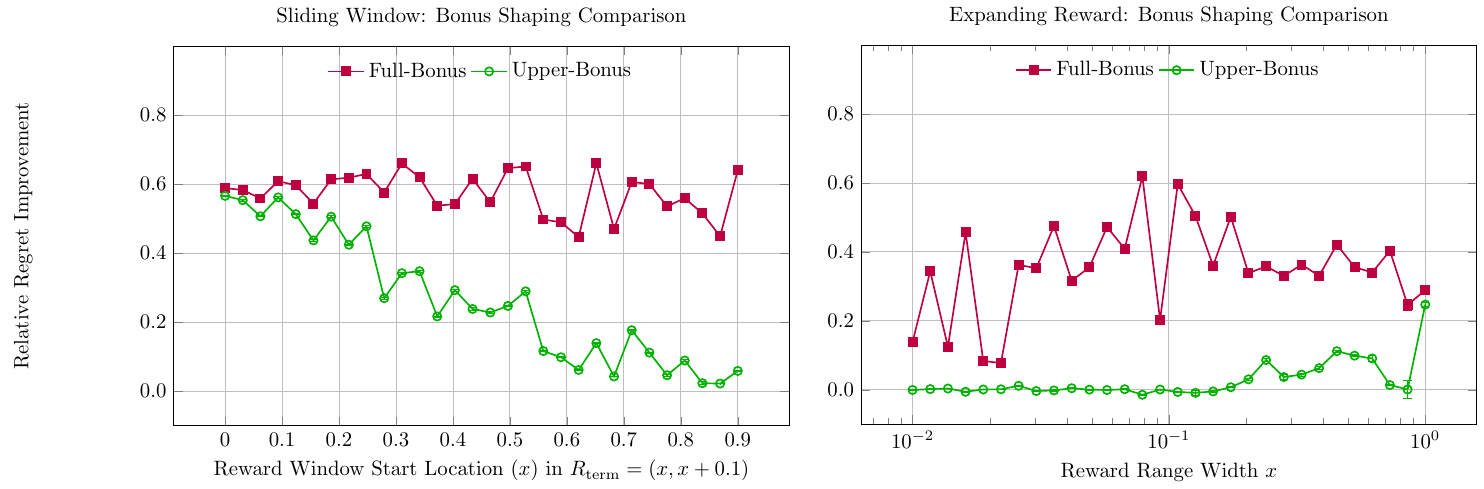}
\caption{Sliding Range (left) and Expanding Range (right)}
\label{fig:sliding}
\end{figure}



\begin{itemize}
    \item \textbf{Expanding Range:} When $x$ is low, $R^{\max}$ is likely to be low and the lower bound $\lowv$ can be very informative (being far from 0). In this setting \textbf{Upper Bonus Shaping} struggles to make any improvements over UCBVI as it only uses the upper bound on the value which is likely to be very close to $H$, bringing little information. When $x$ grows, the upper bound becomes informative. Over the entire range of $x$ our method succeeds in gaining an advantage.
    \item \textbf{Sliding Range:} When $x$ is close to 0, the lower bound in uninformative and the more $x$ slides the window to the right, the more the lower and upper bound bring information. Here again our method gains the upper hand.
\end{itemize}

\section{Discussion and conclusion}
\label{sec:discussion}


Our work develops a framework for integrating upper and lower bounds of the optimal value function into reinforcement learning algorithms. In contrast to prior work, our framework permits the confidence bounds to be random. This is made
possible by a careful analysis of the conditions under which the key concentration results
underlying classical online regret guarantees remain valid. A natural extension would be to learn these
envelopes in related but distinct MDPs, thereby entering a \emph{transfer learning} regime. In that
setting, one would expect guarantees similar to ours, augmented by a notion of distance between 
source and target MDPs. Finally, we modify only the bonus function while retaining the clipping
mechanisms of prior work. However, more refined clipping rules incorporating both lower and upper bounds (for instance, eliminating actions whenever $\underline{Q}_h(s,a) \le \min_{a'}\overline{Q}_h(s,a') - \Delta$) could further reduce the set of state–action pairs requiring online
exploration.



\bibliography{Papier}

\appendix
\onecolumn

\part*{Appendix}                          
\addcontentsline{toc}{part}{Appendix}     
\etocsetnexttocdepth{section}             
\localtableofcontents                     

%
%

\section*{Notation}
\label{sec:notation}

This page provides a summary of the notation used throughout the appendix.

\begin{longtable}{@{}p{0.3\linewidth}p{0.65\linewidth}@{}}
\toprule
\textbf{Symbol} & \textbf{Description} \\
\midrule
\endfirsthead

\multicolumn{2}{c}%
{{\bfseries \tablename\ \thetable{} -- continued from previous page}} \\
\toprule
\textbf{Symbol} & \textbf{Description} \\
\midrule
\endhead

\bottomrule
\endfoot

\multicolumn{2}{l}{\textbf{General Mathematical Notation}} \\[2pt]
$[n]$ & The set $\{1, 2, \dots, n\}$. \\
$\mathbf{1}(\cdot)$ & The indicator function. \\
$\Delta(\mathcal{X})$ & The set of probability distributions over a set $\mathcal{X}$. \\
$\langle P, f \rangle$ & Expectation of a function $f$ w.r.t. a distribution $P$. Formally, $\langle P(\cdot|s,a), f \rangle \coloneqq \mathbb{E}_{s' \sim P(\cdot|s,a)}[f(s')]$. \\

\multicolumn{2}{l}{\textbf{Markov Decision Process (MDP) Components}} \\[2pt]
$\mathcal{S}, \mathcal{A}, H, \rho$ & State space, action space, horizon, and initial state distribution. \\
$\mathcal{S}_h$ & The set of states at step (layer) $h \in [H]$. \\
$P_h^{\star}(\cdot|s,a)$ & The true (unknown) transition probability distribution at step $h$. \\
$r_h(s,a)$ & The deterministic (known) reward function at step $h$. \\
$\pi$ & A policy, defined as a sequence of functions $\{\pi_h\}_{h=1}^H$. \\
$V_h^\pi, Q_h^\pi$ & The value and action-value functions for a policy $\pi$. \\
$V_h^\star, Q_h^\star$ & The optimal value and action-value functions. \\

\multicolumn{2}{l}{\textbf{Learning Phases and Data Structures}} \\[2pt]
$K$ & Total number of trajectories in the offline dataset. \\
$T$ & Total number of episodes in the online learning phase. \\
$\mathcal{D}$ & The offline dataset, containing $K$ trajectories. \\
$\mathcal{D}^{(h)}$ & The $h$-th disjoint subset of trajectories from $\mathcal{D}$. \\
$\tau^{(k)}$ & The $k$-th trajectory from the offline dataset $\mathcal{D}$. \\
$\tau_t$ & The $t$-th trajectory generated during the online phase. \\
$\mathcal{G}_K$ & The $\sigma$-algebra generated by the offline dataset $\mathcal{D}$. \\
$\mathcal{J}_t$ & The filtration generated by the first $t$ online trajectories. \\
$\mathcal{F}_t$ & The combined filtration $\sigma(\mathcal{G}_K \cup \mathcal{J}_t)$. \\

\multicolumn{2}{l}{\textbf{Empirical Models and Visitation Counts}} \\[2pt]
$N_h^t(s,a)$ & \textbf{Online} visitation count of $(s,a,h)$ before online episode $t$. \\
$N_h^{(h)}(s,a)$ & \textbf{Offline} visitation count of $(s,a,h)$ in the dataset split $\mathcal{D}^{(h)}$. \\
$\widehat{P}_h^t(\cdot|s,a)$ & \textbf{Online} empirical transition model before episode $t$. \\
$\widehat{P}_h^{(h)}(\cdot|s,a)$ & \textbf{Offline} empirical transition model estimated from $\mathcal{D}^{(h)}$. \\

\multicolumn{2}{l}{\textbf{Value Envelopes and Derived Quantities}} \\[2pt]
$\lowq_h, \highq_h$ & Generic lower and upper envelopes for $Q_h^\star$ (from Assumption 1). \\
$\lowv_h, \highv_h$ & Generic lower and upper envelopes for $V_h^\star$ (from Assumption 1). \\
$\olowq_h, \ohighq_h$ & Specific Q-envelopes produced by the offline algorithm (Algorithm 2). \\
$\olowv_h, \ohighv_h$ & Specific V-envelopes produced by the offline algorithm (Algorithm 2). \\
$D_h(s)$ & Width of the V-envelope at state $s$: $\highv_h(s) - \lowv_h(s)$. \\
$M_h(s)$ & Midpoint of the V-envelope at state $s$: $\frac{1}{2}(\highv_h(s) + \lowv_h(s))$. \\
$R_h$ & Range of the V-envelope over layer $h$: $\max_{s' \in \mathcal{S}_h} \highv_h(s') - \min_{s' \in \mathcal{S}_h} \lowv_h(s')$. \\
$D^{\max}, R^{\max}$ & Maximum width and range over all layers, i.e., $\max_{h \in [H]} D_h^{\max}$ and $\max_{h \in [H]} R_h$. \\

\multicolumn{2}{l}{\textbf{Algorithm-Specific Terms}} \\[2pt]
$\widehat{V}_h^t, \widehat{Q}_h^t$ & Optimistic value estimates computed at the start of online episode $t$. \\
$\bont(s,a)$ & Exploration bonus used in the \textbf{online} algorithm at episode $t$. \\
$\boff(s,a)$ & Confidence bonus used in the \textbf{offline} algorithm (Algorithm 2). \\
$\sigma_{h+1}^t(s,a)$ & Variance-like term used to compute the online bonus $\bont$. \\
$L, L_1, L_3$ & Logarithmic factors appearing in bonuses and regret bounds. \\
$\mathrm{PairEff}$ & Effective state-action pairs for Q-shaping: $\{(s,a) | \highq_{h(s)}(s,a) \ge V^\star_{h(s)}(s)\}$. \\
$\PS, \PPS, \BPS$ & Pseudo-suboptimality sets used in the analysis of V-shaping (see \cite{gupta_unpacking_2022}) \\

\multicolumn{2}{l}{\textbf{Empirical Statistics}} \\[2pt]
$\mathbf{Z} = (Z_1, \dots, Z_n)$ & Vector of i.i.d. random variables taking values in $[0,1]$. \\
$V_n(\mathbf{Z})$ & Unbiased sample variance : $\displaystyle V_n(\mathbf{Z}) = \frac{1}{n(n-1)}\sum_{1 \le i < j \le n}(Z_i - Z_j)^2$. \\[6pt]
$\widehat{\mathrm{Var}}(\mathbf{Z})$ & Biased sample variance: $\displaystyle \widehat{\mathrm{Var}}(\mathbf{Z}) = \frac{1}{n}\sum_{i=1}^n \Big(Z_i - \frac{1}{n}\sum_{i=1}^n Z_i\Big)^2$. \\
\bottomrule
\end{longtable}
\section{Background}

In this section, we recall the definition of a Markov decision process (subsection \ref{appsubsec:MDP}) and present online (subsection \ref{appsubsec:online}) and offline (subsection \ref{appsubsec:offline}) learning algorithms in this setting.

\subsection{Markov decision process (MDP)}\label{appsubsec:MDP}

\begin{definition}[Finite-Horizon MDP]
We define a finite-horizon, episodic, tabular Markov Decision Process (MDP) as the tuple $\mathcal{M}=(\mathcal{S}, \mathcal{A}, P^{\star}, r, H, \rho)$, where
\begin{itemize}
    \item $\mathcal{S}$ is the finite state space, and $\mathcal{A}$ is the finite action space. States are indexed by the step $h \in \{1,\dots,H\}$, so $\mathcal{S} = \cup_{h=1}^{H} \mathcal{S}_h$.
    \item $H \in \mathbb{N}$ is the horizon.
    \item $P_h^{\star}(\cdot|s,a)$ is the transition probability kernel from $(s,a) \in \mathcal{S}_h \times \mathcal{A}$ to $\mathcal{S}_{h+1}$.
    \item $r_h(s,a) \in [0,1]$ is the deterministic reward function at step $h$.
    \item $\rho$ is the initial state ditribution at step $h=1$.
    \item $V_h^\pi$ and $Q_h^\pi$ are the value and action-value functions for a policy $\pi$. The optimal value functions are $V_h^\star = \sup_\pi V_h^\pi$ and $Q_h^\star$, with $V_{H+1}^\star(s) \equiv 0$ for all $s$.
\end{itemize}
\end{definition}

\paragraph{Value functions.}
For any policy $\pi=\{\pi_h\}_{h=1}^H$, we define the (state) value function at step $h$ as the expected return from $(s,h)$:
\[
V_h^\pi(s)
\coloneqq
\mathbb{E}^\pi\!\left[\sum_{j=h}^{H} r_j(S_j,A_j)\,\middle|\, S_h=s\right],
\]
and the optimal value function as $V_h^\star(s)\coloneqq \sup_{\pi} V_h^\pi(s)$.

\paragraph{Sample space and $\sigma$-algebra.}
Define the offline and online episode product spaces
\[
\Omega_{\mathrm{off}}
=
\prod_{j=1}^{K}\Bigl(\mathcal S\times(\mathcal A\times[0,1]\times\mathcal S)^{H}\Bigr),
\qquad
\Omega_{\mathrm{on}}
=
\prod_{t=1}^{T}\Bigl(\mathcal S\times(\mathcal A\times[0,1]\times\mathcal S)^{H}\Bigr).
\]
The full sample space is the product
\[
\Omega=\Omega_{\mathrm{off}}\times\Omega_{\mathrm{on}}.
\]
An element \(\omega\in\Omega\) can be written as \(\omega=(\omega^{\mathrm{off}},\omega^{\mathrm{on}})\) where
\[
\omega^{\mathrm{off}}
=
\bigl(
(s_1^1,a_1^1,r_1^1,s_2^1,\dots,s_H^1,a_H^1,r_H^1,s_{H+1}^1),
\dots,
(s_1^K,a_1^K,r_1^K,s_2^K,\dots,s_{H+1}^K)
\bigr),
\]
\[
\omega^{\mathrm{on}}
=
\bigl(
(s_1^1,a_1^1,r_1^1,s_2^1,\dots,s_H^1,a_H^1,r_H^1,s_{H+1}^1),
\dots,
(s_1^T,a_1^T,r_1^T,s_2^T,\dots,s_{H+1}^T)
\bigr),
\]
with \(s_h^\cdot\in\mathcal S\), \(a_h^\cdot\in\mathcal A\), \(r_h^\cdot\in[0,1]\) for \(h\in\{1,\dots,H\}\), and \(s_{H+1}\) is a terminal state.
Let \(\Sigma_{\mathcal S}=2^{\mathcal S}\), \(\Sigma_{\mathcal A}=2^{\mathcal A}\) be the respective discrete $\sigma$-algebras and \(\mathcal{B}([0,1])\) be the Borel $\sigma$-algebra on \([0,1]\).
Equip \(\Omega_{\mathrm{off}}\) and \(\Omega_{\mathrm{on}}\) with the product $\sigma$-algebras
\[
\mathcal T_{\mathrm{off}}
=
\bigotimes_{j=1}^{K}\Bigl(\Sigma_{\mathcal S}\otimes(\Sigma_{\mathcal A}\otimes\mathcal{B}([0,1])\otimes\Sigma_{\mathcal S})^{\otimes H}\Bigr),
\]
\[
\mathcal T_{\mathrm{on}}
=
\bigotimes_{t=1}^{T}\Bigl(\Sigma_{\mathcal S}\otimes(\Sigma_{\mathcal A}\otimes\mathcal{B}([0,1])\otimes\Sigma_{\mathcal S})^{\otimes H}\Bigr).
\]
The $\sigma$-algebra on \(\Omega\) is the product
\[
\mathcal T=\mathcal T_{\mathrm{off}}\otimes\mathcal T_{\mathrm{on}}.
\]

The algorithm has an offline stage and an online stage, in that order:
\paragraph{Offline data generation:}
The offline dataset is made up of $K$ i.i.d.\ trajectories
$\mathcal{D}=\{\tau^{(i)}\}_{i=1}^K$, where
\[\tau^{(i)}=\big(S^{(i)}_1,A^{(i)}_1,R^{(i)}_1,\dots,S^{(i)}_H,A^{(i)}_H,R^{(i)}_H\big)\]
sampled from the MDP described above under an arbitrary behavior policy $\pi^{\mathsf b}=\{\pi^{\mathsf b}_h\}_{h=1}^H$. We suppose that our behavior policy is such that all states and stages have positive visitation probability. The MDP being tabular, this gives us a bound on visitation probabilities:
\[
\forall h \in [H], s \in \mathcal{S}_h, \quad \mathbb{P}^{\pi^{\mathsf{b}}}(s_h = s) \geq d^{\mathsf b}_{\min} > 0
\]
\paragraph{Online data generation:}
We will run an algorithm for $T$ episodes, each episode $t$ will produce a deterministic policy $\pi^{t+1} = \{\pi^{t+1}_h\}_{h=1}^H$ that will be used to select actions in episode $t+1$. By deterministic policy we mean a policy that always chooses the same action given the same $(s,h)$ pair, the policy itself is the ouput of an algorithm subject to randomness of the MDP. We note these \textbf{online} random trajectories: 
\[
\forall t \in [T], \quad \tau_t \coloneq \big( S_1^t, A_1^t, R_1^t, \dots, S_H^t, A_H^t, R_H^t\big).
\]
\begin{definition}[Episode Filtration]
\label{def:episode_filtration}
We denote the \textbf{offline} $\sigma$-algebra as 
\[
\mathcal{G}_K = \sigma (\mathcal{D})
\]
Let the first $t$ \textbf{online} trajectories be \( (\tau_1,\dots,\tau_t) \). Define
\[
\mathcal{F}_{t} = \sigma ( \mathcal{D} \cup \{ \tau_1, \dots, \tau_{t} \} ),
\]
with $\mathcal{F}_0 = \mathcal{G}_K$. Intuitively, this means that all the information available before acting in episode $t$ is exactly \( \mathcal{F}_{t-1} \).
\end{definition}

\paragraph{Notation}
The inner product notation $\langle \cdot, \cdot \rangle$ is used as a convenient shorthand for the expectation of a function with respect to a next-state distribution. For any function $f: \mathcal{S}_{h+1} \to \mathbb{R}$ and any transition kernel $P^{\star}_h(\cdot|s,a)$, we define:
\[
\langle P^{\star}_h(\cdot|s,a), f \rangle \coloneqq \mathbb{E}_{s' \sim P^{\star}_h(\cdot|s,a)} [f(s')] = \sum_{s' \in \mathcal{S}_{h+1}} P^{\star}_h(s'|s,a) f(s').
\]
For brevity, we may sometimes refer to the entire probability distribution $P^{\star}_h(\cdot|s,a)$ as $P^{\star}_{h,s,a}$. For example, the expected value of the optimal next-state value function $V_{h+1}^{\star}$ is written as $\langle P_h^{\star}(\cdot|s,a), V_{h+1}^{\star} \rangle = \langle P_{h,s,a}^{\star}, V_{h+1}^{\star} \rangle$.

\subsection{Online learning setting}\label{appsubsec:online}

Our main working condition in the \textbf{online} phase is the existence of these known bounds for the optimal value function. We will derive the following results for any $(\lowv_{h},\highv_{h}, \lowq_{h}, \highq_{h})$ and only replace them with $(\olowv_h, \ohighv_h, \olowq_{h}, \ohighq_{h})$ from Section~\ref{sec:offline} when applying our specific \textbf{offline} phase. We restate Assumption~\ref{assumption:sandwich} here for completeness:

There exists $\delta>0$ and a set of $\mathcal{G}_K$-measurable envelope functions 
\[
\crl{\lowv_{h}}_{h=1}^{H}, \quad \crl{\highv_{h}}_{h=1}^{H}, \quad \crl{\lowq_h}_{h=1}^{H}, \quad \crl{\highq_h}_{h=1}^{H}
\]
such that the event $\mathcal{E}^{\mathrm{Off}}_\delta$ defined by the following conditions holding for all $h \in [H]$ and $(s,a) \in \mathcal{S}_h \times \mathcal{A}$:
\begin{align}
 \lowq_h(s,a) &\leq Q_h^\star(s,a) \leq \highq_h(s,a) \label{eq:q_sandwich_appendix} \\
 \lowv_{h}(s) &\leq V_h^\star(s) \leq \highv_{h}(s) \label{eq:v_sandwich_appendix}
\end{align}
occurs with probability at least $1-\delta$. The $V$-envelopes $\crl{\lowv_{h}, \highv_{h}}_{h=1}^{H}$ are derived from the $Q$-envelopes $\crl{\lowq_{h}, \highq_{h}}_{h=1}^{H}$ as follows:
\[
\lowv_{h}(s) \coloneqq \max_{a \in \mathcal{A}} \lowq_h(s,a) \quad \text{and} \quad \highv_{h}(s) \coloneqq \max_{a \in \mathcal{A}} \highq_h(s,a).
\]
\begin{remark}
The $Q$-envelopes will only be used in the $Q$-shaping section of the proof. The $V$-shaping part functions solely on the $V$-envelopes.
\end{remark}

\noindent
Then, we define the \emph{width} and \emph{midpoint} of the \sandwich{} as:
\begin{equation}\label{eq:def_width_midpoint}
D_h(s) \coloneqq \highv_{h}(s) - \lowv_{h}(s) \ge 0, \quad M_h(s) \coloneqq \frac{1}{2} (\highv_{h}(s) + \lowv_{h}(s) ),
\end{equation}
as well as the quantities
\begin{equation}\label{eq:def_width_max}
R_{h+1}\coloneqq\max_{s'}\highv_{h+1}(s')-\min_{s'}\lowv_{h+1}(s')\quad\text{and}\quad D_{h}^{\max} \coloneqq \max_{s \in \mathcal{S}_h} D_h(s).
\end{equation}
The above quantifies are all measurable with respect to  $\mathcal{G}_K$. 

\subsection*{Counts and empirical models in the online phase}
\label{sec:counts}
For each episode index $t\in [T]$ and step $h\in[H]$, state $s\in\mathcal{S}_h$ and action $a\in\mathcal{A}$, define the empirical counts up to episode $t$ and subsequent empirical model:
\[
N_h^t(s,a)=\sum_{i=1}^{t-1}\mathbf{1}\{S_h^i=s,A_h^i=a\}.
\]
\[
N_h^t(s,a,s')=\sum_{i=1}^{t-1}\mathbf{1}\{S_h^i=s,A_h^i=a,S_{h+1}^i=s'\}.
\]
\[
\widehat{P}_h^{t}(\cdot\mid s,a)=
\begin{cases}
\dfrac{N_h^t(s,a,\cdot)}{N_h^t(s,a)} & \text{if } N_h^t(s,a)>0,\\[1em]
\dfrac{1}{|\mathcal{S}_{h+1}|} & \text{if } N_h^t(s,a)=0.
\end{cases}
\]

 Define the following exploration bonus for the online algorithm:  When $N_h^t(s,a) \in \lbrace0,1\rbrace$, we set $\bont(s,a) = R_{h+1}$, otherwise for $N_h^t(s,a) \geq 2$

\begin{equation}\label{eq:def_bonus_online}
\bont(s,a)\coloneqq \min \left \lbrace 
c_1 \sigma_{h+1}^t(s,a) \sqrt{\frac{ L}{N_h^t(s,a)}} + 
c_2 \frac{R_{h+1} L}{N_h^t(s,a)}, R_{h+1} \right \rbrace,
\end{equation}
with
\[
\sigma_{h+1}^t(s,a) \coloneqq \sqrt{\mathrm{Var}_{s'\sim\widehat{P}_h^{t}(\cdot|s,a)} (M_{h+1}(s') )}
 + \dfrac12\sqrt{\mathbb{E}_{s'\sim\widehat{P}_h^{t}(\cdot|s,a)} [D_{h+1}(s')^2 ]},
\]
and 
\[
 L\coloneqq\ln\left(\dfrac{8|\mathcal{S}| |\mathcal{A}| H T}{\delta} \right),\quad c_1 =2,\quad c_2=\frac{14}{3}
\]
The above quantities are all defined using information up to and including the $(t-1)$-th episode and the offline phase. Formally this means they are all measurable with respect to $\mathcal{F}_{t-1}$.

Following \cite{gupta_unpacking_2022}, define 
\[
Q_h^{\highv}(s,a)=r_h(s,a)+\langle P_h^\star( \cdot\mid s,a),  \highv_{h+1} \rangle
\]
and the following sets:
\begin{align}\label{eq:def_PS_Delta}
\PS
&=\Big\{(s,a,h): Q_h^{\highv}(s,a)\le V_h^\star(s)-\Delta\Big\},\\
\PPS 
&=  \left \lbrace s \in \mathcal{S} ~\vert~ \text{all feasible paths from } \rho \text{ to } s \text{ intersect } \PS \right \rbrace\\
\BPS
&= \left \lbrace (s,a,h) \in \PS ~ \vert ~ s \in \PPS  \right \rbrace
\end{align}
These sets are used in the analysis of the V-shaping part of the regret.

\subsection{Offline Learning Setting}\label{appsubsec:offline}
\label{sec:offline}

\paragraph{Offline data and $H$-way trajectory split:}
We have a batch dataset \(\mathcal{D}\) consisting of \(K\) i.i.d. trajectories collected by a behavior policy \(\pi^{\mathsf{b}}=\{\pi^{\mathsf{b}}_h\}_{h=1}^H\) from an initial distribution \(\rho^{\mathsf{b}}\).

Following \cite{xie_policy_2021}, we split the \(K\) trajectories into \(H\) disjoint subsets \(\{\mathcal{D}^{(h)}\}_{h=1}^{H}\); write \(m_h=|\mathcal{D}^{(h)}| \ge \left \lfloor K/H \right \rfloor\) as evenly as possible and note that \(\sum_h m_h=K\). For each step \(h\), define per-step counts

\begin{equation}\label{eq:off_step_count}
 N_h^{(h)}(s,a) = \left|\{k \in [K], (s_h^k,a_h^k)=(s,a) \in \mathcal{D}^{(h)}\}\right| \quad\text{and}\quad
N_h^{(h)}(s,a,s') = \left|\{k \in [K], (s_h^k,a_h^k,s_{h+1}^k)=(s,a,s') \in \mathcal{D}^{(h)}\}\right|,
\end{equation}
and next-state samples
\begin{equation}\label{eq:off_emp_proba}
\widehat{P}_h^{(h)}(s' \mid s,a) = \frac{N_h^{(h)}(s,a,s')}{N_h^{(h)}(s,a)},
\end{equation}
using only \(\mathcal{D}^{(h)}\).
This $H$-split guarantees the per-step independence needed for the bonuses.
All value functions that appear inside the bonuses at step \(h\) are computed from data that excludes \(\mathcal{D}^{(h)}\). This technique is suboptimal with respect to sample complexity and has been replaced with other procedures (see \cite{li_settling_2024}) but this is not the scope of this article.

We define the following bonus for the offline algorithm:
When $N_h^{(h)}(s,a) \in \lbrace 0,1\rbrace$, we set $\boff(s,a) = H-h$. Otherwise for $N_h^{(h)}(s,a) \geq 2$:
\begin{equation}
\label{eq:bonus_h_split_biased_notation}
\boff(s,a)
\coloneqq
\min \left \lbrace c_1 \sqrt{\frac{\max\left\{\widehat{\mathrm{Var}}_{\widehat{P}^{(h)}_h(\cdot|s,a)}[\ohighv_{h+1}],
\widehat{\mathrm{Var}}_{\widehat{P}^{(h)}_h(\cdot|s,a)}[\olowv_{h+1}]\right\}L_1}{N_h^{(h)}(s,a)}}
+
c_2\frac{(H-h)L_1}{N_h^{(h)}(s,a)},
H-h
\right \rbrace
\end{equation}
where
\[ L_1 \coloneq \log \left(\dfrac{8|\mathcal{S}||\mathcal{A}|H}{\delta}\right), \quad c_1 = 2, \quad c_2 = \frac{14}{3} .\]

Here, for a function $f : \mathcal{S} \rightarrow \mathbb{R}$, $\widehat{\mathrm{Var}}$ is the \textbf{biased} sample variance:
\[
\widehat{\mathrm{Var}}_{\widehat{P}^{(h)}_h(\cdot|s,a)}[f] \coloneqq \frac{1}{n}\sum_{i=1}^n \left(f(S'_i) - \frac{1}{n}\sum_{i=1}^n f(S'_i)\right)^2.
\]
All the above quantities are measurable with respect to $\mathcal{G}_K$.
\begin{lemma}[Offline Bonus Validity] 
\label{lem:off_bonus_validity} 
With probability $\ge 1 - \delta$, for every $h, s, a$, the bonuses provide valid confidence intervals: 
\begin{equation}\label{eq:off_bonus_validity} 
|(P^{\star}-\widehat{P}^{(h)})\olowv_{h+1} |\le \boff(s,a) \quad\text{and}\quad |(P^{\star}-\widehat{P}^{(h)})\ohighv_{h+1}| \le \boff(s,a). 
\end{equation} 
\end{lemma}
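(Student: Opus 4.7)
The plan is to exploit the $H$-way split, which makes $\ohighv_{h+1}$ and $\olowv_{h+1}$ independent from the transition samples used at step $h$, to reduce the claim to a pointwise application of the empirical Bernstein inequality of \cite{maurer_empirical_2009} to i.i.d.\ scalars, followed by a union bound over $(s,a,h)$ and the two envelopes.

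First, I fix $h\in[H]$ and let $\mathcal{G}^{(>h)}\coloneqq \sigma(\mathcal{D}^{(h+1)}\cup\cdots\cup\mathcal{D}^{(H)})$. By the backward recursion of Algorithm~\ref{alg:offline}, both $\ohighv_{h+1}$ and $\olowv_{h+1}$ are $\mathcal{G}^{(>h)}$-measurable. Since $\mathcal{D}^{(h)}$ is independent of $\mathcal{G}^{(>h)}$, conditioning on the latter leaves the trajectories of $\mathcal{D}^{(h)}$ i.i.d.\ under $\pi^{\mathsf{b}}$. For fixed $(s,a)$, condition further on the random set $I(s,a)\coloneqq\{k\in\mathcal{D}^{(h)}:(s_h^k,a_h^k)=(s,a)\}$: the Markov property ensures that the corresponding next-states $(s_{h+1}^k)_{k\in I(s,a)}$ are i.i.d.\ draws from $P^\star_h(\cdot|s,a)$, and are independent of $\ohighv_{h+1},\olowv_{h+1}$.

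Second, on the event $N_h^{(h)}(s,a)\ge 2$, I apply the empirical Bernstein inequality to $V(s_{h+1}^k)\in[0,H-h]$, for each $V\in\{\ohighv_{h+1},\olowv_{h+1}\}$, which yields with probability at least $1-\delta'$
\[
|(P^{\star}-\widehat{P}^{(h)})V|\le c_1\sqrt{\frac{\widehat{\mathrm{Var}}_{\widehat{P}^{(h)}_h(\cdot|s,a)}[V]\,L_1}{N_h^{(h)}(s,a)}}+c_2\frac{(H-h)L_1}{N_h^{(h)}(s,a)}.
\]
Replacing the variance by the maximum over $V\in\{\ohighv_{h+1},\olowv_{h+1}\}$ covers both envelopes with the same upper bound. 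The corner case $N_h^{(h)}(s,a)\in\{0,1\}$ is handled trivially, since $\boff(s,a)=H-h$ dominates $|(P^\star-\widehat{P}^{(h)})V|$ by the boundedness of $V$ and of both transition models. A union bound over $h\in[H]$, $(s,a)\in\mathcal{S}\times\mathcal{A}$ and over the two envelopes (four error events per triple) with $\delta'=\delta/(4|\mathcal{S}||\mathcal{A}|H)$ yields $L_1=\log(8|\mathcal{S}||\mathcal{A}|H/\delta)$ after absorbing the $\log(2/\delta')$ into $L_1$.

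The main obstacle is verifying that $\ohighv_{h+1},\olowv_{h+1}\in[0,H-h]$, which is needed for the Bernstein range constant to be $H-h$. This is not immediate since the bonus $\boff$ is added to (resp. subtracted from) the Bellman update and could in principle push values outside the desired range. I resolve it by a backward induction on $h$, using the base case $\ohighv_{H+1}=\olowv_{H+1}=0$ together with the implicit clipping to $[0,H-h]$ that is standard in offline value-iteration routines (see \cite{li_settling_2024}) and that preserves the validity of each step of the recursion. A secondary technical point is the randomness of $N_h^{(h)}(s,a)$, which is handled cleanly by noting that the Bernstein bound holds conditionally on $\sigma(I(s,a),\mathcal{G}^{(>h)})$, under which the i.i.d.\ property of the next-state samples is preserved.
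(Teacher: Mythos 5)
Your proposal is correct and follows essentially the same route as the paper: use the disjoint $H$-way split to make $\ohighv_{h+1},\olowv_{h+1}$ independent of the step-$h$ samples, apply the empirical Bernstein inequality of \cite{maurer_empirical_2009} to the normalized next-state values, and union bound over the $(s,a,h)$ triples and the two envelopes (your extra conditioning on the visit set $I(s,a)$ and the explicit treatment of $N_h^{(h)}(s,a)\in\{0,1\}$ are just more careful renditions of steps the paper leaves implicit). The one substantive divergence is the range issue you flag: the paper simply asserts $0\le\ohighv_{h+1}\le H-h$, whereas Algorithm~\ref{alg:offline} as written performs no clipping, so the optimistic recursion $r_h+\langle\widehat{P},\ohighv_{h+1}\rangle+\boff$ can in fact exceed $H-h$ (and $\olowv_h$ can go negative); your fix — adding the standard truncation to $[0,H-h]$, which preserves the ordering argument — is a legitimate repair, but note that it amounts to a (harmless) modification of the stated algorithm rather than a consequence of it.
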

We define then the high probability event 
\begin{equation}\label{eq:def_Eoff}
\mathcal{E}^{\mathrm{Off}}_\delta=\{(s,a,h) \in \mathcal{S} \times \mathcal{A} \times [H]:\eqref{eq:off_bonus_validity} \text{ holds}\}.
\end{equation}
\begin{proof}
Fix a triplet $(s,a,h)$. The bonus $\boff(s,a)$ and the empirical transition $\widehat{P}_h^{(h)}(\cdot\mid s,a)$ are computed using only data from the trajectories in $\mathcal{D}^{(h)}$. The value functions $\ohighv_{h+1}$ and $\olowv_{h+1}$ are recursively computed backwards from step $H$ to $h+1$. These steps use $\{\widehat{P}_j^{(j)}, \boffp{j}\}_{j=h+1}^H$ that are constructed exclusively from the trajectories in $\{\mathcal{D}^{(j)}\}_{j=h+1}^H$. Since $\{\mathcal{D}^{(j)}\}_{j=1}^H$ are disjoint by construction, the data in $\mathcal{D}^{(h)}$ is statistically independent of all data used to construct $\ohighv_{h+1}$ and $\olowv_{h+1}$.
Let $n = N_h^{(h)}(s,a)$ and let $\{S'_i\}_{i=1}^n$ be the $n$ i.i.d. next states observed after $(s,a)$ at step $h$, drawn from the true transition function $P^\star(\cdot|s,a)$. Let $\delta'\in(0,1)$.
Since $0 \le \ohighv_{h+1}(s') \le H-h$, we can define normalized random variables $Z_i = \ohighv_{h+1}(S'_i)/(H-h)$, which take values in $[0,1]$ and apply Theorem~\ref{thm:empirical_bernstein} stating that for the fixed triplet, with probability at least $1 - \delta'$:

\begin{align*}
|\langle P_{h,s,a}^\star - \widehat{P}_{h,s,a}^{(h)},\ohighv_{h+1}\rangle| & = (H-h)\left|\mathbb{E}[Z] - \dfrac{\sum_{i}^{n} Z_i}{n}\right| \\
& \le (H-h) \left( 2\sqrt{\frac{\widehat{\mathrm{Var}}_{\widehat{P}^{(h)}_h(\cdot|s,a)}[\ohighv_{h+1}] \ln(4/\delta')}{(H-h)^2 n}} + \frac{14 \ln(4/\delta')}{3n} \right) \\
& = 2\sqrt{\frac{\widehat{\mathrm{Var}}_{\widehat{P}^{(h)}_h(\cdot|s,a)}[\ohighv_{h+1}] \ln(4/\delta')}{n}} + \frac{14(H-h)\ln(4/\delta')}{3n}.
\end{align*}
The same argument applied to $\olowv_{h+1}$ yields 
\begin{align*}
|\langle P_{h,s,a}^\star - \widehat{P}_{h,s,a}^{(h)}, \olowv_{h+1} \rangle| 
& \leq \sqrt{\frac{2 \widehat{\mathrm{Var}}_{\widehat{P}^{(h)}_h(\cdot|s,a)}[\olowv_{h+1}] \ln(4/\delta')}{n}} + \frac{14(H-h)\ln(4/\delta')}{3n}.
\end{align*}
To ensure this holds simultaneously for all $(s,a,h)$ and for both $\ohighv_{h+1}$ and $\olowv_{h+1}$, we take a union bound over all $2|\mathcal{S}||\mathcal{A}|H$ possibilities setting $\delta' = \delta/2|\mathcal{S}||\mathcal{A}|H$.
\end{proof}

\begin{lemma}[Offline Ordering]
On the event $\mathcal E^{\mathrm{Off}}_{\delta}$, for all $(s,a,h)$,
\[
\underline V_h(s)\le V_h^\star(s)\le \overline V_h(s)
\quad\text{and}\quad
\underline Q_h(s,a)\le Q_h^\star(s,a)\le \overline Q_h(s,a).
\]
\end{lemma}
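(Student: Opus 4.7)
The plan is to prove both ordering chains simultaneously by backward induction on $h$, going from $h = H+1$ down to $h = 1$. The base case is immediate since $\olowv_{H+1} = \ohighv_{H+1} = V^\star_{H+1} \equiv 0$. For the inductive step, I would assume $\olowv_{h+1}(s) \le V^\star_{h+1}(s) \le \ohighv_{h+1}(s)$ for every $s \in \mathcal{S}_{h+1}$ and propagate this through one Bellman backup.

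For the upper bound at step $h$, I would rewrite
\[
\ohighq_h(s,a) - Q^\star_h(s,a)
= \langle \widehat P^{(h)}_{h,s,a} - P^\star_{h,s,a},\, \ohighv_{h+1}\rangle
+ \langle P^\star_{h,s,a},\, \ohighv_{h+1} - V^\star_{h+1}\rangle
+ \boff(s,a).
\]
On the event $\mathcal{E}^{\mathrm{Off}}_\delta$, the first term is at least $-\boff(s,a)$ by the bonus validity established in Lemma~\ref{lem:off_bonus_validity}, the second term is nonnegative by the inductive hypothesis applied pointwise, and the last cancels the first in the worst case. Hence $\ohighq_h(s,a) \ge Q^\star_h(s,a)$ for every $(s,a)$. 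The symmetric calculation, with the bonus subtracted and the inductive hypothesis $\olowv_{h+1} \le V^\star_{h+1}$ used in the opposite direction, yields $\olowq_h(s,a) \le Q^\star_h(s,a)$.

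To promote the $Q$-ordering to the $V$-ordering, I would apply monotonicity of pointwise maxima: from $\olowq_h(s,a) \le Q^\star_h(s,a) \le \ohighq_h(s,a)$ for all $a$, taking $\max_a$ on each side gives $\olowv_h(s) \le V^\star_h(s) \le \ohighv_h(s)$, because Algorithm~\ref{alg:offline} defines both $\olowv_h$ and $\ohighv_h$ as the $\max_a$ of their respective $Q$-envelopes, and $V^\star_h(s) = \max_a Q^\star_h(s,a)$ by the Bellman optimality equation. This closes the induction.

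The only real subtlety to watch is the boundary case $N^{(h)}_h(s,a) \in \{0,1\}$, where the Bernstein bound is replaced by the trivial choice $\boff(s,a) = H - h$. To handle it, I would verify as a side induction that both $\olowv_{h+1}$ and $\ohighv_{h+1}$ lie in $[0, H-h]$, so that $|\langle P^\star_{h,s,a} - \widehat P^{(h)}_{h,s,a},\, \ohighv_{h+1}\rangle| \le H - h$ holds deterministically (and likewise for $\olowv_{h+1}$); this guarantees that the containment established in the main induction step is unaffected by the choice of truncation in the bonus definition. No other step requires care, since the argument is essentially a deterministic consequence of the high-probability bonus validity event.
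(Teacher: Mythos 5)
Your proof is correct and follows essentially the same route as the paper's: backward induction on $h$, the decomposition $\langle P^\star,\ohighv_{h+1}\rangle = \langle \widehat P^{(h)},\ohighv_{h+1}\rangle + \langle P^\star-\widehat P^{(h)},\ohighv_{h+1}\rangle$ controlled by the bonus-validity inequalities, and $\max_a$ to pass from $Q$ to $V$. The closing discussion of the $N^{(h)}_h(s,a)\in\{0,1\}$ boundary case is superfluous here, since the lemma is stated on the event $\mathcal E^{\mathrm{Off}}_\delta$, which is by definition the event on which $|\langle P^\star-\widehat P^{(h)},\ohighv_{h+1}\rangle|\le \boff(s,a)$ (and likewise for $\olowv_{h+1}$) already holds; whether that event has high probability is the business of Lemma~\ref{lem:off_bonus_validity}, not of this ordering argument.
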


\begin{proof}
By construction,
\[
\overline Q_h(s,a)=r_h(s,a)+\langle \widehat{P}^{(h)}_{h,s,a},\overline V_{h+1}\rangle+b_h^{\mathrm{off}}(s,a),
\quad
\underline Q_h(s,a)=r_h(s,a)+\langle \widehat{P}^{(h)}_{h,s,a},\underline V_{h+1}\rangle-b_h^{\mathrm{off}}(s,a),
\]
and $\overline V_h(s)=\max_{a}\overline Q_h(s,a)$, $\underline V_h(s)=\max_{a}\underline Q_h(s,a)$ with $\overline V_{H+1}=\underline V_{H+1}=0$.  
On $\mathcal E^{\mathrm{Off}}_{\delta}$ the confidence bounds hold for all $(s,a,h)$:
\[
\big|\langle P^{\star}_{h,s,a}-\widehat{P}^{(h)}_{h,s,a},\overline V_{h+1}\rangle\big|\le b_h^{\mathrm{off}}(s,a),
\quad
\big|\langle P^{\star}_{h,s,a}-\widehat{P}^{(h)}_{h,s,a},\underline V_{h+1}\rangle\big|\le b_h^{\mathrm{off}}(s,a).
\]

The proof is done by backward induction on $h$: For $h=H+1$ the claim holds. Assume $\underline V_{h+1}\le V_{h+1}^\star\le \overline V_{h+1}$. Then, for any $(s,a)$,
\[
Q_h^\star(s,a)=r_h(s,a)+\langle P^{\star}_{h,s,a},V_{h+1}^\star\rangle
\le r_h(s,a)+\langle P^{\star}_{h,s,a},\overline V_{h+1}\rangle.
\]
Decomposing and using the confidence bound,
\[
\langle P^{\star}_{h,s,a},\overline V_{h+1}\rangle
=\langle \widehat{P}^{(h)}_{h,s,a},\overline V_{h+1}\rangle+\langle P^{\star}_{h,s,a}-\widehat{P}^{(h)}_{h,s,a},\overline V_{h+1}\rangle
\le \langle \widehat{P}^{(h)}_{h,s,a},\overline V_{h+1}\rangle+b_h^{\mathrm{off}}(s,a),
\]
so $Q_h^\star(s,a)\le \overline Q_h(s,a)$. Similarly, since $V_{h+1}^\star\ge \underline V_{h+1}$,
\[
Q_h^\star(s,a)\ge r_h(s,a)+\langle P^{\star}_{h,s,a},\underline V_{h+1}\rangle
\ge r_h(s,a)+\langle \widehat{P}^{(h)}_{h,s,a},\underline V_{h+1}\rangle-b_h^{\mathrm{off}}(s,a)
=\underline Q_h(s,a).
\]
Taking the maximum over $a$ yields
\[
\underline V_h(s)=\max_{a}\underline Q_h(s,a)\le \max_{a}Q_h^\star(s,a)=V_h^\star(s)
\le \max_{a}\overline Q_h(s,a)=\overline V_h(s).
\]
This completes the proof.
\end{proof}

\begin{lemma}
  \label{lem:high_prob_Nmin}
  Suppose $\left\lfloor K/H\right\rfloor \ge \dfrac{8}{d^{\mathsf b}_{\min}}\log\left(\dfrac{H|\mathcal{S}||\mathcal{A}|}{\delta}\right)$. With probability at least $1-\delta$, we have for all $(s,h) \in \mathcal{S} \times [H]$,
\begin{equation}\label{eq:bound_Nh}
N_h^{(h)}(s, \overline{\pi}_h(s)) \ge \frac{1}{2} \left \lfloor \frac{K}{H} \right \rfloor d^{\mathsf{b}}_h(s,\overline{\pi}_h(s))\ge \frac{1}{2} \left \lfloor \frac{K}{H} \right \rfloor d^{\mathsf{b}}_{\min} =: N_{\min},
\end{equation}
where
 \[d^{\mathsf{b}}_{\min} = \min_{\substack{h,s,a \\ d^{\mathsf{b}}_h(s,a) > 0}} d^{\mathsf{b}}_h(s,a).\]
\end{lemma}
We define then the high probability event 
\begin{equation}\label{eq:def_Emin}
\mathcal{E}^{\mathrm{min}}_\delta=\{(s,h) \in \mathcal{S} \times [H]:\eqref{eq:bound_Nh} \text{ holds}\}.
\end{equation}
\begin{proof}
  For any triplet $(h,s,a)$, the sample count $N_h^{(h)}(s,a)$ is computed using the dataset $\mathcal{D}^{(h)}$, which contains $m_h = |\mathcal{D}^{(h)}| \ge \lfloor K/H \rfloor$ trajectories. Therefore, $N_h^{(h)}(s,a)$ is a sum of $m_h$ independent and identically distributed Bernoulli random variables, each with parameter $d^{\mathsf{b}}_{h}(s,a)$. The expectation of this count is $\mathbb{E}[N_h^{(h)}(s,a)] = m_h d^{\mathsf{b}}_{h}(s,a)$.

We use a multiplicative version of the Chernoff bound for sums of Bernoulli variables. For any $\epsilon \in (0,1)$:
\[
\mathbb{P}(N_h^{(h)}(s, a) < (1 - \epsilon)m_h d^{\mathsf{b}}_{h}(s,a)) < \exp\left(-\frac{\epsilon^2 m_h d^{\mathsf{b}}_{h}(s,a)}{2}\right).
\]
Since $m_h \ge \lfloor K/H \rfloor$ and $d^{\mathsf{b}}_{h}(s,a) \ge d^{\mathsf{b}}_{\min}$ for state-action pairs covered by the behavior policy, we get:
\[
\mathbb{P}(N_h^{(h)}(s, a) < (1 - \epsilon)m_h d^{\mathsf{b}}_{h}(s,a)) < \exp\left(-\frac{\epsilon^2\lfloor K/H \rfloor d^{\mathsf{b}}_{\min}}{2}\right).
\]
To ensure this bound holds simultaneously for all state-action pairs that could be chosen by the policy $\overline{\pi}$, we take a union bound over all possible triplets $(h,s,a) \in [H] \times \mathcal{S} \times \mathcal{A}$. Let $\mathcal{E}_{h,s,a}$ be the event that $N_h^{(h)}(s, a) < (1 - \epsilon)m_h d^{\mathsf{b}}_{h}(s,a)$. The probability of at least one such event occurring is:
\begin{align*}
    \mathbb{P}\left(\bigcup_{h,s,a} \mathcal{E}_{h,s,a}\right) & \le \sum_{h=1}^H \sum_{s\in\mathcal{S}} \sum_{a \in \mathcal{A}} \mathbb{P}(\mathcal{E}_{h,s,a}) \\
    & \le \sum_{h,s,a} \exp\left(-\frac{\epsilon^2 m_h d^{\mathsf{b}}_{h}(s,a)}{2}\right) \\
    & \le \sum_{h,s,a} \exp\left(-\frac{\epsilon^2 \lfloor K/H \rfloor d^{\mathsf{b}}_{\min}}{2}\right) \\
    & \le H S A \exp\left(-\frac{\epsilon^2 \lfloor K/H \rfloor d^{\mathsf{b}}_{\min}}{2}\right).
\end{align*}
We want this total failure probability to be less than a desired $\delta$. We set the right-hand side to be less than or equal to $\delta$:
\[
    H S A\exp\left(-\frac{\epsilon^2 \lfloor K/H \rfloor d^{\mathsf{b}}_{\min}}{2}\right) \le \delta \quad \implies \quad \lfloor K/H \rfloor \ge \frac{2}{\epsilon^2d^{\mathsf{b}}_{\min}}\log\left(\frac{H|\mathcal{S}||\mathcal{A}|}{\delta}\right).
\]
By setting the deviation parameter $\epsilon = 1/2$, we get the condition:
\[
    \lfloor K/H \rfloor \ge \frac{8}{d^{\mathsf{b}}_{\min}}\log\left(\frac{H|\mathcal{S}||\mathcal{A}|}{\delta}\right).
\]
If this condition on $K$ is met, then with probability at least $1-\delta$, for all $(h,s,a)$, we have $N_h^{(h)}(s, a) \ge \dfrac{1}{2} m_h d^{\mathsf{b}}_{h}(s,a)$. 

Since this holds for all actions $a$, it must also hold for the specific action $a = \overline{\pi}_h(s)$ chosen by the policy. Therefore, for all $(s,h) \in \mathcal{S}\times[H]$:
\begin{equation*}
    N_h^{(h)}(s, \overline{\pi}_h(s)) \ge \dfrac{1}{2} m_h d^{\mathsf{b}}_h(s,\overline{\pi}_h(s)) \ge \frac{1}{2} \left \lfloor \frac{K}{H} \right \rfloor d^{\mathsf{b}}_h(s,\overline{\pi}_h(s)) \ge \frac{1}{2} \left \lfloor \frac{K}{H} \right \rfloor d^{\mathsf{b}}_{\min}.
\end{equation*}
\end{proof}

\begin{proposition}
\label{prop:offline_width}
On the event $\mathcal{E}^{\mathrm{Off}}_\delta$ (given by \eqref{eq:def_Eoff}), for all $(s,h)\in\mathcal{S}\times[H]$,
\begin{equation}
\label{eq:width_known_theo}
\ohighv_h(s)-\olowv_h(s)
\le
2\sum_{k=h}^{H}
\mathbb{E}^{\overline{\pi},\widehat{P}}\left[
b_k\left(S_k,\overline{\pi}_k(S_k)\right)\middle| S_h=s
\right].
\end{equation}
Furthermore on
$\mathcal{E}^{\mathrm{Off}}_\delta \cap \mathcal{E}^{\mathrm{min}}_\delta$ (where $\mathcal{E}^{\mathrm{min}}_\delta$ is given by \eqref{eq:def_Emin}),
we get: 
\begin{equation}
\label{eq:width_known_real}
\ohighv_1(s) - \olowv_1(s)
\le
2H^2\left[
c_1\sqrt{\frac{2HL_1}{Kd^{\mathsf{b}}_{\min}}}+c_2\frac{2HL_1}{K d^{\mathsf{b}}_{\min}}
\right].
\end{equation}
\end{proposition}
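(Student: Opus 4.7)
The plan is to prove the two bounds sequentially: first establish the general width recursion on $\mathcal{E}^{\mathrm{Off}}_\delta$, then specialize it under $\mathcal{E}^{\mathrm{min}}_\delta$ by controlling the bonuses via the lower count bound of Lemma~\ref{lem:high_prob_Nmin}.

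For \eqref{eq:width_known_theo}, I would define $\overline{\pi}_h(s) \in \arg\max_a \ohighq_h(s,a)$ so that by construction $\ohighv_h(s) = \ohighq_h(s,\overline{\pi}_h(s))$ while $\olowv_h(s) = \max_a \olowq_h(s,a) \ge \olowq_h(s,\overline{\pi}_h(s))$. Subtracting the Bellman-style updates that define $\ohighq_h$ and $\olowq_h$ gives the pointwise identity
\[
\ohighq_h(s,a) - \olowq_h(s,a) = \langle \widehat{P}^{(h)}_{h,s,a},\ \ohighv_{h+1} - \olowv_{h+1}\rangle + 2 b_h^{\mathrm{off}}(s,a),
\]
so evaluating at $a = \overline{\pi}_h(s)$ yields the one-step recursion
\[
\ohighv_h(s) - \olowv_h(s) \le \langle \widehat{P}^{(h)}_{h,s,\overline{\pi}_h(s)},\ \ohighv_{h+1} - \olowv_{h+1}\rangle + 2 b_h^{\mathrm{off}}(s,\overline{\pi}_h(s)).
\]
The claim then follows by backward induction on $h$, with the base case $\ohighv_{H+1} = \olowv_{H+1} = 0$; unrolling introduces the expectation along the trajectory of $\overline{\pi}$ under the (step-wise independent) empirical kernels $\widehat{P}^{(h)}$, which is precisely $\mathbb{E}^{\overline{\pi},\widehat{P}}[\,\cdot\mid S_h = s]$. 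The event $\mathcal{E}^{\mathrm{Off}}_\delta$ is not strictly needed for this algebraic step, but it will be used in the next part to control the variances appearing in the bonus.

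For \eqref{eq:width_known_real}, the plan is to upper bound the offline bonus uniformly on $\mathcal{E}^{\mathrm{Off}}_\delta \cap \mathcal{E}^{\mathrm{min}}_\delta$. The envelopes satisfy $0 \le \olowv_{h+1}, \ohighv_{h+1} \le H - h$ (proved by a standard backward induction using that $V_h^\star \le H - h + 1$, the ordering on $\mathcal{E}^{\mathrm{Off}}_\delta$, and the clipping structure), so both empirical variances in \eqref{eq:bonus_h_split_biased_notation} are at most $(H-h)^2 \le H^2$. Combined with $N_h^{(h)}(s,\overline{\pi}_h(s)) \ge N_{\min}$ from Lemma~\ref{lem:high_prob_Nmin}, this yields the uniform bound
\[
b_h^{\mathrm{off}}(s,\overline{\pi}_h(s)) \le c_1 H\sqrt{\frac{L_1}{N_{\min}}} + c_2 H\,\frac{L_1}{N_{\min}}.
\]
Plugging this into \eqref{eq:width_known_theo} at $h=1$, summing the $H$ terms, and substituting $N_{\min} = \tfrac{1}{2}\lfloor K/H\rfloor d^{\mathsf b}_{\min}$ (using $\lfloor K/H\rfloor \ge K/(2H)$, which holds under the sample-size assumption of Lemma~\ref{lem:high_prob_Nmin}) so that $L_1/N_{\min} \le 2HL_1/(K d^{\mathsf b}_{\min})$ gives exactly \eqref{eq:width_known_real}.

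The main obstacle is the deterministic envelope range bound $\ohighv_h,\olowv_h \in [0, H-h]$, which is needed to convert the empirical variance term in $b_h^{\mathrm{off}}$ into an $H^2$ factor. A naive induction produces blow-up because each step adds both $r_h \le 1$ and a bonus that is itself capped at $H-h$; the cleanest remedy is to use the ordering $\olowv_h \le V_h^\star \le \ohighv_h$ on $\mathcal{E}^{\mathrm{Off}}_\delta$ together with the clipping $\min\{\cdot, H-h\}$ implicit in the cap on $b_h^{\mathrm{off}}$ to maintain the invariant. The rest of the argument is a routine telescoping sum, and the minor bookkeeping around $\lfloor K/H\rfloor$ versus $K/H$ is absorbed into the constants.
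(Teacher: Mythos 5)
Your proposal is correct and follows essentially the same route as the paper's proof: choose $\overline{\pi}$ greedy with respect to $\ohighq$ so the max mismatch becomes a one-step width recursion with $2\boff$, telescope along the $(\overline{\pi},\widehat{P})$ trajectory, then bound each bonus uniformly via the envelope range $[0,H-h]$ and $N_h^{(h)}\ge N_{\min}$ from Lemma~\ref{lem:high_prob_Nmin}. The only differences are constant-factor bookkeeping (the paper bounds the empirical variance by $(H-t)^2/4$, sums $(H-t)$ exactly, and assumes $H\mid K$ rather than using $\lfloor K/H\rfloor\ge K/(2H)$), which does not change the substance.
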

\begin{proof}
Fix $(s,h)$ and define the width $w_t\coloneqq \ohighv_t-\olowv_t$ for $t\in\{h,\dots,H+1\}$, so $w_{H+1}\equiv 0$. 
Then
\begin{align*}
w_h(s)
&= \max_{a}\ohighq_h(s,a)-\max_{a}\olowq_h(s,a)
\le \ohighq_h(s,\ovpihs)-\olowq_h(s,\ovpihs)\\
&= \left(r_h+\widehat{P}_h\ohighv_{h+1}+\boff\right)(s,\ovpihs)
 - \left(r_h+\widehat{P}_h\olowv_{h+1}-\boff\right)(s,\ovpihs)\\
&= \widehat{P}_h\left(\ohighv_{h+1}-\olowv_{h+1}\right)(s,\ovpihs) + 2\boff\left(s,\ovpihs\right)\\
&= \mathbb{E}_{S_{h+1}\sim \widehat{P}_h(\cdot\mid s,\ovpihs)}\left[w_{h+1}(S_{h+1})\right]
 + 2\boff\left(s,\overline{\pi}_h(s)\right).
\end{align*}
Define $T_t\coloneqq 2b_t\left(\cdot,\overline{\pi}_t(\cdot)\right)$ for $t\in\{h,\dots,H\}$. Rewriting the above inequality gives a one-step recursion:
\begin{equation}
\label{eq:one_step_recursion}
w_h(s)\le T_h(s)+\mathbb{E}_{S_{h+1}\sim \widehat{P}_h(\cdot\mid s,\overline{\pi}_h(s))}\left[w_{h+1}(S_{h+1})\right].
\end{equation}
Take conditional expectations along the trajectories using $(\overline{\pi},\widehat{P})$:
\[
\overline{w}_t(s)\coloneqq \mathbb{E}^{\overline{\pi},\widehat{P}}\left[w_t(S_t)\mid S_h=s\right],\quad
\overline{T}_t(s)\coloneqq \mathbb{E}^{\overline{\pi},\widehat{P}}\left[T_t(S_t)\mid S_h=s\right].
\]
Applying the recursion \eqref{eq:one_step_recursion} yields
$\overline{w}_h(s)\le \overline{T}_h(s)+\overline{w}_{h+1}(s)$, i.e., $\overline{w}_h(s)-\overline{w}_{h+1}(s)\le \overline{T}_h(s)$.
Summing from $t=h$ to $H$ produces the telescoping sum
\[
\overline{w}_h(s)\le\sum_{t=h}^{H}\overline{T}_t(s)
=2\sum_{t=h}^{H}\mathbb{E}^{\overline{\pi},\widehat{P}}\left[
\boffp{t}\left(S_t,\overline{\pi}_t(S_t)\right)\middle| S_h=s
\right].
\]
Since $w_h$ is deterministic given $s$, $\overline{w}_h(s)=w_h(s)$, giving us \eqref{eq:width_known_theo}. Now applying Lemma~\ref{lem:high_prob_Nmin} to \eqref{eq:width_known_theo} we get that on the event $\mathcal{E}^{\mathrm{min}}_\delta$, for every $(t,s')\in[H]\times\mathcal{S}$ we have
\[
N_t^{(t)}\big(s',\overline{\pi}_t(s')\big)\ge N_{\min}=\frac{1}{2}\Big\lfloor\frac{K}{H}\Big\rfloor d^{\mathsf{b}}_{\min}.
\]
Moreover, since $0\le \ohighv_{t+1}(s'),\olowv_{t+1}(s')\le H-t$ for all $s'$, the biased empirical variance satisfies
\[
\widehat{\mathrm{Var}}_{\widehat{P}^{(t)}_t(\cdot\mid s',\overline{\pi}_t(s'))}\big[\ohighv_{t+1}\big]\le \dfrac{(H-t)^2}{4},
\quad
\widehat{\mathrm{Var}}_{\widehat{P}^{(t)}_t(\cdot\mid s',\overline{\pi}_t(s'))}\big[\olowv_{t+1}\big]\le \dfrac{(H-t)^2}{4}.
\]
Plugging these two bounds in the definition of $\boffp{t}$ and using $N_t^{(t)}\big(s',\overline{\pi}_t(s')\big)\ge N_{\min}$ yields, for all $s'$,
\[
\boffp{t}\big(s',\overline{\pi}_t(s')\big)
\le
c_1(H-t)\sqrt{\frac{L_1}{N_{\min}}}+c_2\frac{(H-t)L_1}{N_{\min}}.
\]
Taking expectations in \eqref{eq:width_known_theo} and summing over $t=h,\dots,H$ gives
\[
\ohighv_h(s)-\olowv_h(s)
\le
2\sum_{t=h}^{H}\mathbb{E}^{\overline{\pi},\widehat{P}}\left[\boffp{t}\big(S_t,\overline{\pi}_t(S_t)\big)\mid S_h=s\right]
\le
2\sum_{t=h}^{H}\left[c_1(H-t)\sqrt{\frac{L_1}{N_{\min}}}+c_2\frac{(H-t)L_1}{N_{\min}}\right].
\]
Since $\sum_{t=h}^{H}(H-t)=\sum_{u=0}^{H-h}u=(H-h)(H-h+1)/2$, we arrive at the explicit bound
\begin{equation}
\label{eq:width_known_real}
\ohighv_h(s)-\olowv_h(s)
\le
2(H-h)(H-h+1)\left[
c_1\sqrt{\frac{L_1}{N_{\min}}}+c_2\frac{L_1}{N_{\min}}
\right],
\end{equation}
which holds simultaneously for all $(s,h)\in\mathcal{S}\times[H]$ on the event $\mathcal{E}^{\mathrm{Off}}_\delta\cap\mathcal{E}^{\mathrm{min}}_\delta$, provided $N_{\min}\ge 2$. Replacing $N_{\min}$ by its definition gives the fully explicit form
\[
\ohighv_h(s)-\olowv_h(s)
\le
2(H-h)(H-h+1)\left[
c_1\sqrt{\frac{2L_1}{\left\lfloor\tfrac{K}{H}\right\rfloor d^{\mathsf{b}}_{\min}}}+c_2\frac{2L_1}{\left\lfloor\tfrac{K}{H}\right\rfloor d^{\mathsf{b}}_{\min}}
\right].
\]
Assume \(K\) is divisible by \(H\), so \(\lfloor K/H \rfloor = K/H\). Using \(H(H-1) \le H^2\), the bound at \(h=1\) becomes
\[
\ohighv_1(s) - \olowv_1(s)
\le
2H^2\left[
c_1\sqrt{\frac{2HL_1}{Kd^{\mathsf{b}}_{\min}}}+c_2\frac{2HL_1}{K d^{\mathsf{b}}_{\min}}
\right].
\]
\end{proof}

\section{Preliminaries: Conditional Bernstein}
\label{sec:cond_bernstein}
The following theorem provides a rigourous proof of an empirical Bernstein inequality tailored to our setting, this is necessary as we need to control deviations using data-dependent (random) bounds in our analysis.

\begin{theorem}[Conditionally Independent Empirical Bernstein]
\label{thm:cond_empirical_bernstein_final}
Let $(\Omega,\mathcal{F},\mathbb{P})$ be a probability space and let $\mathcal{G}\subseteq\mathcal{F}$ be a sub-$\sigma$-algebra.
Let $(\mathcal{X},\mathcal{B})$ be a measurable space and let $X_1,\dots,X_n$ ($n\ge 2$) be i.i.d.\ $\mathcal{X}$-valued random variables such that $\sigma(X_1,\dots,X_n)$ is independent of $\mathcal{G}$.

Let $g:\Omega\times\mathcal{X}\to\mathbb{R}$ be $(\mathcal{G}\otimes\mathcal{B})$-measurable.
Let $\underline{g},\overline{g}:\Omega\to\mathbb{R}$ be $\mathcal{G}$-measurable, assume $\underline{g}\le \overline{g}$ almost surely, and define the (random) range length
\[
R:=\overline{g}-\underline{g}.
\]
Let $E\in\mathcal{G}$ be an event such that on $E$,
\[
\underline{g}(\omega)\le g(\omega,x)\le \overline{g}(\omega)
\qquad
\text{for all }x\in\mathcal{X}.
\]

For each $i=1,\dots,n$, define the random variable $Z_i:\Omega\to\mathbb{R}$ by
\[
Z_i(\omega):=g(\omega,X_i(\omega)).
\]
Let
\[
\overline{Z}(\omega):=\frac{1}{n}\sum_{i=1}^n Z_i(\omega),
\qquad
\widehat{\mathrm{Var}}(\mathbf Z)(\omega)
:=\frac{1}{n}\sum_{i=1}^n (Z_i(\omega)-\overline{Z}(\omega))^2,
\qquad
\mu_g(\omega):=\mathbb E[Z_1\mid \mathcal{G}](\omega).
\]
Then for any $\delta\in(0,1)$,
\[
\mathbb{P}\Bigl(
E\cap\Bigl\{
\bigl|\mu_g-\overline{Z}\bigr|
\le
2\sqrt{\frac{\widehat{\mathrm{Var}}(\mathbf Z)\ln(4/\delta)}{n}}
+
\frac{14\,R\,\ln(4/\delta)}{3n}
\Bigr\}
\Bigr)
\ge
(1-\delta)\,\mathbb{P}(E),
\]
equivalently,
\[
\mathbb{P}\Bigl(
\bigl\{
\bigl|\mu_g-\overline{Z}\bigr|
>
2\sqrt{\frac{\widehat{\mathrm{Var}}(\mathbf Z)\ln(4/\delta)}{n}}
+
\frac{14\,R\,\ln(4/\delta)}{3n}
\bigr\}\cap E
\Bigr)
\le
\delta\,\mathbb{P}(E).
\]
\end{theorem}

\begin{proof}
Fix $n\ge 2$ and $\delta\in(0,1)$. Define the bad event $\mathcal A \in \mathcal{F}$ by
\[
\mathcal A
:=
\left\{
\omega \in \Omega :
\left|\mu_g(\omega)-\overline{Z}(\omega)\right|
>
2\sqrt{\frac{\widehat{\mathrm{Var}}(\mathbf Z)(\omega)\ln(4/\delta)}{n}}
+\frac{14R(\omega)\ln(4/\delta)}{3n}
\right\}.
\]
We show $\mathbb{P}(\mathcal A\cap E)\le \delta\,\mathbb{P}(E)$.

Since $E\in\mathcal{G}$, by the tower property of conditional expectation,
\[
\mathbb{P}(\mathcal A\cap E)
=
\mathbb E[\mathbf 1_{\mathcal A}\mathbf 1_E]
=
\mathbb E\left[\mathbb E[\mathbf 1_{\mathcal A}\mid\mathcal{G}]\mathbf 1_E\right]
=
\mathbb E\left[\mathbb{P}(\mathcal A\mid\mathcal{G})\mathbf 1_E\right].
\]

Let $\mathbb{P}(\cdot \mid \mathcal{G})(\omega_0)$ denote the regular conditional probability measure on $(\Omega, \mathcal{F})$ evaluated at a fixed outcome $\omega_0$. Because it is a regular conditional probability, it is a valid probability measure for $\mathbb{P}$-almost every $\omega_0 \in \Omega$. Therefore, it suffices to prove that for $\mathbb{P}$-almost every $\omega_0 \in E$,
\[
\mathbb{P}(\mathcal A\mid\mathcal{G})(\omega_0)\le \delta.
\]

Fix such an $\omega_0\in E$ where the regular conditional probability is well-defined. Under the probability measure $\mathbb{P}(\cdot\mid\mathcal{G})(\omega_0)$, any $\mathcal{G}$-measurable random variable $H$ is almost surely equal to the constant $H(\omega_0)$. Consequently, $\underline{g}, \overline{g}$, $R$, and $\mu_g$ act as the deterministic constants $\underline{g}(\omega_0), \overline{g}(\omega_0), R(\omega_0)$, and $\mu_g(\omega_0)$.

Define the deterministic function $g_{\omega_0}:\mathcal{X}\to\mathbb{R}$ by $g_{\omega_0}(x):=g(\omega_0,x)$. 
Because $\sigma(X_1,\dots,X_n)$ is independent of $\mathcal{G}$, the joint law of $(X_1,\dots,X_n)$ under the conditional probability $\mathbb{P}(\cdot\mid\mathcal{G})(\omega_0)$ is identical to its unconditional law; in particular, $(X_i)_{i=1}^n$ remain i.i.d under $\mathbb{P}(\cdot\mid\mathcal{G})(\omega_0)$.

Since the $\mathcal{G}$-measurable first coordinate of $Z_i(\cdot) = g(\cdot, X_i(\cdot))$ is almost surely $\omega_0$ under this measure, we have the $\mathbb{P}(\cdot\mid\mathcal{G})(\omega_0)$-almost sure identity:
\[
Z_i(\omega) = g(\omega_0, X_i(\omega)) = g_{\omega_0}(X_i(\omega)).
\]
Consequently, we have that $(Z_i)_{i=1}^n = (g_{\omega_0}(X_i))_{i=1}^n$ almost surely under $\mathbb{P}(\cdot\mid\mathcal{G})(\omega_0)$ and so $(Z_i)_{i=1}^n$ are i.i.d. under $\mathbb{P}(\cdot\mid\mathcal{G})(\omega_0)$.

Also, since $\omega_0\in E$, we have the deterministic bounds
\[
\underline{g}(\omega_0)\le g_{\omega_0}(x)\le \overline{g}(\omega_0)\qquad\text{for all }x \in \mathcal{X},
\]
which implies $Z_i\in[\underline{g}(\omega_0),\overline{g}(\omega_0)]$ almost surely under $\mathbb{P}(\cdot\mid\mathcal{G})(\omega_0)$.

If $R(\omega_0)=0$, then $\underline{g}(\omega_0) = \overline{g}(\omega_0)$, meaning $Z_1=\cdots=Z_n=\mu_g(\omega_0)=\overline{Z}$ almost surely under $\mathbb{P}(\cdot\mid\mathcal{G})(\omega_0)$. In this case, the strict inequality defining $\mathcal A$ is impossible, yielding $\mathbb{P}(\mathcal A\mid\mathcal{G})(\omega_0) = 0 \le \delta$.

Assume now $R(\omega_0)>0$ and define, under the probability measure $\mathbb{P}(\cdot\mid\mathcal{G})(\omega_0)$, the rescaled random variables
\[
Y_i:=\frac{Z_i-\underline{g}(\omega_0)}{R(\omega_0)}\in[0,1],
\qquad
\overline Y:=\frac{1}{n}\sum_{i=1}^n Y_i,
\qquad
\widehat{\mathrm{Var}}(\mathbf Y):=\frac{1}{n}\sum_{i=1}^n (Y_i-\overline Y)^2.
\]
Then $(Y_i)_{i=1}^n$ are i.i.d.\ in $[0,1]$ under $\mathbb{P}(\cdot\mid\mathcal{G})(\omega_0)$, and
\[
\mathbb E_{\mathbb{P}(\cdot\mid\mathcal{G})(\omega_0)}[Y_1]
=
\frac{\mathbb E_{\mathbb{P}(\cdot\mid\mathcal{G})(\omega_0)}[Z_1]-\underline{g}(\omega_0)}{R(\omega_0)}
=
\frac{\mu_g(\omega_0)-\underline{g}(\omega_0)}{R(\omega_0)}.
\]
Furthermore, it holds algebraically that
\[
\mu_g(\omega_0)-\overline{Z}
=
R(\omega_0)\Bigl(\mathbb E_{\mathbb{P}(\cdot\mid\mathcal{G})(\omega_0)}[Y_1]-\overline Y\Bigr),
\qquad
\widehat{\mathrm{Var}}(\mathbf Z)
=
R(\omega_0)^2\,\widehat{\mathrm{Var}}(\mathbf Y).
\]

Applying Theorem~\ref{thm:empirical_bernstein} to $(Y_i)_{i=1}^n$ with parameter $\delta$, under the proper probability measure $\mathbb{P}(\cdot\mid\mathcal{G})(\omega_0)$, we obtain
\[
\mathbb{P}\left(
\left|
\mathbb E_{\mathbb{P}(\cdot\mid\mathcal{G})(\omega_0)}[Y_1]-\overline Y
\right|
>
2\sqrt{\frac{\widehat{\mathrm{Var}}(\mathbf Y)\ln(4/\delta)}{n}}
+
\frac{14\ln(4/\delta)}{3n}
\ \middle|\ \mathcal{G}
\right)(\omega_0)
\le
\delta.
\]
Multiplying the inequality inside the absolute value by the constant $R(\omega_0)$, the event in the probability precisely reconstructs the definition of $\mathcal A$ evaluated at $\omega_0$.
Therefore, for $\mathbb{P}$-almost every $\omega_0\in E$,
\[
\mathbb{P}(\mathcal A\mid\mathcal{G})(\omega_0)\le \delta.
\]

Finally, substituting this back into the unconditional expectation,
\[
\mathbb{P}(\mathcal A\cap E)
=
\mathbb E\left[\mathbb{P}(\mathcal A\mid\mathcal{G})\mathbf 1_E\right]
\le
\mathbb E[\delta\,\mathbf 1_E]
=
\delta\,\mathbb{P}(E),
\]
which is the desired conclusion.
\end{proof}

\section{V-shaping: Proof of Theorem \ref{thm:v_shaping_regret}}

\subsection{Preliminary Lemmas}

\begin{lemma}[Online Bonus Validity]
\label{lem:on_bonus_validity}
Define :
\begin{equation}\label{eq:def_Eon}
\mathcal{E}^{\mathrm{On}}_\delta=\{(t,s,a,h) \in [T]\times\mathcal{S} \times \mathcal{A} \times [H]:\eqref{eq:on_bonus_validity}\,\mathrm{holds}\}.
\end{equation}
where for every $t, h, s, a$, \eqref{eq:on_bonus_validity} is
\begin{equation}\label{eq:on_bonus_validity}
| \left \langle P^{\star}_{h,s,a}-\widehat{P}_{h,s,a}^{t},V_{h+1}^{\star} \right \rangle| \le \bont(s,a) \quad \text{and} \quad | \left \langle P_{h,s,a}^{\star}-\widehat{P}_{h,s,a}^{t}, \highv_{h+1} \right \rangle| \le \bont(s,a).
\end{equation}
Then we have that
\[ \mathbb{P}(\mathcal{E}^{\mathrm{Off}}_\delta \cap \mathcal{E}^{\mathrm{On}}_\delta) \ge 1 - 2\delta.\]
\end{lemma}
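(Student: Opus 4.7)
The plan is to bound each event separately and then apply a union bound: the offline event is already handled by Lemma~\ref{lem:off_bonus_validity}, which gives $\mathbb{P}(\mathcal{E}^{\mathrm{Off}}_\delta) \ge 1-\delta$, so the task reduces to establishing $\mathbb{P}(\mathcal{E}^{\mathrm{On}}_\delta) \ge 1-\delta$ via a conditional empirical Bernstein argument. First, I would condition on $\mathcal{G}_K$: this fixes $M_{h+1}$, $D_{h+1}$, $R_{h+1}$, $\highv_{h+1}$, $\lowv_{h+1}$ as deterministic functions, and $V^\star_{h+1}$ is in any case an intrinsic property of the MDP. Second, by the standard argument in online tabular RL --- the $n$-th visit to $(s,a)$ at step $h$ yields a next-state distributed as $P^\star_h(\cdot\mid s,a)$ independently of all previously observed data, since the MDP transitions do not depend on the policy --- the online next-states at a fixed tuple form, conditional on $\mathcal{G}_K$, an i.i.d.\ sequence from $P^\star_h(\cdot\mid s,a)$.

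For each fixed $(s,a,h,n)$ with $n \ge 2$ and each target $f \in \{V^\star_{h+1}, \highv_{h+1}\}$, I would apply Theorem~\ref{thm:empirical_bernstein} to the rescaled samples $Z_i = (f(S'_i) - \min_{s'}\lowv_{h+1}(s'))/R_{h+1}$, which lie in $[0,1]$ on $\mathcal{E}^{\mathrm{Off}}_\delta$ since $\lowv_{h+1} \le V^\star_{h+1} \le \highv_{h+1}$ pointwise and both candidates take values in an interval of length $R_{h+1}$. Rescaling back by $R_{h+1}$ yields, with probability at least $1-\delta'$,
\[
\bigl|\langle P^\star_{h,s,a} - \widehat{P}^t_{h,s,a},\, f \rangle\bigr|
\le
2\sqrt{\frac{\widehat{\mathrm{Var}}_{\widehat{P}^t_h(\cdot|s,a)}[f]\,\ln(4/\delta')}{N_h^t(s,a)}}
+ \frac{14\,R_{h+1}\ln(4/\delta')}{3\,N_h^t(s,a)}.
\]
I would then control both empirical standard deviations by $\sigma_{h+1}^t(s,a)$ via the decompositions $V^\star_{h+1} = M_{h+1} + \epsilon$ with $|\epsilon| \le D_{h+1}/2$ and $\highv_{h+1} = M_{h+1} + D_{h+1}/2$, combining the triangle inequality $\sqrt{\mathrm{Var}(X+Y)} \le \sqrt{\mathrm{Var}(X)} + \sqrt{\mathrm{Var}(Y)}$ applied to the empirical measure $\widehat{P}^t_h(\cdot|s,a)$ with the crude bounds $\mathrm{Var}(\epsilon) \le \mathbb{E}_{\widehat{P}^t_h}[\epsilon^2] \le \mathbb{E}_{\widehat{P}^t_h}[D_{h+1}^2]/4$ and $\mathrm{Var}(D_{h+1}/2) \le \mathbb{E}_{\widehat{P}^t_h}[D_{h+1}^2]/4$; both choices of $f$ then satisfy $\sqrt{\widehat{\mathrm{Var}}_{\widehat{P}^t_h(\cdot|s,a)}[f]} \le \sigma_{h+1}^t(s,a)$, so the right-hand side above is upper bounded by $\bont(s,a)$.

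Finally, setting $\delta' = \delta/(2|\mathcal{S}||\mathcal{A}|HT)$ makes $\ln(4/\delta') = L$, and a union bound over the two targets, the tuples $(s,a,h) \in \mathcal{S}\times\mathcal{A}\times[H]$, and the possible visit counts $n \in [T]$ yields conditional failure probability at most $\delta$ given $\mathcal{G}_K$, which integrates to $\mathbb{P}(\mathcal{E}^{\mathrm{On}}_\delta) \ge 1-\delta$. The edge case $N_h^t(s,a) \in \{0,1\}$, where $\bont(s,a) = R_{h+1}$ by convention, is handled trivially: both $V^\star_{h+1}$ and $\highv_{h+1}$ take values in an interval of length $R_{h+1}$ on $\mathcal{E}^{\mathrm{Off}}_\delta$, so $|\langle P^\star - \widehat{P}^t, f\rangle| \le R_{h+1}$ without any concentration argument. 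A last union bound with Lemma~\ref{lem:off_bonus_validity} closes the proof at total failure $2\delta$. The main obstacle is formalizing the conditional i.i.d.\ structure and correctly executing the union bound over the data-dependent count $N_h^t(s,a)$ --- a standard fix is to pass to a latent i.i.d.\ array indexed by visit number and apply Bernstein to its first $n$ entries for each $n \in [T]$ --- once this is in place, the variance-decomposition step and the union bounds are routine.
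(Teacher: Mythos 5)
Your overall route matches the paper's: couple the online transitions to a latent i.i.d.\ array indexed by visit number, rescale by $R_{h+1}$, apply the empirical Bernstein inequality for each of the $2|\mathcal{S}||\mathcal{A}|HT$ (function, tuple, count) combinations with $\delta'=\delta/(2|\mathcal{S}||\mathcal{A}|HT)$, control the empirical standard deviation by $\sigma_{h+1}^t$ via the Minkowski decomposition around $M_{h+1}$, and dispose of the $N_h^t(s,a)\in\{0,1\}$ case by the $R_{h+1}$ truncation. All of that is sound and is essentially what the paper does.

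There is, however, one genuine gap in your accounting, and it is precisely the subtlety this lemma exists to handle. You reduce the problem to proving $\mathbb{P}(\mathcal{E}^{\mathrm{On}}_\delta)\ge 1-\delta$ \emph{unconditionally} and then union-bounding with $\mathcal{E}^{\mathrm{Off}}_\delta$. But your own Bernstein application requires $Z_i=(f(S_i')-\min_{s'}\lowv_{h+1}(s'))/R_{h+1}\in[0,1]$, and as you note this containment only holds \emph{on} $\mathcal{E}^{\mathrm{Off}}_\delta$ (for $f=V^\star_{h+1}$ it is exactly the sandwich property). Off that event the hypothesis of Theorem~\ref{thm:empirical_bernstein} fails, and indeed the conclusion can fail: if the offline phase returns degenerate envelopes (say $\highv_{h+1}=\lowv_{h+1}\equiv 0$, so $R_{h+1}=0$ and $\bont\equiv 0$), the inequality $|\langle P^\star-\widehat P^t,V^\star_{h+1}\rangle|\le \bont$ is violated with high probability whenever $V^\star_{h+1}$ is non-constant. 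So $\mathbb{P}(\mathcal{E}^{\mathrm{On}}_\delta)\ge 1-\delta$ is not provable as a standalone marginal statement; what is provable is a bound on $\mathbb{P}(\mathcal{E}^{\mathrm{On}}_\delta\mid\mathcal{E}^{\mathrm{Off}}_\delta)$, or equivalently on the intersection directly. The paper resolves this by introducing the normalized conditional law $f_\omega(A)=\mathbb{P}(\mathcal{E}^{\mathrm{Off}}_\delta)^{-1}\mathbb{E}[\mathbf{1}_A\mid\mathcal{G}_K](\omega)$ on the trace $\sigma$-algebra of $\mathcal{E}^{\mathrm{Off}}_\delta$, under which the $Z_i$ are genuinely i.i.d.\ in $[0,1]$, and then integrating to get $\mathbb{P}(\mathcal{E}^{\mathrm{On}}_\delta\cap\mathcal{E}^{\mathrm{Off}}_\delta)\ge(1-\delta)\,\mathbb{P}(\mathcal{E}^{\mathrm{Off}}_\delta)\ge(1-\delta)^2\ge 1-2\delta$. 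Restructure your final step accordingly: condition on (or restrict to) the offline event before invoking Bernstein, rather than bounding the two events' marginals separately; the rest of your argument then goes through unchanged.
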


\begin{proof}
Define $(X_{k,h}^{s,a})_{(s,a)\in\mathcal S\times\mathcal A,\ h\in[H],\ k\in[T]}$ taking values in $\mathcal S$ and independent across $s,a,h,k$, such as
  \[
  \mathbb{P}(X_{k,h}^{s,a}=s')=P_h(s'\mid s,a)\ \text{ for all }s'\in\mathcal S.
  \]
Our assumption on the model is that the offline dataset is independent of the array $(X_{k,h}^{s,a})$, and that the online dynamics are \emph{coupled} to the pre-generated transitions by the following protocol:
\[
\textit{Whenever $(s,a)$ is visited at step $h$ for the $k$-th time during the online phase, then the next state is } X_{k,h}^{s,a}.
\]
For $(s,a,h)$ and $t\in[T]$, let
\[
\mathcal T_h^t(s,a)=\{t'\le t:\ S_h^{t'}=s,\ A_h^{t'}=a\},\qquad
N_h^t(s,a)=|\mathcal T_h^t(s,a)|.
\]
Order $\mathcal T_h^t(s,a)$ as $t'_1<\dots<t'_{N_h^t(s,a)}$ and write $S_i'=S_{h+1}^{t'_i}$. By the coupling,
\[
S_i'=X_{i,h}^{s,a}\qquad\text{for }i=1,\dots,N_h^t(s,a).
\]
The empirical kernel based on the first $n$ i.i.d. transitions at step $h$ is
\[
\widehat P_h^{[n]}(s'\mid s,a)=\frac1n\sum_{k=1}^n\mathbf 1_{\{X_{k,h}^{s,a}=s'\}},
\]
and the online empirical kernel is
\[
\widehat P_h^{t}(s'\mid s,a)=\frac{1}{N_h^t(s,a)}\sum_{t'\in\mathcal T_h^t(s,a)}\mathbf 1_{\{S_{h+1}^{t'}=s'\}}.
\]
By the coupling, for $n=N_h^t(s,a)\ge1$,
\[
\widehat P_h^{t}(\cdot\mid s,a)=\widehat P_h^{[n]}(\cdot\mid s,a).
\]

Fix $(h,s,a)$ and $n \in [T]$ with $n \ge 2$. We will apply  Theorem~\ref{thm:cond_empirical_bernstein_final} to two distinct choices for the target function $g: \Omega \times \mathcal{S} \to \mathbb{R}$:
\begin{enumerate}
    \item $g(\omega, s') = V_{h+1}^\star(s')$ (which is deterministic and thus $\mathcal{G}_K$-measurable).
    \item $g(\omega, s') = \highv_{h+1}(\omega, s')$ (which is a random variable computed from the offline dataset).
\end{enumerate}
We use $\mathcal{G} = \mathcal{G}_K$ and the event $E = \mathcal{E}^{\mathrm{Off}}_\delta$.
We use the coupled next-state variables $X_{1,h}^{s,a}, \dots, X_{n,h}^{s,a}$ as the sample $X_1, \dots, X_n$.
We define $\underline{g} \coloneqq \min_{z} \lowv_{h+1}(z)$ and $\overline{g} \coloneqq \max_{z} \highv_{h+1}(z)$, yielding range $R_{h+1} = \overline{g} - \underline{g}$.
We define $Z_i \coloneqq g(\omega, X_{i,h}^{s,a}())$. This establishes the direct parallel to the theorem's result terms:
    \begin{align*}
        \overline{Z}(\omega) &= \frac{1}{n}\sum_{i=1}^n g(\omega, X_{i,h}^{s,a}(\omega)) = \langle \widehat{P}_h^{[n]}(\cdot\mid s,a)(\omega), g(\omega,\cdot) \rangle, \\
        \mu_g(\omega) &= \langle P_{h,s,a}^\star(\cdot), g(\omega,\cdot) \rangle, \\
        \widehat{\mathrm{Var}}(\mathbf{Z}) &= \frac{1}{n}\sum_{i=1}^n (Z_i - \overline{Z})^2 = \widehat{\mathrm{Var}}_{\widehat{P}_h^{[n]}(\cdot\mid s,a)}[g]
    \end{align*}

Applying Theorem~\ref{thm:cond_empirical_bernstein_final} with confidence level $\delta'$, we substitute these equivalences directly into the concentration bound. We obtain that for each choice of $g$:
\[
\mathbb{P}\left(
\mathcal E_\delta^{\mathrm{Off}} \cap \left\{
\left| \langle P_{h,s,a}^\star-\widehat P_h^{[n]}(\cdot\mid s,a), g \rangle \right|
\le
2\sqrt{\frac{\widehat{\mathrm{Var}}_{\widehat P_h^{[n]}(\cdot\mid s,a)}[g]\ln(4/\delta')}{n}}
+
\frac{14 R_{h+1}\ln(4/\delta')}{3n}
\right\}
\right)
\ge
(1-\delta')\mathbb{P}(\mathcal E_\delta^{\mathrm{Off}}).
\]
Furthermore, on the event $\mathcal E_\delta^{\mathrm{Off}}$, the values of $g$ are confined to an interval of length $R_{h+1}$, which implies the deterministic bound $| \langle P_{h,s,a}^\star-\widehat P_h^{[n]}, g \rangle | \le R_{h+1}$.

The next step is to bound the biased variance $\widehat{\mathrm{Var}}_{\widehat{P}^{t}_h(\cdot|s,a)}[g]$.
Recall 
\[ M_h = \frac{1}{2}(\highv_{h+1} + \lowv_{h+1}), \quad D_h = \highv_{h+1} - \lowv_{h+1}, \]
the fact that $\lowv_{h+1} \le g \le \highv_{h+1}$ implies that $|g - M_{h+1}| \le \dfrac{D_{h+1}}{2}$. Therefore, decomposing 
\[ g = M_{h+1} + (g - M_{h+1}) \]
and applying the Minkowski inequality to variances we get
\begin{align*}
\sqrt{\widehat{\mathrm{Var}}_{\widehat{P}^{t}_h(\cdot|s,a)}[g]} 
    & \leq 
\sqrt{\widehat{\mathrm{Var}}_{\widehat{P}^{t}_h(\cdot|s,a)}[M_{h+1}]} + \sqrt{\widehat{\mathrm{Var}}_{\widehat{P}^{t}_h(\cdot|s,a)}[g - M_{h+1}]} \\
    & \leq \sqrt{\widehat{\mathrm{Var}}_{\widehat{P}^{t}_h(\cdot|s,a)}[M_{h+1}]} + \sqrt{\widehat{\mathbb{E}}_{\widehat{P}^{t}_h(\cdot|s,a)}[(g - M_{h+1})^2]} \\
    & \leq 
\sqrt{\widehat{\mathrm{Var}}_{\widehat{P}^{t}_h(\cdot|s,a)}[M_{h+1}]} + \frac{1}{2}\sqrt{\widehat{\mathbb{E}}_{\widehat{P}^{t}_h(\cdot|s,a)}[D_{h+1}^2]} \\
    & = \sigma_{h+1}^t(s,a).
\end{align*}

Combining with the previous step and taking the minimum with $R_{h+1}$ yields exactly the online bonus form $\bont(s,a)$ used in \eqref{eq:on_bonus_validity}.

Choose $\delta' = \delta/(2|\mathcal S||\mathcal A|HT)$ and take a union bound over all $h \in [H]$, $(s,a)\in\mathcal S\times\mathcal A$, $n \in [T]$, and both choices $g \in \{V_{h+1}^\star,\highv_{h+1}\}$.
Since Theorem \ref{thm:cond_empirical_bernstein_final} controls each deviation event on $\mathcal E_\delta^{\mathrm{Off}}$ with conditional failure probability at most $\delta'$, we obtain
\[
\mathbb{P}\bigl(\mathcal E_\delta^{\mathrm{Off}} \cap \mathcal E_\delta^{\mathrm{On}}\bigr)
\ge
(1-\delta)\mathbb{P}(\mathcal E_\delta^{\mathrm{Off}}).
\]
Finally, Lemma \ref{lem:off_bonus_validity} gives $\mathbb{P}(\mathcal E_\delta^{\mathrm{Off}})\ge 1-\delta$, hence
\[
\mathbb{P}\bigl(\mathcal E_\delta^{\mathrm{Off}} \cap \mathcal E_\delta^{\mathrm{On}}\bigr)
\ge
(1-\delta)^2
\ge
1-2\delta.
\]
\end{proof}

\begin{lemma}[Total Ordering]
\label{lemma:total_ordering}
On the event $\mathcal{E}^{\mathrm{Off}}_\delta\cap \mathcal{E}^{\mathrm{On}}_\delta$, for all $(t,h,s,a)$,
\[
\lowv_{h}(s) \le V_h^\star(s) \le \widehat{V}_h^t(s) \le \highv_{h}(s),
\quad
Q_h^\star(s,a) \le \widehat{Q}_h^t(s,a)
 \quad \text{and } \quad
\widehat{Q}_h^t(s,a) \le Q_h^{\highv}(s,a)+2b_h^{t}(s,a).
\]
\end{lemma}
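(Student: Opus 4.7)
The plan is to proceed by backward induction on the layer index $h$, starting from $h = H+1$ where all four value functions vanish so every inequality is trivial, and descending to $h = 1$. At each induction step I would use the hypothesis $\lowv_{h+1}(s') \le V_{h+1}^\star(s') \le \widehat{V}_{h+1}^t(s') \le \highv_{h+1}(s')$ together with the two bonus-validity statements of Lemma~\ref{lem:on_bonus_validity} to derive the four claims at layer $h$. The envelope inequalities $\lowv_h(s) \le V_h^\star(s) \le \highv_h(s)$ come for free from Assumption~\ref{assumption:sandwich} on $\mathcal{E}^{\mathrm{Off}}_\delta$ and require no induction.

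First I would establish the upper bound $\widehat{Q}_h^t(s,a) \le Q_h^{\highv}(s,a) + 2 b_h^t(s,a)$ via the decomposition
\[
\widehat{Q}_h^t(s,a) - Q_h^{\highv}(s,a) = \langle \widehat{P}_{h,s,a}^t - P_{h,s,a}^\star,\, \highv_{h+1} \rangle + \langle \widehat{P}_{h,s,a}^t,\, \widehat{V}_{h+1}^t - \highv_{h+1} \rangle + b_h^t(s,a).
\]
On $\mathcal{E}^{\mathrm{On}}_\delta$ the first term is bounded above by $b_h^t(s,a)$ (this is precisely why the bonus was designed to control deviations against $\highv_{h+1}$), the second is nonpositive by the induction hypothesis $\widehat{V}_{h+1}^t \le \highv_{h+1}$, and the last is $b_h^t(s,a)$, giving the claimed $2b_h^t(s,a)$.

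Next I would establish optimism for $Q^\star$. The analogous decomposition
\[
\widehat{Q}_h^t(s,a) - Q_h^\star(s,a) = \langle \widehat{P}_{h,s,a}^t - P_{h,s,a}^\star,\, V_{h+1}^\star \rangle + \langle \widehat{P}_{h,s,a}^t,\, \widehat{V}_{h+1}^t - V_{h+1}^\star \rangle + b_h^t(s,a)
\]
has first term $\ge -b_h^t(s,a)$ under $\mathcal{E}^{\mathrm{On}}_\delta$, second term $\ge 0$ by induction (and nonnegativity of $\widehat{P}_{h,s,a}^t$), and these cancel with the bonus. Taking the $\max$ over $a$ and using $Q_h^\star(s,\pi^\star_h(s)) = V_h^\star(s)$ yields $\max_a \widehat{Q}_h^t(s,a) \ge V_h^\star(s)$. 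Combined with $\highv_h(s) \ge V_h^\star(s)$ from Assumption~\ref{assumption:sandwich}, the clipped value $\widehat{V}_h^t(s) = \min\{\max_a \widehat{Q}_h^t(s,a),\, \highv_h(s)\}$ satisfies $\widehat{V}_h^t(s) \ge V_h^\star(s)$, while the upper bound $\widehat{V}_h^t(s) \le \highv_h(s)$ is immediate from the outer clipping. This closes the induction.

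There is no serious obstacle: the delicate point is merely to match the correct bonus-validity statement to the correct term of each decomposition. The bonus $b_h^t$ must simultaneously control $|\langle \widehat{P}_{h,s,a}^t - P_{h,s,a}^\star, V_{h+1}^\star \rangle|$ (needed for optimism on $Q^\star$) and $|\langle \widehat{P}_{h,s,a}^t - P_{h,s,a}^\star, \highv_{h+1} \rangle|$ (needed for the upper bound against $Q_h^{\highv}$), which is exactly what Lemma~\ref{lem:on_bonus_validity} delivers on the intersection $\mathcal{E}^{\mathrm{Off}}_\delta \cap \mathcal{E}^{\mathrm{On}}_\delta$. A minor care-point is that $b_h^t \ge 0$, so the third term in each decomposition contributes with the correct sign.
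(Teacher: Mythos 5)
Your proposal is correct and follows essentially the same route as the paper: backward induction using the two bonus-validity bounds of Lemma~\ref{lem:on_bonus_validity}, with the same decompositions of $\widehat{Q}_h^t - Q_h^\star$ and $\widehat{Q}_h^t - Q_h^{\highv}$ and the same use of the clipping step and Assumption~\ref{assumption:sandwich} to conclude $V_h^\star \le \widehat{V}_h^t \le \highv_h$. The only cosmetic difference is that you fold $\widehat{V}_{h+1}^t \le \highv_{h+1}$ into the induction hypothesis, whereas the paper obtains it directly from the clipping definition; both are valid.
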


\begin{proof}
    Suppose $\mathcal{E}^{\mathrm{Off}}_\delta\cap \mathcal{E}^{\mathrm{On}}_\delta$, by backwards induction, with base case $V_{H+1}^\star \le \widehat{V}_{H+1}^t = 0$, supposing that for a fixed $h+1$ we have that 
    \[
    \forall s \in \mathcal{S} \quad V_{h+1}^\star(s) \le \widehat{V}_{h+1}^t(s), 
    \]
    then 
    \begin{align*}
        \widehat{Q}^t_h(s,a) - Q^{\star}_h(s,a) & = r_h(s,a) + \bont(s,a) + \langle \widehat{P}^t_h, \widehat{V}^t_{h+1} \rangle - r_h(s,a) - \langle P^{\star}_h, V^{\star}_{h+1} \rangle \\
        & \geq  \bont(s,a) + \langle \widehat{P}^t_h, V^{\star}_{h+1}\rangle - \langle P^{\star}_h, V^{\star}_{h+1} \rangle \\
        & \geq \bont(s,a) +\langle \widehat{P}^t_h - P^{\star}_h, V^{\star}_{h+1}\rangle \\
        & \geq 0,
    \end{align*}
    where the first inequality is the induction hypothesis and the last is by Lemma~\ref{lem:on_bonus_validity}. Taking the $\argmax$ over actions gives us 
    \[ V^{\star}_h(s) \leq \max_a \widehat{Q}^t_h(s,a).\]
    Furthermore as we are on $\mathcal{E}^{\mathrm{Off}}_\delta\cap \mathcal{E}^{\mathrm{On}}_\delta$ we have 
    \[ V^{\star}_h(s) \leq \highv_{h}(s),\]
    and so 
    \[ V^{\star}_h(s) \leq  \min \left \lbrace \highv_{h}(s),\max_a  
    \widehat{Q}^t_h(s,a)\right \rbrace = \widehat{V}^t_h(s)\]
    Rolling out the induction gives us for all $(s,a,h,t)$:
    \[
    V_h^\star(s) \le \widehat{V}_h^t(s) \quad \text{and} \quad Q_h^\star(s,a) \le \widehat{Q}_h^t(s,a)
    \]
Let us prove: 
\[
\widehat{Q}_h^t(s,a) \le Q_h^{\highv}(s,a)+2 \bont(s,a).
\] 
Furthermore, by definition of $\widehat{Q}_h^t(s,a)$ and $\widehat{V}^t_{h+1}$,
\begin{align*}
    \widehat{Q}_h^t(s,a) & = r_h(s,a) + \bont(s,a) + \langle \widehat{P}^t_h , \widehat{V}^t_{h+1} \rangle \\
    & \leq  r_h(s,a) + \bont(s,a) + \langle \widehat{P}^t_h , \highv_{h+1} \rangle.
\end{align*}
Using 
\[\langle \widehat{P}^t_h , \highv_{h+1} \rangle = \langle P^{\star}_h , \highv_{h+1} \rangle + \langle \widehat{P}^t_h - P^{\star}_h , \highv_{h+1}, \rangle\] and Lemma~\ref{lem:on_bonus_validity} and the definition of $Q_h^{\highv}$
\begin{align*}
    \widehat{Q}_h^t(s,a) &= r_h(s,a) + \bont(s,a) + \langle P^{\star}_h , \highv_{h+1} \rangle + \langle \widehat{P}^t_h - P^{\star}_h , \highv_{h+1} \rangle \\
    &\leq r_h(s,a) + 2 \bont(s,a) + \langle P^{\star}_h , \highv_{h+1} \rangle \\
    &= 2 \bont(s,a) + Q_h^{\highv}(s,a),
\end{align*}
which concludes the proof.
\end{proof}

\begin{proposition}
The bonus term \( \bont(s,a) \) satisfies
\begin{align}
\bont(s,a) \le 
c_1\left(\dfrac12R_{h+1} + \dfrac12 D^{\max}_{h+1}\right)\sqrt{\frac{L}{N_h^t(s,a)}}
 + 
c_2\frac{R_{h+1}L}{N_h^t(s,a)}.
\label{eq:bonus_bound}
\end{align}
\end{proposition}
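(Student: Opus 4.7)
The plan is to observe that $\bont(s,a)$ is defined as a minimum, so it is upper bounded by the first argument of the min. Hence it suffices to control $\sigma_{h+1}^t(s,a)$ and show
\[
\sigma_{h+1}^t(s,a) \le \tfrac12 R_{h+1} + \tfrac12 D^{\max}_{h+1},
\]
since plugging this into the first branch of the min recovers the claimed inequality term by term (the $c_2$ part is identical in both expressions). The proof therefore reduces to separately bounding the two summands in the definition of $\sigma_{h+1}^t$.

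First I would bound the variance term $\sqrt{\mathrm{Var}_{s'\sim\widehat{P}_h^{t}(\cdot|s,a)}(M_{h+1}(s'))}$. The key observation is that by definition $M_{h+1}(s') = \tfrac12(\highv_{h+1}(s')+\lowv_{h+1}(s'))$ lies between $\lowv_{h+1}(s')$ and $\highv_{h+1}(s')$, and hence for every $s'$
\[
\min_{s''}\lowv_{h+1}(s'') \le M_{h+1}(s') \le \max_{s''}\highv_{h+1}(s'').
\]
Thus $M_{h+1}$ is a random variable supported in an interval of length $R_{h+1}$. The classical Popoviciu/bounded-variable inequality $\mathrm{Var}(X)\le (b-a)^2/4$ for $X\in[a,b]$ gives
\[
\sqrt{\mathrm{Var}_{s'\sim\widehat{P}_h^{t}(\cdot|s,a)}(M_{h+1}(s'))} \le \tfrac12 R_{h+1}.
\]

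Next I would bound the second summand $\tfrac12\sqrt{\mathbb{E}_{s'\sim\widehat{P}_h^{t}(\cdot|s,a)}[D_{h+1}(s')^2]}$. Using the pointwise inequality $D_{h+1}(s') \le D^{\max}_{h+1}$ from \eqref{eq:def_width_max} directly yields
\[
\tfrac12\sqrt{\mathbb{E}_{s'\sim\widehat{P}_h^{t}(\cdot|s,a)}[D_{h+1}(s')^2]} \le \tfrac12 D^{\max}_{h+1}.
\]
Adding the two displays gives the desired bound on $\sigma_{h+1}^t(s,a)$, and multiplying by $c_1\sqrt{L/N_h^t(s,a)}$ before adding the (unchanged) $c_2$ term concludes the argument. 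There is no serious obstacle here: the proposition is essentially an unpacking of the definition of $\sigma_{h+1}^t$ together with two elementary inequalities (Popoviciu for the variance term and a pointwise $\ell^\infty$ bound for the expected-square term), which is precisely the form needed later in the martingale step of the regret proof to expose both $R^{\max}$ and $D^{\max}$ as uniform bounds.
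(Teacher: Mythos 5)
Your proposal is correct and follows essentially the same route as the paper: drop the min, bound $\sqrt{\mathrm{Var}(M_{h+1})}\le\tfrac12 R_{h+1}$ via Popoviciu on the interval $[\min_{s'}\lowv_{h+1}(s'),\max_{s'}\highv_{h+1}(s')]$, and bound $\tfrac12\sqrt{\mathbb{E}[D_{h+1}^2]}\le\tfrac12 D^{\max}_{h+1}$ pointwise. If anything your write-up is slightly more explicit than the paper's, which stops at $\sigma_{h+1}^t\le R_{h+1}$ without restating the claimed display; like the paper, you leave the trivial $N_h^t(s,a)\in\{0,1\}$ case (where $\bont=R_{h+1}\le c_2R_{h+1}L$) implicit.
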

\begin{proof}
Using the fact 
\[\forall s \in \mathcal{S}, \quad\min_{s'} \lowv_{h+1}(s') \leq M_{h+1}(s) \leq \max_{s'} \highv_{h+1}(s'),\]
we get by Popoviciu's inequality on $M_{h+1}$ that
\[
\sqrt{\mathrm{Var}(M_{h+1})}\le \dfrac12 R_{h+1}.
\]
Furthermore,
$$\dfrac12\sqrt{\mathbb{E}[D_{h+1}^2]}\le \dfrac12 D^{\max}_{h+1} \leq R_{h+1},$$ 
resulting in  
$$\sigma_{h+1}^t(s,a)\le R_{h+1}.$$
\end{proof}

\begin{proposition}
\label{prop:max_vis}
Define the pairwise threshold
\begin{equation}\label{eq:def_pairwise_threshold}
n_\Delta(h) \coloneqq \min \bigg\{n\in\mathbb{N}: c_1R_{h+1}\sqrt{\frac{L}{n}}+c_2 R_{h+1}\frac{L}{n} \le \frac{\Delta}{4} \bigg\}.
\end{equation}
Then, on $\mathcal{E}^{\mathrm{Off}}_\delta\cap \mathcal{E}^{\mathrm{On}}_\delta$, for every $(s,a,h)\in\PS$,
\begin{align}
    N_h^T(s,a) \le n_\Delta(h) \le 64 c_1^2 (1+R_{h+1})^2 \dfrac{L}{\Delta^2} 
    \label{eq:total_visitation_pseudosub}
\end{align}
\end{proposition}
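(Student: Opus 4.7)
The plan is to show that once a pseudo-suboptimal pair $(s,a,h)\in\PS$ has accumulated enough visits, its shaped bonus drops below the threshold required to justify further greedy selection, and so the visit count saturates. The argument proceeds in three short steps.

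\textbf{Step 1 (bonus lower bound at a visit).} Work on $\mathcal{E}^{\mathrm{Off}}_\delta\cap\mathcal{E}^{\mathrm{On}}_\delta$. Whenever $s_h^t=s$ and $a_h^t=a$, the action $a$ is greedy for $\widehat{Q}_h^t(s,\cdot)$. Lemma~\ref{lemma:total_ordering} gives $V_h^\star(s)\le\widehat{V}_h^t(s)$, and because $V_h^\star(s)\le \highv_h(s)$ (also from the same lemma) the $V$-clip $\widehat{V}_h^t(s)=\min\{\max_{a'}\widehat{Q}_h^t(s,a'),\highv_h(s)\}$ cannot push the value below $V_h^\star(s)$; hence $V_h^\star(s)\le \widehat{Q}_h^t(s,a)$. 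The upper arm of Lemma~\ref{lemma:total_ordering} gives $\widehat{Q}_h^t(s,a)\le Q_h^{\highv}(s,a)+2\bont(s,a)$. Combining with the defining inequality $Q_h^{\highv}(s,a)\le V_h^\star(s)-\Delta$ of $\PS$ yields $\bont(s,a)\ge \Delta/2$.

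\textbf{Step 2 (saturation via the uniform bonus bound).} Since $\sigma_{h+1}^t(s,a)\le R_{h+1}$ (shown en route to \eqref{eq:bonus_bound} via Popoviciu's inequality on $M_{h+1}$ together with $\tfrac12\sqrt{\mathbb E[D_{h+1}^2]}\le R_{h+1}$), the definition \eqref{eq:def_bonus_online} yields
\[
\bont(s,a)\le c_1 R_{h+1}\sqrt{\frac{L}{N_h^t(s,a)}}+c_2\frac{R_{h+1}L}{N_h^t(s,a)}.
\]
If $N_h^t(s,a)\ge n_\Delta(h)$, the right-hand side is at most $\Delta/4<\Delta/2$ by \eqref{eq:def_pairwise_threshold}, contradicting Step~1. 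Thus every visit has pre-visit count strictly less than $n_\Delta(h)$, and since each visit increments the count by exactly one, $N_h^T(s,a)\le n_\Delta(h)$.

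\textbf{Step 3 (closed-form upper bound on $n_\Delta(h)$).} Plugging $n^\star=64c_1^2(1+R_{h+1})^2 L/\Delta^2$ into the threshold sum, the first term equals $R_{h+1}\Delta/(8(1+R_{h+1}))\le \Delta/8$, and for $c_1=2$, $c_2=14/3$, and $\Delta\le 1$ the second term is likewise at most $\Delta/8$, so the sum is at most $\Delta/4$ and $n_\Delta(h)\le n^\star$. The only delicate step is Step~1: one must carefully track how the $V$-clip interacts with $V_h^\star\le \highv_h$ to guarantee that the greedy action satisfies $\widehat{Q}_h^t(s,a)\ge V_h^\star(s)$; everything downstream is algebraic bookkeeping.
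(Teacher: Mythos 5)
Your Steps 1 and 2 follow essentially the same route as the paper: Step 1 is the contrapositive of the paper's argument (the paper shows that if $2\bont(s,a)\le\Delta/2$ then the greedy rule prefers $\pi_h^\star(s)$ over $a$, whereas you show that a visit forces $\bont(s,a)\ge\Delta/2$), both resting on the same two inequalities from Lemma~\ref{lemma:total_ordering}, and Step 2 uses the identical uniform bonus bound $\sigma_{h+1}^t\le R_{h+1}$ and the monotonicity of $n\mapsto c_1R_{h+1}\sqrt{L/n}+c_2R_{h+1}L/n$ to cap the visit count at $n_\Delta(h)$. That part is correct.

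The one genuine gap is in Step 3, and it is not the step you flag as delicate. To bound the second bonus term you invoke the hypothesis $\Delta\le 1$, which appears nowhere in the proposition: $\Delta>0$ is a free parameter of $\PS$, and the statement must hold for every $(s,a,h)\in\PS$ without that restriction. The difficulty is structural: the term $c_2R_{h+1}L/n$ falls below $\Delta/8$ only once $n\gtrsim R_{h+1}L/\Delta$, which scales as $1/\Delta$, and some additional fact is needed to absorb this into a bound scaling as $(1+R_{h+1})^2/\Delta^2$. The paper supplies exactly this fact: if $(s,a,h)\in\PS$ then
\[
\Delta \le V_h^\star(s)-Q_h^{\highv}(s,a)\le 1+\max_{s'}\highv_{h+1}(s')-\min_{s'}\lowv_{h+1}(s')=1+R_{h+1},
\]
whence $1\le(1+R_{h+1})/\Delta$ and therefore $R_{h+1}/\Delta\le(1+R_{h+1})^2/\Delta^2$, so the condition $n\ge 8c_2(1+R_{h+1})^2L/\Delta^2$ already implies $n\ge 8c_2R_{h+1}L/\Delta$; comparing $64c_1^2=256$ with $8c_2=112/3$ then yields the stated closed form. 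Your argument can be repaired by replacing ``$\Delta\le1$'' with this observation (the numerical slack in your computation in fact tolerates any $\Delta\le 1+R_{h+1}$), but as written the restriction is unjustified and the derivation of the upper bound on $n_\Delta(h)$ is incomplete for $\Delta\in(1,\,1+R_{h+1}]$, a regime in which $\PS$ can be nonempty.
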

\begin{proof}
From Lemma~\ref{lemma:total_ordering} we know that on the event $\mathcal{E}^{\mathrm{Off}}_\delta\cap \mathcal{E}^{\mathrm{On}}_\delta$, we have 
\[
\widehat{Q}_h^t(s,a) \le Q_h^{\highv}(s,a)+2 \bont(s,a).
\]
Take $(s,a,h)\in\PS$ such that  $Q_h^\highv(s,a) \le V_h^\star(s) - \Delta$. Combining these gives us 
\[
\widehat{Q}_h^t(s,a) \le V_h^\star(s) - \Delta + 2 \bont(s,a).
\]
This means that whenever $2\bont(s,a) \le \Delta/2$, we have 
\[
    \widehat{Q}_h^t(s,a) \le V_h^\star(s) - \dfrac{\Delta}{2} < V_h^\star(s). 
\]
Combining this with the optimism property of our estimate $V_h^\star(s) = Q_h^\star(s, \pi_h^\star(s)) \le \widehat{Q}_h^t(s,\pi_h^\star(s))$, we obtain 
\[
    \widehat{Q}_h^t(s,a) < \widehat{Q}_h^t(s,\pi_h^\star(s)),
\]
which implies that action $a$ will not be chosen in $(s,h)$, since our exploration is greedy. To show that $(s,a,h)$ is never visited again, it suffices to prove that once $b_h^t$ falls below a certain threshold, it never increases back. For this we bound $b_h^t$ by the strictly non-increasing function from \eqref{eq:bonus_bound}, further bounding $D^{\max}_h$ with $R_h$ we get: 
\[
b_h^{t}(s,a) \le 
c_1R_{h+1} \sqrt{\frac{L}{N_h^t(s,a)}}
 + 
c_2\frac{R_{h+1}L}{N_h^t(s,a)},
\]
where
\[
R_{h+1}\coloneqq\max_{s'}\highv_{h+1}(s')-\min_{s'}\lowv_{h+1}(s').
\]
By definition \eqref{eq:def_pairwise_threshold} of $n_\Delta(h)$, we have that 
\[ 
 N_h^T(s,a) \le n_\Delta(h).
\]
We now need to find a condition on $n$ such as the condition on \eqref{eq:def_pairwise_threshold} is verified. A sufficient condition for realizing this is:

\[c_1R_{h+1}\sqrt{\frac{L}{n}} \leq \dfrac{\Delta}{8} \quad \text{and} \quad c_2 R_{h+1}\frac{L}{n} \le \frac{\Delta}{8}\]
The first term gives us
\[n \geq 64 c_1^2 R_{h+1}^2 \dfrac{L}{\Delta^2},\]
and the second 
\[n \geq 8 c_2 R_{h+1} \dfrac{L}{\Delta}.\]
We would however like that joint condition on $n$ depending only on $\Delta^2$ and not $\Delta$. For this, we use the fact that any pertinent gap $\Delta$ is bounded by $1 + R_{h+1}$ at step $h$. This is because if $(s,a,h)\in\PS$ (defined in \eqref{eq:def_PS_Delta}) then
\begin{align*}
    \Delta & \leq V^\star_h(s) - Q^{\highv}_h(s,a) \\
            & \leq \max_{a'} \left ( r_h(s,a') + \langle P^\star_h, V^\star_{h+1} \rangle \right ) - r_h(s,a) - \langle P^\star_h, \highv_{h+1} \rangle \\
            & \leq 1 + \max_{s'} \highv_{h+1}(s') - \min_{s'}\lowv_{h+1}(s') \\
            & \leq 1 + R_{h+1}
\end{align*}
Where the third inequality is because $(\max_{a'} r_h(s,a) - r_h(s,a)) \leq 1$ and $\lowv \leq V^\star \leq \highv$.

This entails that 
\[ 1\leq \dfrac{1 + R_{h+1}}{\Delta} \quad \text{and so} \quad \dfrac{R_{h+1}}{\Delta} \leq \dfrac{1 + R_{h+1}}{\Delta} \leq \dfrac{(1 + R_{h+1})^2}{\Delta^2}.\]
This means that when 
\[n \geq 8c_2(1+R_{h+1})^2\dfrac{L}{\Delta^2} \quad \text{we have} \quad n \geq 8c_2R_{h+1}\dfrac{L}{\Delta}.\]
Similarly if
\[
n \geq 64 c_1^2 (1+R_{h+1})^2 \dfrac{L}{\Delta^2} \quad \text{then} \quad n \geq 64 c_1^2 R_{h+1}^2 \dfrac{L}{\Delta^2}.
\]
Comparing $64c_{1}^2$ and $8c_2$ ($c_1 = 2$ and $c_2 = 14/3$), we get the sufficient condition: 
\[n \geq 64 c_1^2 (1+R_{h+1})^2 \dfrac{L}{\Delta^2},\]
and so 
\[
n_{\Delta}(h) \leq 64 c_1^2 (1+R_{h+1})^2 \dfrac{L}{\Delta^2}. 
\]
\end{proof}
\noindent
\begin{proposition}[Regret Decomposition]
\label{prop:regret_decomp}
Note the event 
\[\mathcal{E}^{\mathrm{Conc}}_\delta \coloneq \left \lbrace \forall (t,s) \in [T] \times \mathcal{S} \mid \eqref{eq:decomp} ~\mathrm{holds}\right \rbrace \]
where \eqref{eq:decomp} is
\begin{equation}
\label{eq:decomp}
V^\star_{1}(s)-V^{\pi^t}_{1}(s)
 \le e\cdot
\sum_{h=1}^{H}\mathbb{E}^{\pi^t} \left[2b_h^t(s_h,a_h) + \nu_h^t(s_h,a_h) \Big| a_h \backsim \pi^t_h, s_{1}=s\right]
\end{equation}
and
\[
\nu_h^t(s,a) \coloneq D_{h+1}^{\max} d^{s,a}
    \frac{3 H L_3}
    {N_{h(s)}^t(s,a)} \qquad \text{and} \qquad L_3 \coloneq \ln\left( \dfrac{T|\mathcal{S}||\mathcal{A}|H}{\delta} \right).
\]
Then we have 
\[ \mathbb{P} \left (\mathcal{E}^{\mathrm{Off}}_\delta\cap \mathcal{E}^{\mathrm{On}}_\delta \cap \mathcal{E}^{\mathrm{Conc}}_\delta \right ) > 1- 3\delta.\]
\end{proposition}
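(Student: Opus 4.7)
The plan is to follow the classical UCBVI-style regret decomposition of \cite{agarwal2019reinforcement} (Lemma 7.10), but with the key modification that the $\ell_\infty$ bound on $\widehat{V}^t - V^\star$ is no longer the trivial $H$ but rather the tighter envelope width $D^{\max}_{h+1}$, which is what forces $D^{\max}$ to appear inside $\nu_h^t$ rather than $H$. All arguments are carried out on the intersection $\mathcal{E}^{\mathrm{Off}}_\delta \cap \mathcal{E}^{\mathrm{On}}_\delta$, on which Lemma \ref{lemma:total_ordering} yields $0 \le \widehat{V}^t_{h+1}(s') - V^\star_{h+1}(s') \le \highv_{h+1}(s') - \lowv_{h+1}(s') \le D^{\max}_{h+1}$ for every $s'$.

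First I set $\Delta_h(s) \coloneqq \widehat{V}^t_h(s) - V^{\pi^t}_h(s)$, which is non-negative by optimism. Since $\pi^t$ is greedy with respect to $\widehat{Q}^t$, for the state $s_h$ actually visited under $\pi^t$ with action $a_h = \pi^t_h(s_h)$, $\Delta_h(s_h) \le \widehat{Q}^t_h(s_h,a_h) - Q^{\pi^t}_h(s_h,a_h)$. Unpacking the Bellman equations and expanding $\widehat{P}^t_h V^\star_{h+1} = P^\star_h V^\star_{h+1} + (\widehat{P}^t_h - P^\star_h) V^\star_{h+1}$ gives the one-step recursion
\[
\Delta_h(s_h) \le b_h^t(s_h,a_h) + \langle \widehat{P}^t_h - P^\star_h, V^\star_{h+1}\rangle + \langle \widehat{P}^t_h - P^\star_h, \widehat{V}^t_{h+1} - V^\star_{h+1}\rangle + \langle P^\star_h, \Delta_{h+1}\rangle.
\]
The second term is absorbed into $b_h^t$ via Lemma \ref{lem:on_bonus_validity}, producing the $2 b_h^t$ factor. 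For the third term, the central step is to apply an empirical Bernstein inequality (as in Lemma \ref{lem:on_bonus_validity}) to the bounded function $\widehat{V}^t_{h+1} - V^\star_{h+1} \in [0, D^{\max}_{h+1}]$, yielding, after a union bound over $(t,s,a,h)$ at level $\delta$ using $L_3 = \ln(T|\mathcal{S}||\mathcal{A}|H/\delta)$,
\[
\bigl|\langle \widehat{P}^t_h - P^\star_h, \widehat{V}^t_{h+1} - V^\star_{h+1}\rangle\bigr| \le \sqrt{\frac{2 \mathrm{Var}_{P^\star_h}(\widehat{V}^t_{h+1} - V^\star_{h+1}) L_3}{N^t_h(s_h,a_h)}} + \frac{D^{\max}_{h+1} L_3}{N^t_h(s_h,a_h)}.
\]
Using the elementary fact that a non-negative variable bounded by $D^{\max}_{h+1}$ satisfies $\mathrm{Var}(X) \le D^{\max}_{h+1}\,\mathbb{E}[X]$, and then invoking AM-GM $\sqrt{ab} \le a/(2H) + Hb/2$, I can absorb the variance term into $\frac{1}{H}\langle P^\star_h, \Delta_{h+1}\rangle$ plus a residual of order $D^{\max}_{h+1} H L_3 / N^t_h(s_h,a_h)$. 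Collecting the dominant constants and absorbing any factor accounting for the empirical-to-true transition (which is the role I attribute to $d^{s,a}$) gives exactly the stated form of $\nu_h^t$, and the recursion becomes
\[
\Delta_h(s_h) \le 2 b_h^t(s_h,a_h) + \nu_h^t(s_h,a_h) + \Bigl(1 + \frac{1}{H}\Bigr)\langle P^\star_h(\cdot\mid s_h,a_h), \Delta_{h+1}\rangle.
\]

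Unrolling this recursion from $h = 1$ to $H$ introduces the factor $(1 + 1/H)^H \le e$, and taking expectations under $\pi^t$ along the trajectory yields the desired inequality in expectation. To turn the pathwise deviation from the expected trajectory into the form written in \eqref{eq:decomp}, I would apply the standard Azuma–Hoeffding argument to the martingale difference sequence obtained from the one-step differences $\langle P^\star_h, \Delta_{h+1}\rangle - \Delta_{h+1}(s_{h+1})$, whose increments are bounded by $D^{\max}_{h+1}$ on $\mathcal{E}^{\mathrm{Off}}_\delta \cap \mathcal{E}^{\mathrm{On}}_\delta$ (this bounded-increment property is precisely what the uniform $\widehat{V}^t - V^\star \le D^{\max}$ control buys us). Finally, a union bound combines this concentration event with $\mathcal{E}^{\mathrm{Off}}_\delta \cap \mathcal{E}^{\mathrm{On}}_\delta$, producing the total failure probability of $3\delta$.

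The main obstacle is the Bernstein step on $\widehat{V}^t_{h+1} - V^\star_{h+1}$: $\widehat{V}^t_{h+1}$ is a data-dependent random function, so a naive application of the concentration inequality is not valid. The standard remedy, which I would follow, is to condition on $\mathcal{F}_{t-1}$ so that $\widehat{V}^t_{h+1}$ is deterministic, then invoke the empirical Bernstein bound pointwise, and finally take a union bound over all episodes, steps, and state-action pairs (contributing the $T|\mathcal{S}||\mathcal{A}|H$ factor inside $L_3$). Getting the exact constants $3$ in $\nu_h^t$ and the precise role of the $d^{s,a}$ factor will require a careful tracking of the AM-GM split and of the residual $\|\widehat{P}^t_h - P^\star_h\|_1 \lesssim \sqrt{|\mathcal{S}|/N^t_h(s,a)}$ if the alternative $\ell_1$-route is used, but neither is conceptually new.
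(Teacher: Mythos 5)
Your overall skeleton (the one-step recursion producing the $2b_h^t$ term, the $(1+1/H)^{H}\le e$ unrolling, and the use of $\widehat V^t_{h+1}-V^\star_{h+1}\le D^{\max}_{h+1}$ from Lemma~\ref{lemma:total_ordering}) matches the paper. However, your treatment of the central correction term $\xi_h^t=\langle \widehat P^t_h-P^\star_h,\,\widehat V^t_{h+1}-V^\star_{h+1}\rangle$ has a genuine gap. You propose to condition on $\mathcal F_{t-1}$ so that $\widehat V^t_{h+1}$ becomes deterministic and then apply empirical Bernstein to this now-fixed function. But conditioning on $\mathcal F_{t-1}$ also freezes $\widehat P^t_h$, which is built from the same past data: conditionally on $\mathcal F_{t-1}$ the quantity $\xi_h^t$ is a deterministic number and there is nothing left to concentrate. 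The whole difficulty is precisely the correlation between $\widehat P^t_h$ and $\widehat V^t_{h+1}$, and the paper resolves it with Lemma~\ref{lemma:borrowed}, a bound holding uniformly over \emph{all} functions $f:\mathcal S\to[0,B]$, whose proof proceeds coordinate-wise over next states $s'$ in the support of $P^\star_h(\cdot\mid s,a)$ --- this is exactly where the factor $d^{s,a}=|\mathrm{support}(P^\star(\cdot\mid s,a))|$ and the constant $3HL_3$ in $\nu_h^t$ come from. Your variance route ($\mathrm{Var}(X)\le D^{\max}_{h+1}\mathbb E[X]$ plus AM--GM) would, even if it were valid, produce a residual of order $D^{\max}_{h+1}HL_3/N_h^t$ with no $d^{s,a}$ factor, so it does not recover the stated $\nu_h^t$; the $\ell_1$ fallback you mention is legitimate but yields a term of the shape $D^{\max}_{h+1}\sqrt{|\mathcal S|/N_h^t}$, again not the claimed form.

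A second, smaller issue: inequality \eqref{eq:decomp} is already stated as an expectation under $\pi^t$ conditional on $s_1=s$, so no Azuma--Hoeffding step is needed in this proposition --- that martingale argument belongs to Proposition~\ref{prop:regret_decomp_whp}, which converts the expected bound into a bound along realized trajectories. Inserting it here both duplicates work and breaks the $\delta$-accounting: the paper spends exactly one extra $\delta$ on the event where the uniform bound of Lemma~\ref{lemma:borrowed} holds, giving $\mathbb P(\mathcal E^{\mathrm{Conc}}_\delta\mid\mathcal E^{\mathrm{Off}}_\delta\cap\mathcal E^{\mathrm{On}}_\delta)\ge 1-\delta$ and hence the stated $1-3\delta$, whereas your plan implicitly budgets for two additional concentration events.
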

\begin{proof}
Define
\[
\Delta_h(s') := \widehat{V}_h^{t}(s')-V_h^{\pi^t}(s'),\quad
a_h:=\pi_h^t(s_h).
\]
By greediness of \(\pi^t\) and the Bellman equation for \(V^{\pi^t}\),
\[
\widehat{V}_h^{t}(s_h)=r_h(s_h,a_h)+\langle \widehat{P}_{h,s_h,a_h}^{t},\widehat{V}_{h+1}^{t}\rangle+b_h^{t}(s_h,a_h)\quad\text{and}\quad V_h^{\pi^t}(s_h)=r_h(s_h,a_h)+\langle P_{h,s_h,a_h}^{\star},V_{h+1}^{\pi^t}\rangle.
\]
This gives us
\begin{align*}
\Delta_h(s_h) & = \left\langle\widehat{P}_{h,s_h,a_h}^{t},\widehat{V}_{h+1}^{t}\right\rangle- \left\langle P_{h,s_h,a_h}^{\star},V_{h+1}^{\pi^t}\right\rangle +\bont(s_h,a_h)\\
& = \left\langle \widehat{P}_{h,s_h,a_h}^{t} - P_{h,s_h,a_h}^{\star},\widehat{V}_{h+1}^{t} \right\rangle + \left\langle P_{h}^{\star},\widehat{V}_{h+1}^{t}-V_{h+1}^{\pi^t}\right\rangle + \bont(s_h,a_h)\\
& = \underbrace{\big\langle \widehat{P}_{h,s_h,a_h}^{t} - P_{h,s_h,a_h}^{\star},V_{h+1}^\star\big\rangle}_{\le \bont(s_h,a_h)}
+ \underbrace{\big\langle \widehat{P}_{h,s_h,a_h}^{t} - P_{h,s_h,a_h}^{\star},\widehat{V}_{h+1}^{t}-V_{h+1}^\star\big\rangle}_{\xi_h^{t}(s_h,a_h)}
+ \underbrace{\big\langle P_{h}^{\star},\widehat{V}_{h+1}^{t}-V_{h+1}^{\pi^t}\big\rangle}_{= \mathbb{E}[\Delta_{h+1}(s_{h+1})\mid s_h,a_h]}
+ \bont(s_h,a_h),
\end{align*}
so that, by Lemma \ref{lem:on_bonus_validity}, 
\begin{equation}
\Delta_h(s_h) \le 2\bont(s_h,a_h) + \xi_h^{t}(s_h,a_h) + \mathbb{E} \left[\Delta_{h+1}(s_{h+1})\mid s_h,a_h\right].
\label{eq:first_recur}
\end{equation}

From Lemma~\ref{lemma:total_ordering} we have 
\[ 
\forall s' \in \mathcal{S}, \quad \widehat{V}_{h+1}^t(s') - V_{h+1}^\star(s') \le \highv_{h+1}(s') - \lowv_{h+1}(s') \le D_{h+1}^{\max}.
\]
Thus we can set $B = D_{h+1}^{\max}$ in Lemma~\ref{lemma:borrowed} with $f = \widehat{V}_{h+1}^t - V_{h+1}^\star$. Then conditionally on $\mathcal{E}^{\mathrm{Off}}_\delta\cap \mathcal{E}^{\mathrm{On}}_\delta$, with probability $\geq 1 - \delta$ we have
\[
\xi_h^{t}(s,a) = \left \langle \widehat{P}_{h,s,a}^{t} -P_{h,s,a}^{\star} ,  \widehat{V}_{h+1}^t - V_{h+1}^\star \right \rangle \le \frac{\mathbb{E}_{s' \sim P_{h,s,a}^{\star}} \left[ \widehat{V}_h^t(s') - V_{h+1}^\star(s') \right]}{H} + \nu_h^{t}(s,a)
\]
where 
\[
\nu_h^{t}(s,a) = D_{h+1}^{\max} d^{s,a}
    \frac{3 H L_3}
    {N_{h(s)}^t(s,a)}
\]
This gives us 
\[
\mathbb{E}^{\pi^t} \left[\xi_h^{t}(s_h,a_h)\Big| s_{1}=s\right]
 \le \frac{1}{H}
\mathbb{E}^{\pi^t} \left[\widehat{V}_{h+1}^{t}(s_{h+1})-V_{h+1}^\star(s_{h+1})\Big| s_{1}=s\right] + 
\mathbb{E}^{\pi^t} \left[\nu_h^{t}(s_h,a_h)\Big| s_{1}=s\right].
\]
Since \(V^\star\ge V^{\pi^t}\) pointwise, \(\widehat{V}_{h+1}^{t}-V_{h+1}^\star\le \widehat{V}_{h+1}^{t}-V_{h+1}^{\pi^t}=\Delta_{h+1}\).
Define
\[
\overline{\Delta}_h(s):=\mathbb{E}^{\pi^t} \left[\Delta_h(s_h)\mid s_{1}=s\right],\quad
\overline{\nu}_h(s):=\mathbb{E}^{\pi^t} \left[\nu_h^{t}(s_h,a_h)\mid s_{1}=s\right].
\]
Taking \(\mathbb{E}^{\pi^t}[\cdot| s_{1}=s]\) in \eqref{eq:first_recur} yields, for all \(h\ge 1\),
\begin{equation}
\label{eq:recur2}
\overline{\Delta}_h(s) \le \mathbb{E}^{\pi^t} \left[2\bont(s_h,a_h)+\nu_h^{t}(s_h,a_h)\mid s_{1}=s\right] + \Big(1+\frac{1}{H}\Big)\overline{\Delta}_{h+1}(s),
\quad \overline{\Delta}_{H+1}(s)=0.
\end{equation}
Let $w_h:=( 1+1/H )^{-(H-h)}$ so that we have the recursion
\[w_{h+1}=\left ( 1+\dfrac{1}{H} \right ) w_h, \quad w_{1}=\left ( 1+\dfrac{1}{H} \right )^{-(H-1)}, \quad 0<w_h\le 1.\]
Then,
\[
\sum_{h=1}^H w_h\overline{\Delta}_h(s) \le \sum_{h=1}^H w_h\mathbb{E}^{\pi^t} \left[2b_h^{t}+\nu_h^{t}\mid s_{1}=s\right] + \sum_{h=1}^H w_{h+1}\overline{\Delta}_{h+1}(s),
\]
which gives
\[
w_{1}\overline{\Delta}_{1}(s) \le \sum_{h=1}^H w_h\mathbb{E}^{\pi^t} \left[2\bont+\nu_h^{t}\mid s_{1}=s\right]
 \le \sum_{h=1}^H \mathbb{E}^{\pi^t} \left[2\bont+\nu_h^{t}\mid s_{1}=s\right],
\]
because \(w_h\le 1\). Dividing by $w_{1}=(1+1/H)^{-(H-1)}$ gives
\begin{align*}
\overline{\Delta}_{1}(s) & \le \left ( 1+\dfrac{1}{H} \right )^{H-1} \sum_{h=1}^H \mathbb{E}^{\pi^t} \left[2\bont(s_h,a_h)+\nu_h^{t}(s_h,a_h)\mid s_{1}=s\right] \\
    & \le e \sum_{h=1}^H \mathbb{E}^{\pi^t} \left[2\bont(s_h,a_h)+\nu_h^{t}(s_h,a_h)\mid s_{1}=s\right].
\end{align*}
Finally with probability $\geq 1-\delta$ under $\mathbb{P}(\cdot \mid \mathcal{E}^{\mathrm{Off}}_\delta\cap \mathcal{E}^{\mathrm{On}}_\delta)$ we have that :  
\begin{align*}
V^\star_{1}(s)-V^{\pi^t}_{1}(s) < \widehat{V}_{1}^t(s)-V^{\pi^t}_{1}(s) = \overline{\Delta}_{1}(s) \leq e \sum_{h=1}^H \mathbb{E}^{\pi^t} \left[2\bont(s_h,a_h)+\nu_h^{t}(s_h,a_h)\mid s_{1}=s\right].
\end{align*}
meaning $\mathbb{P}(\mathcal{E}^{\mathrm{Conc}}_\delta\mid \mathcal{E}^{\mathrm{Off}}_\delta\cap \mathcal{E}^{\mathrm{On}}_\delta) \geq 1-\delta$, finally 
\begin{align*}
    \mathbb{P}\left (\mathcal{E}^{\mathrm{Off}}_\delta\cap \mathcal{E}^{\mathrm{On}}_\delta \cap \mathcal{E}^{\mathrm{Conc}}_\delta \right ) & = \mathbb{P}\left (\mathcal{E}^{\mathrm{Conc}}_\delta \mid \mathcal{E}^{\mathrm{Off}}_\delta\cap \mathcal{E}^{\mathrm{On}}_\delta  \right ) \mathbb{P}\left (\mathcal{E}^{\mathrm{Off}}_\delta\cap \mathcal{E}^{\mathrm{On}}_\delta  \right ) \\
    & \geq (1-\delta)(1-2\delta) \geq 1 - 3\delta
\end{align*}

\end{proof}

\begin{proposition}[High-Probability Regret Decomposition]\label{prop:regret_decomp_whp}
Note the event:
\[
\mathcal{E}^{\mathrm{Mar}}_\delta \coloneq \left \lbrace \eqref{eq:regret_decomp_whp} ~ \mathrm{holds} \right \rbrace 
\]
where \eqref{eq:regret_decomp_whp} is
\begin{equation}\label{eq:regret_decomp_whp}
\mathrm{Regret}(T) 
\le
e \sum_{t=1}^{T}\sum_{h=1}^{H}\left(2b_h^{t}(S_h^t,A_h^t)+ \nu_h^{t}(S_h^t,A_h^t)\right)
+
eHG\sqrt{2T\ln \dfrac{2}{\delta}},
\end{equation}
and $G$ is any constant such that, for all $(t,h,s,a)$ and almost surely,
\begin{equation}\label{eq:def_G}
0 \le 2b_h^{t}(s,a)+ \nu_h^{t}(s,a)
\le
G
\end{equation}
In particular on $\mathcal{E}^{\mathrm{Off}}_\delta\cap\mathcal{E}^{\mathrm{On}}_\delta$, one can use
\begin{equation}
\label{eq:G_def}
G \coloneqq D^{\max}\left[c_1\sqrt{L}+3|\mathcal{S}|HL_3\right]
+R^{\max}\left[c_1\sqrt{L}+2c_2L\right].
\end{equation}
We have 
\[
\mathbb{P} \left ( \mathcal{E}^{\mathrm{Off}}_\delta\cap \mathcal{E}^{\mathrm{On}}_\delta \cap \mathcal{E}^{\mathrm{Conc}}_\delta \cap \mathcal{E}^{\mathrm{Mar}}_\delta  \right ) > 1 - 4\delta.
\]
\end{proposition}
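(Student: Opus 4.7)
The strategy is to start from the per-episode bound guaranteed by Proposition~\ref{prop:regret_decomp} on the event $\mathcal{E}^{\mathrm{Off}}_\delta\cap\mathcal{E}^{\mathrm{On}}_\delta\cap\mathcal{E}^{\mathrm{Conc}}_\delta$, sum it over $t\in[T]$, and then move from expectations under $\pi^t$ to the realized trajectories via a concentration inequality for bounded martingale difference sequences.

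First, I would sum the inequality \eqref{eq:decomp} of Proposition~\ref{prop:regret_decomp} evaluated at the random initial state $S_1^t\sim\rho$ to get, on the above three events,
\[
\mathrm{Regret}(T)=\sum_{t=1}^{T}\bigl(V_1^\star(S_1^t)-V_1^{\pi^t}(S_1^t)\bigr)\le e\sum_{t=1}^{T}\sum_{h=1}^{H}\mathbb{E}^{\pi^t}\!\left[2\bont(S_h,A_h)+\nu_h^t(S_h,A_h)\,\middle|\,S_1=S_1^t\right].
\]
Next, define $Y_t\coloneqq \sum_{h=1}^H\bigl(2\bont(S_h^t,A_h^t)+\nu_h^t(S_h^t,A_h^t)\bigr)$ and its predicted version $\widetilde Y_t\coloneqq \mathbb{E}^{\pi^t}\bigl[\sum_h 2\bont(S_h,A_h)+\nu_h^t(S_h,A_h)\mid S_1=S_1^t\bigr]$. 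Since $\pi^t$, $\bont$, and $\nu_h^t$ are all $\mathcal{F}_{t-1}$-measurable and the trajectory $\tau_t$ is sampled independently conditional on $\mathcal{F}_{t-1}$ via the true kernel $P^\star$, the sequence $X_t\coloneqq \widetilde Y_t-Y_t$ is a martingale difference with respect to $\mathcal{F}_t$.

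The second step is to bound $|X_t|$ almost surely and invoke Azuma--Hoeffding. I would verify \eqref{eq:def_G} by splitting into the cases $N_h^t(s,a)\in\{0,1\}$ and $N_h^t(s,a)\ge 2$. In the first case $\bont(s,a)=R_{h+1}\le R^{\max}$ by definition; in the second case I would use the deterministic upper bound \eqref{eq:bonus_bound}, apply $1/\sqrt{N_h^t}\le 1$ and $1/N_h^t\le 1$, and collect the contributions from $M_{h+1}$ and $D_{h+1}$ into the $R^{\max}\bigl[c_1\sqrt{L}+c_2 L\bigr]$ and $D^{\max}\,c_1\sqrt{L}/2$ pieces respectively. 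For $\nu_h^t$, the same $1/N$-type bound together with the crude $d^{s,a}\le |\mathcal{S}|$ yields $\nu_h^t(s,a)\le 3|\mathcal{S}|HL_3\,D^{\max}$. Summing then gives a per-step bound of $G$ as in \eqref{eq:G_def}; hence $|Y_t|\le HG$, $|\widetilde Y_t|\le HG$, and $|X_t|\le 2HG$. Applying Azuma--Hoeffding to $\sum_t X_t$ yields, with probability at least $1-\delta$,
\[
\sum_{t=1}^{T}\widetilde Y_t\ \le\ \sum_{t=1}^{T}Y_t + HG\sqrt{2T\ln(2/\delta)},
\]
which I would call $\mathcal{E}^{\mathrm{Mar}}_\delta$. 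Combining with the earlier bound on $\sum_t\widetilde Y_t$ gives \eqref{eq:regret_decomp_whp}.

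Finally, the probability statement follows from a union bound: Proposition~\ref{prop:regret_decomp} gives $\mathbb{P}(\mathcal{E}^{\mathrm{Off}}_\delta\cap\mathcal{E}^{\mathrm{On}}_\delta\cap\mathcal{E}^{\mathrm{Conc}}_\delta)\ge 1-3\delta$, and the Azuma step contributes an additional $\delta$ of failure, yielding the claimed $1-4\delta$. The main obstacle I anticipate is the case analysis needed to certify that $G$ is truly a \emph{uniform} almost-sure upper bound, notably handling the low-visitation regime where $\bont$ is defined by the $\min$ with $R_{h+1}$ and where $\nu_h^t$ is not defined by the displayed formula when $N=0$; there one has to verify that the factors $L,L_3$ are at least $1$ so that the constants in \eqref{eq:G_def} dominate $R^{\max}$ and $|\mathcal{S}|HD^{\max}$ respectively, making the bound vacuous in those corner cases.
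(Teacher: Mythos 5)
Your proposal follows essentially the same route as the paper's proof: sum the per-episode expectation bound from Proposition~\ref{prop:regret_decomp}, control the gap between the realized and conditionally expected per-episode sums via Azuma--Hoeffding with a uniform bound derived from $G$, and chain the conditional probabilities to reach $1-4\delta$. The only quantitative slip is your bound $|X_t|\le 2HG$ on the martingale differences: since both $Y_t$ and $\widetilde Y_t$ lie in $[0,HG]$, the differences are in fact bounded by $HG$, which is what yields the stated constant $eHG\sqrt{2T\ln(2/\delta)}$ in \eqref{eq:regret_decomp_whp} rather than twice that.
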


\begin{proof}
On the event \(\mathcal{E}^{\mathrm{Off}}_\delta\cap \mathcal{E}^{\mathrm{On}}_\delta \cap \mathcal{E}^{\mathrm{Conc}}_\delta \), under which Proposition~\ref{prop:regret_decomp} holds. For episode \(t\) define
\[
X_t:=\sum_{h=1}^H \left(2b_h^{t}(S_h^t,A_h^t)+ \nu_h^{t}(S_h^t,A_h^t)\right),
\]
and let \(\{\mathcal{F}_t\}_{t\ge 0}\) be the episode filtration from Definition~\ref{def:episode_filtration}. For every deterministic initial state \(s\), Proposition~\ref{prop:regret_decomp} gives
\[
V_1^\star(s)-V_1^{\pi^t}(s)\le e\mathbb{E}^{\pi^t}\Big[\sum_{h=1}^H \left(2b_h^{t}(s_h,a_h)+ \nu_h^{t}(s_h,a_h)\right)\Big|s_1=s\Big].
\]
Set \(s=S_1^t\), by Definition~\ref{def:episode_filtration}, for each \(h\) the objects \((2b_h^{t}+ \nu_h^{t})\) and the data dependent policy \(\pi^t\) are \(\mathcal{F}_{t-1}\)-measurable. Hence, almost surely,
\begin{equation}
\label{eq:step-given-G-formal}
V_1^\star(S_1^t)-V_1^{\pi^t}(S_1^t)
\le
e\mathbb{E}^{\pi^t}\Big[\sum_{h=1}^{H} \left(2b_h^{t}(s_h,a_h)+ \nu_h^{t}(s_h,a_h)\right)\Big|s_1=S_1^t\Big]
=
e\mathbb{E}\big[X_t\mid\sigma\big(\mathcal{F}_{t-1}\cup\sigma(S_1^t)\big)\big].
\end{equation}

Then, we obtain
\begin{equation}
\mathrm{Regret}(T)=\sum_{t=1}^T\big(V_1^\star(S_1^t)-V_1^{\pi^t}(S_1^t)\big)\le e\sum_{t=1}^T \mathbb{E}\big[X_t\mid \sigma\big(\mathcal{F}_{t-1}\cup\sigma(S_1^t)\big)\big].
\end{equation}

Define the martingale
\[
M_T:=\sum_{t=1}^T\Big(X_t-\mathbb{E}\big[X_t\mid \sigma\big(\mathcal{F}_{t-1}\cup\sigma(S_1^t)\big)\big]\Big),\quad T\ge 0,
\]
with differences \(D_t:=X_t-\mathbb{E}\big[X_t\mid \sigma\big(\mathcal{F}_{t-1}\cup\sigma(S_1^t)\big)\big]\). Since \(0\le 2b_h^{t}(s,a)+ \nu_h^{t}(s,a)\le G\) almost surely, we have \(0\le X_t\le HG\) and therefore \(|D_t|\le HG\). By Azuma-Hoeffding (Theorem~\ref{thm:AH}), with probability at least \(1-\delta\) under the law $\mathbb{P}\left (\cdot \mid \mathcal{E}^{\mathrm{Off}}_\delta\cap \mathcal{E}^{\mathrm{On}}_\delta \cap \mathcal{E}^{\mathrm{Conc}}_\delta\right )$ we have 
\[
|M_T|\le HG\sqrt{2T\ln\frac{2}{\delta}}.
\]
and using \(\sum_{t=1}^T \mathbb{E}\big[X_t\mid \sigma\big(\mathcal{F}_{t-1}\cup\sigma(S_1^t)\big)\big]=\sum_{t=1}^T X_t - M_T\), we obtain
\begin{align*}
\mathrm{Regret}(T)\le e\sum_{t=1}^T X_t + e|M_T|
& = e\sum_{t=1}^{T}\sum_{h=1}^{H} \left(2b_h^{t}(S_h^t,A_h^t)+ \nu_h^{t}(S_h^t,A_h^t)\right) + e|M_T| \\
& \leq e\sum_{t=1}^{T}\sum_{h=1}^{H} \left(2b_h^{t}(S_h^t,A_h^t)+ \nu_h^{t}(S_h^t,A_h^t)\right) + eHG\sqrt{2T\ln\frac{2}{\delta}}.
\end{align*}
Meaning 
\[\mathbb{P}\left (\mathcal{E}^{\mathrm{Mar}}_\delta \mid \mathcal{E}^{\mathrm{Off}}_\delta\cap \mathcal{E}^{\mathrm{On}}_\delta \cap \mathcal{E}^{\mathrm{Conc}}_\delta\right ) \geq 1-\delta\]

Finally 
\begin{align*}
    \mathbb{P}\left (\mathcal{E}^{\mathrm{Off}}_\delta\cap \mathcal{E}^{\mathrm{On}}_\delta \cap \mathcal{E}^{\mathrm{Conc}}_\delta \cap \mathcal{E}^{\mathrm{Mar}}_\delta \right ) & = \mathbb{P}\left (\mathcal{E}^{\mathrm{Mar}}_\delta \mid \mathcal{E}^{\mathrm{Off}}_\delta\cap \mathcal{E}^{\mathrm{On}}_\delta \cap \mathcal{E}^{\mathrm{Conc}}_\delta  \right ) \mathbb{P}\left (\mathcal{E}^{\mathrm{Off}}_\delta\cap \mathcal{E}^{\mathrm{On}}_\delta \cap \mathcal{E}^{\mathrm{Conc}}_\delta \right ) \\
    & \geq (1-\delta)(1-3\delta) \geq 1 - 4\delta
\end{align*}

\end{proof}
\subsection{Proof of Theorem \ref{thm:v_shaping_regret}}
\begin{theorem}[V-shaping Regret]
    The regret of Algorithm \ref{alg:v_shaping} can be bounded as follows:
    \begin{align*}
    \mathrm{Regret}(T) & \leq \tilde O\Big(
(R^{\max}+D^{\max})\sqrt{T H |\mathcal{S}\setminus \PPS| |\mathcal{A}|}
+\sqrt{T}(R^{\max}+|\mathcal{S}| H D^{\max}) \\
& + R^{\max} |\mathcal{S}\setminus \PPS| |\mathcal{A}| + D^{\max} |\mathcal{S}\setminus \PPS|^2 |\mathcal{A}| H\\
& + \min\{\Theta_{1}(\Delta),\Theta_{2}(\Delta)\}
\Big)
\end{align*}
where
\begin{align*}
\Theta_{1}(\Delta)
&=\dfrac{R^{\max}+D^{\max}}{\Delta}\sqrt{|\BPS||\mathcal{S}| |\mathcal{A}|H} + D^{\max}H|\mathcal{A}||\mathcal{S}^2| + R^{\max}|\mathcal{S}||\mathcal{A|}\\
\Theta_{2}(\Delta)
&=\dfrac{(1 + R^{\max})^2}{\Delta^2}|\BPS|(R^{\max}+|\mathcal{S}| H D^{\max})H
\end{align*}
\end{theorem}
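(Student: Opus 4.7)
The plan is to combine the high probability regret decomposition of Proposition~\ref{prop:regret_decomp_whp} with the per-pair visitation bound of Proposition~\ref{prop:max_vis}, adapting the $V$-shaping analysis of \cite{gupta_unpacking_2022} to the separate envelopes $(\lowv,\highv)$. First I would invoke Proposition~\ref{prop:regret_decomp_whp}, which on an event of probability at least $1-4\delta$ yields
\[
\mathrm{Regret}(T) \le e\sum_{t=1}^T\sum_{h=1}^H \left(2b_h^t(S_h^t,A_h^t)+\nu_h^t(S_h^t,A_h^t)\right) + eHG\sqrt{2T\ln(2/\delta)},
\]
with $G$ given by \eqref{eq:G_def}. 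The martingale term immediately contributes the $\sqrt{T}(R^{\max}+|\mathcal{S}|HD^{\max})$ summand. To control the remaining double sum I would use \eqref{eq:bonus_bound} and the definition of $\nu_h^t$, producing separate contributions scaled by $(R^{\max}+D^{\max})\sqrt{L/N_h^t}$, by $R^{\max}L/N_h^t$, and by $D^{\max}HL_3/N_h^t$.

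Next I would partition the $(t,h)$ indices according to whether $(S_h^t,A_h^t,h)\in\BPS$, and within the complement further split by whether $S_h^t\in\PPS$. For $S_h^t\notin\PPS$, applying the standard $\sum_{n=1}^N 1/\sqrt{n}\le 2\sqrt{N}$ bound together with Cauchy--Schwarz over the $|\mathcal{S}\setminus\PPS||\mathcal{A}|H$ triples yields the main $(R^{\max}+D^{\max})\sqrt{TH|\mathcal{S}\setminus\PPS||\mathcal{A}|}$ term, while the harmonic $\sum 1/N$ sums yield the $R^{\max}|\mathcal{S}\setminus\PPS||\mathcal{A}|$ and $D^{\max}|\mathcal{S}\setminus\PPS|^2|\mathcal{A}|H$ lower-order pieces (the squared cardinality coming from multiplying the $d^{s,a}$ factor in $\nu_h^t$ by the pigeon-hole). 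For $(s,a,h)\in\BPS$, Proposition~\ref{prop:max_vis} bounds each visit count by $n_\Delta(h)=O((1+R^{\max})^2L/\Delta^2)$. Two complementary estimates on the $\BPS$ block then produce $\Theta_1$ and $\Theta_2$: bounding each per-visit contribution by $G$ and summing over $|\BPS|\cdot n_\Delta$ visits yields $\Theta_2$, while applying the finer $\sqrt{1/N}$ pigeon-hole and Cauchy--Schwarz over the $|\BPS|$ triples yields $\Theta_1$; taking the minimum over these two bounds is what allows the final $\min\{\Theta_1,\Theta_2\}$ to appear.

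The hard part will be handling the residual $(t,h)$ indices with $S_h^t\in\PPS$ but $(S_h^t,A_h^t,h)\notin\PS$, which are not directly covered by either of the two prior cases. I would exploit the defining property of $\PPS$: any trajectory reaching a state in $\PPS$ must have crossed some pair $(s',a',h')\in\PS$ with $s'\notin\PPS$ at an earlier step, i.e.\ a pair belonging to $\PS\setminus\BPS\subseteq(\mathcal{S}\setminus\PPS)\times\mathcal{A}\times[H]$. Each such entry pair is visited at most $n_\Delta(h')$ times by Proposition~\ref{prop:max_vis}, and each entry triggers at most $H$ subsequent $\PPS$-visits in the same episode. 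The resulting extra contribution of order $H\cdot|\mathcal{S}\setminus\PPS||\mathcal{A}|\cdot n_\Delta\cdot G$ is absorbed either into $\Theta_2$ (via its $|\BPS|H$ factor) or into the $|\mathcal{S}\setminus\PPS|^2|\mathcal{A}|H$ piece already present, so no new summand arises. Aggregating all contributions produces the claimed bound, with $\Delta$ left as a free parameter to be optimized by the user when one of $\Theta_1$ or $\Theta_2$ is tighter than the other.
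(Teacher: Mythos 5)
Your overall strategy matches the paper's: start from Proposition~\ref{prop:regret_decomp_whp}, peel off the martingale term, control the effective pairs by the standard pigeonhole bounds, and use Proposition~\ref{prop:max_vis} to cap visits to pseudo-suboptimal pairs, with two complementary accountings producing $\min\{\Theta_1,\Theta_2\}$. The paper organizes this around the single set $\mathcal{U}=(\mathcal{S}\times\mathcal{A})\setminus(\PPS\times\mathcal{A}\cup\PS)$ and its complement, and delegates the entire $\mathcal{U}^{C}$ block to the two imported counting facts \eqref{eq:trajectory_intersection_bound} and \eqref{eq:total_visitation_via_boundary}: the number of episodes touching $\mathcal{U}^{C}$ is at most $|\BPS|\,n(\Delta)$ (each such episode then charged $HG$, which is $\Theta_2$), and the total visitation of $\mathcal{U}^{C}$ is at most $H|\BPS|\,n(\Delta)$ (which, fed into \eqref{eq:U_sum_sqrt_N_inv} with $|\mathcal{U}^{C}|\le|\mathcal{S}||\mathcal{A}|$, is what produces the $\Delta^{-1}\sqrt{|\BPS||\mathcal{S}||\mathcal{A}|H}$ term of $\Theta_1$).

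The gap is in your treatment of the residual indices with $S_h^t\in\PPS$ and $(S_h^t,A_h^t,h)\notin\PS$. The first-crossing idea is the right one, but your accounting does not close. You bound the number of gateway pairs by $|\mathcal{S}\setminus\PPS||\mathcal{A}|$ and obtain an extra contribution of order $H\,|\mathcal{S}\setminus\PPS||\mathcal{A}|\,n(\Delta)\,G$; with $n(\Delta)\asymp(1+R^{\max})^2L/\Delta^2$ and $G\asymp R^{\max}+|\mathcal{S}|HD^{\max}$ this is exactly $\Theta_2$ with $|\BPS|$ replaced by $|\mathcal{S}\setminus\PPS||\mathcal{A}|$, which can be much larger. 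It is not absorbed into $\Theta_2$ (that would require $|\mathcal{S}\setminus\PPS||\mathcal{A}|\lesssim|\BPS|$), nor into $D^{\max}|\mathcal{S}\setminus\PPS|^2|\mathcal{A}|H$ (your term carries a $\Delta^{-2}$ and an $R^{\max}$ factor that the latter lacks). To recover the stated theorem you must count only the boundary pairs through which $\PPS$ can actually be entered --- precisely the role of $\BPS$ in \eqref{eq:trajectory_intersection_bound}--\eqref{eq:total_visitation_via_boundary} --- rather than all of $\PS$ restricted to $\mathcal{S}\setminus\PPS$. Relatedly, your derivation of $\Theta_1$ by Cauchy--Schwarz over the $|\BPS|$ triples alone would give $|\BPS|\sqrt{n(\Delta)}$, not the $\sqrt{|\BPS||\mathcal{S}||\mathcal{A}|H\,n(\Delta)}$ form that appears in the theorem; the $|\mathcal{S}||\mathcal{A}|$ inside the root comes from the cardinality of all of $\mathcal{U}^{C}$, paired with the $H|\BPS|n(\Delta)$ visitation cap. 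Without these two counting lemmas (or a re-derivation of them), your route proves a strictly weaker bound than the one claimed.
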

\begin{proof}
Let $G>0$ satisfying \eqref{eq:G_def}.
Recall that $
\mathcal{U}=(\mathcal{S}\times\mathcal{A}) \setminus (\PPS\times\mathcal{A}
\cup \PS)$, $D^{\max} = \max_h D_{h+1}^{\max}$, and $R^{\max} = \max_h R_{h+1}$. Starting from decomposition \eqref{eq:regret_decomp_whp}, we get
\begin{align*}
\mathrm{Regret}(T)
&\le
2e\sum_{t=1}^{T}\sum_{h=1}^{H}
\mathbf{1}\{(S_h^t,A_h^t)\in\mathcal{U}\}b_h^t(S_h^t,A_h^t)
\quad\text{bounded using \eqref{eq:bonus_bound}, \eqref{eq:U_sum_sqrt_N_inv}, \eqref{eq:U_sum_N_inv}}\\
&+
2e\sum_{t=1}^{T}\sum_{h=1}^{H}
\mathbf{1}\{(S_h^t,A_h^t)\notin\mathcal{U}\}b_h^t(S_h^t,A_h^t)
\quad\text{bounded using \eqref{eq:bonus_bound}, \eqref{eq:U_sum_sqrt_N_inv}, \eqref{eq:U_sum_N_inv}, \eqref{eq:total_visitation_via_boundary}}\\
&+
e\sum_{t=1}^{T}\sum_{h=1}^{H}
\mathbf{1}\{(S_h^t,A_h^t)\in\mathcal{U}\}
\nu_h^{t}(S_h^t,A_h^t)
\quad\text{bounded using \eqref{eq:U_sum_N_inv}, \eqref{eq:U_neighbour_condition}}\\
&+
e\sum_{t=1}^{T}\sum_{h=1}^{H}
\mathbf{1}\{(S_h^t,A_h^t)\notin\mathcal{U}\}
\nu_h^{t}(S_h^t,A_h^t)
\quad\text{bounded using \eqref{eq:U_sum_N_inv}}\\
&+
HG\sqrt{2T\log \dfrac{2}{\delta}}
\\
& \leq 2e\bigg[c_1\Big(\dfrac12R^{\max}+\dfrac12 D^{\max}\Big)\sqrt{L}\cdot 2\sqrt{TH|\mathcal{U}|}
+ c_2R^{\max}L\cdot 2|\mathcal{U}|\ln(1+T)\bigg]\\
&+ 2e\Big[c_1\Big(\dfrac12R^{\max}+\dfrac12 D^{\max}\Big)\sqrt{L} \cdot 2\sqrt{|\mathcal{U}^{C}|H|\BPS|n(\Delta)}
+ c_2R^{\max}L\cdot 2 |\mathcal{U}^{C}|\ln{(1+T)}\Big]\\
&+ e\cdot D^{\max}\cdot
|\mathcal{S}\setminus \PPS| 3HL_3\cdot 2 |\mathcal{U}|\ln{(1+T)}\\
&+ e\cdot D^{\max}\cdot
|\mathcal{S}| 3HL_3\cdot 2 |\mathcal{U}^{C}|\ln{(1+T)}\\
&+ eHG\sqrt{2T\log \dfrac{2}{\delta}}.
\end{align*}
Another route is decomposing with respect to trajectories that intersect 
$\mathcal{U}$:
\begin{align*}
\mathrm{Regret}(T) 
    &\le 
    e \sum_{t=1}^{T} \mathbf{1}\left\{\tau_{t}\cap \mathcal{U}\neq\emptyset\right\}\sum_{h=1}^{H} \left(2b_h^{t}(S_h^t,A_h^t)+ \nu_h^{t}(S_h^t,A_h^t)\right)\\
    &+ e \sum_{t=1}^{T} \mathbf{1}\left\{\tau_{t}\cap \mathcal{U}=\emptyset\right\}\sum_{h=1}^{H} \left(2b_h^{t}(S_h^t,A_h^t)+ \nu_h^{t}(S_h^t,A_h^t)\right) \quad\text{bounded using \eqref{eq:G_def}}\\
    &+HG\sqrt{2T\log \frac{2}{\delta}}\\
    &\le e \sum_{t=1}^{T}\sum_{h=1}^{H}
\mathbf{1}\{(S_h^t,A_h^t)\in\mathcal{U}\} \left(2b_h^{t}(S_h^t,A_h^t)+ \nu_h^{t}(S_h^t,A_h^t)\right) \quad\text{bounded using above result}\\
\\
    &+ e \sum_{t=1}^{T} \mathbf{1}\left\{\tau_{t}\cap\Gamma\neq\emptyset\right\}HG\quad\text{bounded using \eqref{eq:trajectory_intersection_bound}}\\
\\
    &+HG\sqrt{2T\log \frac{2}{\delta}}\\
    &\le 2e\Big[c_1\Big(\dfrac12R^{\max}+\dfrac12 D^{\max}\Big)\sqrt{L}\cdot 2\sqrt{TH|\mathcal{U}|}
+ c_2R^{\max}L\cdot 2|\mathcal{U}|\ln(1+T)\Big]\\
    &+ e\cdot D^{\max}\cdot
|\mathcal{S}\setminus \PPS| 3HL_3\cdot 2 |\mathcal{U}|\ln{(1+T)}\\
    &+ eHG|\BPS| n(\Delta)\\
    &+eHG\sqrt{2T\log \frac{2}{\delta}}\\
\end{align*}

Finally, using the bounds $|\mathcal{U}^{C}| \leq |\mathcal{S}||\mathcal{A}|$,
$|\mathcal{U}|\le|\mathcal{S}\setminus \PPS||\mathcal{A}|$,
and bounding $G$ and $n(\Delta)$ respectively with (\ref{eq:G_def}) and \eqref{eq:total_visitation_pseudosub}, we get 

\begin{align*}
\sum_{t=1}^{T}\bigl(V^{\star}(s_{0})-V^{\pi_{t}}(s_{0})\bigr)
& =\tilde O\Big(
(R^{\max}+D^{\max})\sqrt{T H |\mathcal{S}\setminus \PPS| |\mathcal{A}|}
+\sqrt{T}(R^{\max}+|\mathcal{S}| H D^{\max}) \\
& + R^{\max} |\mathcal{S}\setminus \PPS| |\mathcal{A}| + D^{\max} |\mathcal{S}\setminus \PPS|^2 |\mathcal{A}| H\\
& + \min\{\Theta_{1}(\Delta),\Theta_{2}(\Delta)\}
\Big)
\end{align*}
\[
\Theta_{1}(\Delta)=\dfrac{R^{\max}+D^{\max}}{\Delta}\sqrt{|\BPS||\mathcal{S}| |\mathcal{A}|H} + D^{\max}H|\mathcal{A}||\mathcal{S}^2| + R^{\max}|\mathcal{S}||\mathcal{A|}
\]
\[
\Theta_{2}(\Delta)=\dfrac{(1 + R^{\max})^2}{\Delta^2}|\BPS|(R^{\max}+|\mathcal{S}| H D^{\max})H
\]
\end{proof}

\section{Q-shaping: proof of Theorem \ref{thm:q_shaping}}

\begin{definition}[Action Value Function \sandwich{}]
The algorithm is given access to known functions $\lowv_{h}, \highv_{h}: \mathcal{S}_h \times \mathcal{A} \to \mathbb{R}$ for each step $h$, which provide a \sandwich{} condition on the optimal value function:
\[
    \forall (s,a,h), \quad \lowq_h(s,a) \le Q_h^\star(s,a) \le \highq_h(s,a).
\]
taking the $\argmax$ over actions includes the previous Definition~\ref{assumption:sandwich}:
\[
    \forall (s,a,h), \quad \max_{a} \lowq_h(s,a) =: \lowv_{h}(s) \le V_h^\star(s) \le \highv_{h}(s) := \max_{a}\highq_h(s,a).
\]
\end{definition}

\begin{theorem}[Q-shaping Regret]
    The regret of Algorithm \ref{alg:q_shaping} can be bounded as follows:
    \[
    \mathrm{Regret}(T)\le R^{\max}\Gamma_R(T)+D^{\max}\Gamma_D(T),
    \]
with
    \[ \Gamma_R(T)=2ec_1\sqrt{L}\sqrt{TH|\mathrm{PairEff}|}+4ec_2L|\mathrm{PairEff}|\ln(1+T)+\left(c_1\sqrt{L}+2c_2L\right)\sqrt{2T\log\dfrac{2}{\delta}},
    \]
    \[
    \Gamma_D(T)=2ec_1\sqrt{L}\sqrt{TH|\mathrm{PairEff}|}+6e|\mathcal{S}|H|\mathrm{PairEff}|L_3\ln(1+T)+\left(c_1\sqrt{L}+3|\mathcal{S}|HL_3\right)\sqrt{2T\log\dfrac{2}{\delta}},
    \]
    where 
    \[ \mathrm{PairEff} \coloneq \left \lbrace (s,a) \in \mathcal{S} \times \mathcal{A} ~ | ~ \highq_{h(s)}(s,a) \geq V^{\star}_{h(s)}(s) \right \rbrace \]
\end{theorem}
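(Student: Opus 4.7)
The plan is to adapt the V-shaping chain (Lemma \ref{lemma:total_ordering} and Propositions \ref{prop:regret_decomp} and \ref{prop:regret_decomp_whp}) to the Q-clipped update, and then exploit the new form of clipping to replace the set $\mathcal{S}\setminus\PPS$ by the reduced index set $\mathrm{PairEff}$ in every pigeonhole count.

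First I would re-establish the total-ordering lemma for Q-shaping. On $\mathcal{E}^{\mathrm{Off}}_\delta\cap\mathcal{E}^{\mathrm{On}}_\delta$, a backwards induction identical to the one proving Lemma~\ref{lemma:total_ordering} shows that clipping by $\highq_h\geq Q_h^\star$ preserves optimism, so $Q_h^\star\leq\widehat{Q}_h^t\leq\highq_h$ and $V_h^\star\leq\widehat{V}_h^t\leq\highv_h$ for every $(t,h,s,a)$. The key pruning observation then follows: if $\highq_h(s,a)<V_h^\star(s)$ then
\[
\widehat{Q}_h^t(s,a)\leq\highq_h(s,a)<V_h^\star(s)\leq\widehat{Q}_h^t(s,\pi^\star_h(s)),
\]
so the greedy argmax in Algorithm~\ref{alg:q_shaping} never selects $a$ at $s$. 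Consequently, every visited triple $(S_h^t,A_h^t,h)$ almost surely lies in $\mathrm{PairEff}$ on this event.

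Next I would observe that Proposition~\ref{prop:regret_decomp_whp} carries over unchanged, since its derivation only uses (i) the online-bonus validity \eqref{eq:on_bonus_validity}, (ii) the uniform bound $\widehat{V}_{h+1}^t - V_{h+1}^\star\leq D_{h+1}^{\max}$, and (iii) an Azuma--Hoeffding tail controlled by the constant $G$ from \eqref{eq:G_def}; all three still hold under Q-clipping thanks to $\widehat{V}_{h+1}^t\leq\highv_{h+1}$. Splitting the bonus bound \eqref{eq:bonus_bound} into its $R_{h+1}$-scaled and $D^{\max}_{h+1}$-scaled parts, and inserting the pruning indicator $\mathbf{1}\{(S_h^t,A_h^t)\in\mathrm{PairEff}\}$ (which is almost surely one), the residual sums reduce to two standard pigeonhole bounds restricted to $\mathrm{PairEff}$:
\[
\sum_{t,h}\frac{\mathbf{1}\{(S_h^t,A_h^t)\in\mathrm{PairEff}\}}{\sqrt{N_h^t(S_h^t,A_h^t)}}\leq 2\sqrt{TH|\mathrm{PairEff}|}
\]
by Cauchy--Schwarz, and
\[
\sum_{t,h}\frac{\mathbf{1}\{(S_h^t,A_h^t)\in\mathrm{PairEff}\}}{N_h^t(S_h^t,A_h^t)}\leq 2|\mathrm{PairEff}|\ln(1+T)
\]
by the harmonic inequality. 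Grouping all terms by their $R^{\max}$ vs.\ $D^{\max}$ prefactor (the $\nu_h^t$ contribution lies entirely in the $D^{\max}$ block and contributes the $|\mathcal{S}|HL_3$ factor), and splitting $G$ along the same lines, yields the claimed $R^{\max}\Gamma_R(T)+D^{\max}\Gamma_D(T)$ decomposition.

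The main obstacle is really the pruning argument: $\mathrm{PairEff}$ depends on the unknown $V_h^\star$, so one cannot simply drop pairs a posteriori. What makes the argument work is that the indicator $\mathbf{1}\{(S_h^t,A_h^t)\in\mathrm{PairEff}\}$ is almost surely one on $\mathcal{E}^{\mathrm{Off}}_\delta\cap\mathcal{E}^{\mathrm{On}}_\delta$, so the restriction is transparent to the Azuma--Hoeffding step of Proposition~\ref{prop:regret_decomp_whp}. This in turn hinges on the $\mathcal{G}_K$-measurability of $\highq_h$ guaranteed by Assumption~\ref{assumption:sandwich}: without the independence between the envelope and the online filtration $\mathcal{J}_t$, neither the online concentration nor the pruning could be chained together cleanly.
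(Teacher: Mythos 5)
Your proposal is correct and follows essentially the same route as the paper's proof: establish optimism under $\highq$-clipping by backward induction, use it to show the greedy policy never selects pairs outside $\mathrm{PairEff}$, then reuse the regret decomposition of Proposition~\ref{prop:regret_decomp_whp} with the pigeonhole lemmas \eqref{eq:U_sum_sqrt_N_inv} and \eqref{eq:U_sum_N_inv} applied to $\Gamma=\mathrm{PairEff}$, and finally group terms by their $R^{\max}$ versus $D^{\max}$ prefactors. Your closing remark about why the restriction to $\mathrm{PairEff}$ is legitimate despite its dependence on $V^\star$ (the indicator is almost surely one on the good event, and $\mathcal{G}_K$-measurability of the envelopes keeps the concentration arguments valid) matches the paper's implicit reasoning.
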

\begin{proof}
    For all stages $h$ and states $s$: 
\begin{align}
    \mathrm{Support}(\pi^t_h(\cdot|s)) & \subseteq \left \lbrace (s,a) \in \mathcal{S} \times \mathcal{A} ~ | ~ \highq_{h(s)}(s,a) \geq V^{\star}_{h(s)}(s) \right \rbrace = \mathrm{PairEff}  \label{eq:support_argument}
\end{align}
This is because when clipping the $Q$-values this way, the $Q$-estimates still maintain optimism meaning: 
\[
\forall (s,a,h,t) \in \mathcal{S}  \times \mathcal{A} \times [H] \times [T], \quad Q_h^{\star}(s,a) \leq \widehat{Q}_h^t(s,a) \quad\text{and}\quad V_h^{\star}(s) \leq \widehat{V}_h^t(s) 
\]
If
\[ 
    (s,a) \in \left \lbrace (s,a) \in \mathcal{S} \times \mathcal{A} ~ | ~ \highq_{h(s)}(s,a) < V^{\star}_{h(s)}(s) \right \rbrace = \left ( \mathcal{S} \times \mathcal{A} \right ) \setminus \mathrm{PairEff},
\]
then 
\begin{align*}
    \widehat{Q}_h^t(s,a)  & \leq \highq_h(s,a) \\
    & < V^{\star}_{h(s)}(s) = Q^{\star}_{h(s)}(s, \pi^{\star}(s)) \\
    & < \widehat{Q}_h^t(s,\pi^{\star}(s)).
\end{align*}
where the first inequality is by the clipping mecanism in Algorithm~\ref{alg:q_shaping}, the second is by the assumption on $(s,a)$ and the third by optimism. Thus in $\left (s,a,h(s)\right )$, action $a$ will never be the selected action as the policy is greedy.

We use the same bonus as in \eqref{eq:bonus_bound}, the results from \ref{lem:on_bonus_validity}, to prove that optimism still holds, a slight modification must be made. By backward induction on $h$ with base case $\widehat{V}_{H+1}^t =  0$ and $V_{H+1}^\star =  0$, so $V_{H+1}^\star\le \widehat{V}_{H+1}^t$ holds. Fix $h\in\{1,\dots,H\}$ and assume $\widehat{V}_{h+1}^t(s')\ge V_{h+1}^\star(s')$ for all $s'$. For any $(s,a)$,
\[
 r_h(s,a)+\langle \widehat{P}^{t}, \widehat{V}_{h+1}^t\rangle + b_h^t(s,a)
\ge r_h(s,a)+\langle \widehat{P}^{t}, V_{h+1}^\star\rangle + b_h^t(s,a),
\]
where the inequality uses our induction hypothesis. Using the property:
\[
\bont(s,a) \ge \bigl\langle P_h^\star-\widehat{P}^{t},\,V_{h+1}^\star\bigr\rangle(s,a),
\]
hence
\[
\langle \widehat{P}^{t}, V_{h+1}^\star\rangle + b_h^t(s,a)
\ge \langle \widehat{P}^{t}, V_{h+1}^\star\rangle + \langle P_h^\star-\widehat{P}^{t}, V_{h+1}^\star\rangle
= \langle P_h^\star, V_{h+1}^\star\rangle.
\]
Combining these gives us 
\[
 r_h(s,a)+\langle \widehat{P}^{t}, \widehat{V}_{h+1}^t\rangle + b_h^t(s,a) \ge r_h(s,a)+\langle P_h^\star, V_{h+1}^\star\rangle = Q_h^\star(s,a).
\]
By our assumption, we also have $Q_h^\star(s,a)\le \highq_h(s,a)$, therefore
\[
Q_h^t(s,a)=\min\{ r_h(s,a)+\langle \widehat{P}^{t}, \widehat{V}_{h+1}^t\rangle + b_h^t(s,a),\highq_h(s,a)\}\ge Q_h^\star(s,a).
\]
Taking the maximum over $a$ yields
\[
\widehat{V}_h^t(s)=\max_a Q_h^t(s,a)\ge \max_a Q_h^\star(s,a)=V_h^\star(s).
\]
The fact that optimism holds gives us the applicability of Proposition~\ref{prop:regret_decomp_whp} on its respective event \(\mathcal{E}^{\mathrm{Off}}_\delta\cap \mathcal{E}^{\mathrm{On}}_\delta \cap \mathcal{E}^{\mathrm{Conc}}_\delta \cap \mathcal{E}^{\mathrm{Mar}}_\delta \):
\begin{align*}
    \mathrm{Regret}(T) & \leq e\sum_{t=1}^{T}\sum_{h=1}^{H} \left(2b_h^{t}(S_h^t,A_h^t)+ \nu_h^{t}(S_h^t,A_h^t)\right)\mathbf{1}\{(S_h^t,A_h^t)\in\mathrm{PairEff}\} +HG\sqrt{2T\log \frac{2}{\delta}}\\
    & \le 2e\sum_{t=1}^{T}\sum_{h=1}^{H} b_h^{t}(S_h^t,A_h^t)\mathbf{1}\{(S_h^t,A_h^t)\in\mathrm{PairEff}\}\quad\text{bounded using \eqref{eq:bonus_bound}, \eqref{eq:U_sum_sqrt_N_inv}, \eqref{eq:U_sum_N_inv}, \eqref{eq:support_argument}}\\
    &+e\sum_{t=1}^{T}\sum_{h=1}^{H}D_{h+1}^{\max} d^{S_h^t,A_h^t}
    \frac{3 H L_3}
    {N_{h(S_h^t)}^t(S_h^t,A_h^t)}\mathbf{1}\{(S_h^t,A_h^t)\in\mathrm{PairEff}\}\quad\text{bounded using \eqref{eq:U_sum_sqrt_N_inv}, \eqref{eq:U_sum_N_inv}, \eqref{eq:support_argument}}\\
    &+eHG\sqrt{2T\log \frac{2}{\delta}}\\
    & \leq 2e\Big[c_1\Big(\dfrac12R^{\max}+\dfrac12 D^{\max}\Big)\sqrt{L} \cdot 2\sqrt{TH|\mathrm{PairEff}|}+ c_2R^{\max}L\cdot 2 |\mathrm{PairEff}|\ln(1+T)\Big)\Big] \\
&+ eD^{\max}\cdot3|\mathcal{S}|HL_{3}\cdot2|\mathrm{PairEff}|\ln(1+T)\\
&+eHG\sqrt{2T\log \frac{2}{\delta}},
\end{align*}
where we used that $2b_h^{t}(S_h^t,A_h^t)+ \nu_h^{t}(S_h^t,A_h^t)$ is almost surely bounded by $G$. Replacing $G$ using \ref{eq:G_def} we get: 
\[
\mathrm{Regret}(T)\le R^{\max}\Gamma_R(T)+D^{\max}\Gamma_D(T),
\]
with
\[
\Gamma_R(T)=2ec_1\sqrt{L}\sqrt{TH|\mathrm{PairEff}|}+4ec_2L|\mathrm{PairEff}|\ln(1+T)+e\left(c_1\sqrt{L}+2c_2L\right)\sqrt{2T\log\dfrac{2}{\delta}},
\]
\[
\Gamma_D(T)=2ec_1\sqrt{L}\sqrt{TH|\mathrm{PairEff}|}+6e|\mathcal{S}|H|\mathrm{PairEff}|L_3\ln(1+T)+e\left(c_1\sqrt{L}+3|\mathcal{S}|HL_3\right)\sqrt{2T\log\dfrac{2}{\delta}}.
\]
Then,
\[
\mathrm{Regret}(T)
\le
R^{\max}\tilde{\mathcal O}\Big(\sqrt{TH|\mathrm{PairEff}|}+|\mathrm{PairEff}|+\sqrt{T}\Big)
+
D^{\max}\tilde{\mathcal O}\Big(\sqrt{TH|\mathrm{PairEff}|}+|\mathcal{S}|H|\mathrm{PairEff}|+|\mathcal{S}|H\sqrt{T}\Big),
\]
where $\tilde{\mathcal O}$ hides polylog factors in $T,|\mathcal{S}|,|\mathcal{A}|,H$,$1/\delta$.
\end{proof}

\section{Proof of Theorem~\ref{thm:offline_q_shaping}}
\label{sec:final_results}

The following two theorems hold when 
\[
     K\ge \frac{8H}{d^{\mathsf{b}}_{\min}}\log\left(\frac{H|\mathcal{S}||\mathcal{A}|}{\delta}\right).
\]

\begin{theorem}[Offline V-Shaping]
    Replacing $D^{\max}$ in Theorem~\ref{thm:v_shaping_regret}'s regret with the bound from Proposition~\ref{prop:offline_width} we get that with probability at least $1-4\delta$, the regret after $T$ episodes of running Algorithm~\ref{alg:v_shaping} using the output of Algorithm~\ref{alg:offline} as an envelope satisfies:
\begin{align*}
\sum_{t=1}^{T}\bigl(V^{\star}(s_{0})-V^{\pi_{t}}(s_{0})\bigr)
&=\tilde O\Bigg(
R^{\max}\sqrt{T H |\mathcal{S}\setminus \PPS| |\mathcal{A}|}
+\sqrt{T}\,R^{\max}
+ R^{\max} |\mathcal{S}\setminus \PPS| |\mathcal{A}|
\\
&\qquad
+ \frac{1}{\sqrt{K d_{\min}^{\mathsf b}}}\Big[
H^{5/2}\sqrt{T H |\mathcal{S}\setminus \PPS| |\mathcal{A}|}
+ H^{7/2}\sqrt{T}\,|\mathcal{S}|
+ H^{7/2}|\mathcal{S}\setminus \PPS|^{2} |\mathcal{A}|
\Big]
\quad (\propto K^{-1/2})
\\
&\qquad
+ \frac{1}{K d_{\min}^{\mathsf b}}\Big[
H^{3}\sqrt{T H |\mathcal{S}\setminus \PPS| |\mathcal{A}|}
+ H^{4}\sqrt{T}\,|\mathcal{S}|
+ H^{4}|\mathcal{S}\setminus \PPS|^{2} |\mathcal{A}|
\Big]
\quad (\propto K^{-1})
\\
&\qquad
+ \min\{\Theta_{1}(\Delta),\Theta_{2}(\Delta)\}
\Bigg).
\end{align*}

\begin{align*}
\Theta_{1}(\Delta)
&=
\frac{R^{\max}}{\Delta}\sqrt{|\BPS||\mathcal S||\mathcal A|H}
+ R^{\max}|\mathcal S||\mathcal A|
\\
&\qquad
+ \frac{1}{\sqrt{K d_{\min}^{\mathsf b}}}\Big[
\frac{H^{5/2}}{\Delta}\sqrt{|\BPS||\mathcal S||\mathcal A|H}
+ H^{7/2}|\mathcal A||\mathcal S|^{2}
\Big]
\quad (\propto K^{-1/2})
\\
&\qquad
+ \frac{1}{K d_{\min}^{\mathsf b}}\Big[
\frac{H^{3}}{\Delta}\sqrt{|\BPS||\mathcal S||\mathcal A|H}
+ H^{4}|\mathcal A||\mathcal S|^{2}
\Big]
\quad (\propto K^{-1}).
\end{align*}

\begin{align*}
\Theta_{2}(\Delta)
&=
\frac{(1+R^{\max})^{2}}{\Delta^{2}}\,|\BPS|\,R^{\max} H
\\
&\qquad
+ \frac{1}{\sqrt{K d_{\min}^{\mathsf b}}}\Big[
\frac{(1+R^{\max})^{2}}{\Delta^{2}}\,|\BPS|\,|\mathcal S|\,H^{9/2}
\Big]
\quad (\propto K^{-1/2})
\\
&\qquad
+ \frac{1}{K d_{\min}^{\mathsf b}}\Big[
\frac{(1+R^{\max})^{2}}{\Delta^{2}}\,|\BPS|\,|\mathcal S|\,H^{5}
\Big]
\quad (\propto K^{-1}).
\end{align*}

\end{theorem}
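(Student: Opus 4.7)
The plan is to substitute the offline width guarantee from Proposition~\ref{prop:offline_width} into the online regret bound of Theorem~\ref{thm:v_shaping_regret} and then regroup the resulting terms by their decay rate in the offline sample size $K$. First I would fix the good event on which both ingredients apply simultaneously. Proposition~\ref{prop:regret_decomp_whp} gives $\mathbb{P}(\mathcal{E}^{\mathrm{Off}}_\delta \cap \mathcal{E}^{\mathrm{On}}_\delta \cap \mathcal{E}^{\mathrm{Conc}}_\delta \cap \mathcal{E}^{\mathrm{Mar}}_\delta) \geq 1-4\delta$, on which the V-shaping regret bound holds, while the standing hypothesis $K \geq \frac{8H}{d^{\mathsf b}_{\min}}\log(H|\mathcal{S}||\mathcal{A}|/\delta)$ lets Lemma~\ref{lem:high_prob_Nmin} add $\mathcal{E}^{\mathrm{min}}_\delta$ with probability at least $1-\delta$. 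A union bound together with a standard constant-factor re-calibration of the confidence parameter (absorbed into $\widetilde{\mathcal O}$ through its $\log(1/\delta)$ factors) keeps the overall failure probability at $4\delta$.

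Next, on this intersection, I would invoke Proposition~\ref{prop:offline_width} evaluated at $h=1$, which dominates every layer via $(H-h)(H-h+1) \leq H^2$, to get the uniform bound
\[
D^{\max} \leq 2H^2\left[c_1\sqrt{\frac{2HL_1}{K d^{\mathsf b}_{\min}}} + c_2\frac{2HL_1}{K d^{\mathsf b}_{\min}}\right] = \widetilde{\mathcal O}\!\left(\sqrt{\frac{H^5}{K d^{\mathsf b}_{\min}}} + \frac{H^3}{K d^{\mathsf b}_{\min}}\right).
\]
Substituting this into every occurrence of $D^{\max}$ inside Theorem~\ref{thm:v_shaping_regret} while keeping $R^{\max}$ untouched, each $D^{\max}$ summand fans out into one piece of order $K^{-1/2}$ and one of order $K^{-1}$. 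Collecting the resulting terms across the body of the bound and inside both $\Theta_1(\Delta)$ and $\Theta_2(\Delta)$ then produces the three blocks displayed in the statement: an $R^{\max}$-only block that is independent of $K$, a $K^{-1/2}$ block reflecting the Bernstein-type part of the offline width, and a $K^{-1}$ block reflecting its lower-order remainder.

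The main obstacle I anticipate is purely algebraic bookkeeping rather than any new probabilistic argument. The regret bound of Theorem~\ref{thm:v_shaping_regret} already carries roughly ten distinct summands (body plus $\Theta_1,\Theta_2$), and each of the ones multiplied by $D^{\max}$ splits into two subterms under the substitution, so the main care is cataloguing them so that the final statement groups them correctly by their $K$-decay. I would also double-check that $H \geq 1$ permits the mild simplifications used to compress polynomials in $H$, and that taking $\max_h$ in the definition of $D^{\max}$ is consistent with the layer-uniform bound supplied by Proposition~\ref{prop:offline_width}, which it is since the $h=1$ bound majorizes all layers.
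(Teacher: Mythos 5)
Your proposal is correct and matches the paper's own (very terse) argument: the paper likewise proves this result by directly substituting the $D^{\max}$ bound of Proposition~\ref{prop:offline_width} into Theorem~\ref{thm:v_shaping_regret} and regrouping terms by their $K$-decay, and your algebraic bookkeeping reproduces every displayed term. You are in fact slightly more careful than the paper on the probability accounting, since adding the event $\mathcal{E}^{\mathrm{min}}_\delta$ naively yields $1-5\delta$ and your proposed rescaling of $\delta$ (absorbed into the polylog factors) is the right fix.
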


\begin{theorem}[Offline Q-Shaping]
      Replacing $D^{\max}$ in Theorem~\ref{thm:offline_q_shaping}'s regret with the bound from Proposition~\ref{prop:offline_width} we get that with probability at least $1-4\delta$, the regret after $T$ episodes of running Algorithm~\ref{alg:q_shaping} using the output of Algorithm~\ref{alg:offline} as an envelope satisfies:
\begin{align*}
    \mathrm{Regret}(T) & \leq 
R^{\max}\tilde{\mathcal O}\left( \sqrt{TH|\mathrm{PairEff}|} + |\mathrm{PairEff}| + \sqrt{T} \right)\\
&+ \tilde{\mathcal O}\left[ \left( H^{5/2}\frac{1}{\sqrt{K d_{\min}^{\mathsf b}}} + H^{3}\frac{1}{K d_{\min}^{\mathsf b}} \right)\left( \sqrt{TH|\mathrm{PairEff}|} + |\mathcal{S}|H|\mathrm{PairEff}| + |\mathcal{S}|H\sqrt{T} \right) \right]
\end{align*}
\end{theorem}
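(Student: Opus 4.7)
The strategy is a two–stage composition: invoke the generic online bound of Theorem~\ref{thm:q_shaping} with the learned envelopes as inputs, then substitute the offline width control of Proposition~\ref{prop:offline_width} into the $D^{\max}$ term.

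\emph{Step 1: instantiate the envelope assumption.} I would set $\lowq_h \coloneq \olowq_h$ and $\highq_h \coloneq \ohighq_h$ from Algorithm~\ref{alg:offline}, with $\lowv_h,\highv_h$ defined as their layerwise $\max_a$. These quantities are $\mathcal G_K$-measurable by construction, and on the offline event $\mathcal E^{\mathrm{Off}}_\delta$ (see \eqref{eq:def_Eoff}) the Offline Ordering lemma gives $\olowq_h \le Q_h^\star \le \ohighq_h$ and $\olowv_h \le V_h^\star \le \ohighv_h$ uniformly in $(s,a,h)$, so Assumption~\ref{assumption:sandwich} holds with probability at least $1-\delta$.

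\emph{Step 2: apply the generic Q-shaping bound.} Theorem~\ref{thm:q_shaping} then yields, with probability at least $1-4\delta$, the decomposition
\[
\mathrm{Regret}(T) \le R^{\max}\widetilde{\mathcal O}\big(\sqrt{TH|\mathrm{PairEff}|}+|\mathrm{PairEff}|+\sqrt{T}\big) + D^{\max}\widetilde{\mathcal O}\big(\sqrt{TH|\mathrm{PairEff}|}+|\mathcal S|H|\mathrm{PairEff}|+|\mathcal S|H\sqrt{T}\big),
\]
where $R^{\max}$ and $D^{\max}$ are now $\mathcal G_K$-measurable random variables computed from the offline envelopes rather than deterministic quantities.

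\emph{Step 3: control $D^{\max}$ and union bound.} The proof of Proposition~\ref{prop:offline_width} actually establishes, for every $h\in[H]$ on $\mathcal E^{\mathrm{Off}}_\delta \cap \mathcal E^{\mathrm{min}}_\delta$, the layerwise inequality $\ohighv_h(s)-\olowv_h(s) \le 2(H-h)(H-h+1)\big[c_1\sqrt{L_1/N_{\min}}+c_2 L_1/N_{\min}\big]$. Maximizing over $h,s$ and using $N_{\min} \ge K d^{\mathsf b}_{\min}/(4H)$ under the sample-size assumption gives
\[
D^{\max} \le \widetilde{\mathcal O}\!\left(\sqrt{H^5/(K d^{\mathsf b}_{\min})}\;+\;H^3/(K d^{\mathsf b}_{\min})\right),
\]
with $L_1$ absorbed into $\widetilde{\mathcal O}$. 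Lemma~\ref{lem:high_prob_Nmin} ensures $\mathbb P(\mathcal E^{\mathrm{min}}_\delta)\ge 1-\delta$, and rescaling $\delta$ by a universal constant (absorbed into the polylog factors) preserves the advertised $1-4\delta$ probability budget when taking a union bound with the four events already used in Theorem~\ref{thm:q_shaping}. Plugging the bound on $D^{\max}$ into Step~2 and distributing produces exactly the three–term expression in the statement.

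\emph{Main obstacle.} The conceptual delicate point is that the envelopes used to scale bonuses and clip estimates are now \emph{random} rather than a priori fixed as in \cite{gupta_unpacking_2022}. This forces the online concentration inequalities to hold conditionally on $\mathcal G_K$, which is precisely the purpose of the coupling $(X_{k,h}^{s,a})$ and the auxiliary measure $f_\omega$ used in the proof of Lemma~\ref{lem:on_bonus_validity}: offline–online independence lets one invoke the empirical Bernstein inequality with $\mathcal G_K$-measurable integrands. Once that measurability point is settled, the composition of Steps~2 and 3 is essentially algebraic, and no new concentration argument is required.
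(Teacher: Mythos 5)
Your proposal is correct and follows essentially the same route as the paper, whose own proof is a one-line substitution of the width bound from Proposition~\ref{prop:offline_width} into the $D^{\max}$ term of the $Q$-shaping regret; your Steps 1--3 simply make explicit the instantiation of Assumption~\ref{assumption:sandwich} via the Offline Ordering lemma and the identification of the deep measurability/coupling point already handled in Lemma~\ref{lem:on_bonus_validity}. Your remark that adding $\mathcal E^{\mathrm{min}}_\delta$ to the union bound strictly requires a rescaling of $\delta$ to keep the stated $1-4\delta$ is a fair (and slightly more careful) observation than the paper's own bookkeeping.
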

\begin{proof}
     Replacing $D^{\max}$ in Theorem~\ref{thm:offline_q_shaping}'s regret with the bound from Proposition~\ref{prop:offline_width} gives us the result.
\end{proof}

\section{Technical Lemmas}

\begin{theorem}[Empirical Bernstein Inequality~\cite{maurer_empirical_2009}]
\label{thm:empirical_bernstein}
Let $Z,Z_{1},\dots,Z_{n}$ be independent and identically distributed random variables
taking values in $[0,1]$. Let $n\ge 2$ and fix $\delta\in(0,1)$. Let
$\mathbf{Z}=(Z_{1},\dots,Z_{n})$ and define the \textbf{unbiased} sample variance
\[
V_{n}(\mathbf{Z})
 = 
\frac{1}{n(n-1)}\sum_{1\le i<j\le n}(Z_{i}-Z_{j})^{2}
= 
\frac{1}{n-1}\sum_{i=1}^n\Big(Z_i -  \frac{1}{n}\sum_{i=1}^{n} Z_{i}\Big)^2
\]
Then, 
\[
\mathbb{P}\left( \mathbb{E}[Z]  -  \frac{1}{n}\sum_{i=1}^{n} Z_{i}
 \le 
\sqrt{\frac{2 V_{n}(\mathbf{Z}) \ln(2/\delta)}{n}}
 + 
\frac{7 \ln(2/\delta)}{3 (n-1)} \right ) \ge 1 - \delta
\]
Noting the \textbf{biased} sample variance
\[\widehat{\mathrm{Var}}(\mathbf{Z})
 = 
\frac{1}{n}\sum_{i=1}^n\Big(Z_i -  \frac{1}{n}\sum_{i=1}^{n} Z_{i}\Big)^2
\]
if we apply to $Z_i$ and $-Z_i$ and use \( \dfrac{1}{n-1} \leq \dfrac{2}{n} \) for \( n \geq 2\), we get the following form:
\[
\mathbb{P}\left( \left | \mathbb{E}[Z]  -  \frac{1}{n}\sum_{i=1}^{n} Z_{i} \right |
 \le 
2\sqrt{\frac{ \widehat{\mathrm{Var}}(\mathbf{Z}) \ln(4/\delta)}{n}}
 + 
\frac{14}{3}\frac{\ln(4/\delta)}{n} \right ) \ge 1 - \delta
\]
\end{theorem}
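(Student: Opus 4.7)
The plan is to follow the Maurer--Pontil route: combine classical Bernstein's inequality, which depends on the unknown true variance $\sigma^{2}=\mathrm{Var}(Z)$, with a sharp one-sided concentration bound of the empirical standard deviation $\sqrt{V_{n}(\mathbf{Z})}$ around $\sigma$, and then substitute one into the other via a union bound. The biased, two-sided form is a short corollary of the one-sided, unbiased form.

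The first step is a direct application of Bernstein's inequality for i.i.d.\ random variables in $[0,1]$: with probability at least $1-\delta'$,
\[
\mathbb{E}[Z]-\tfrac{1}{n}\textstyle\sum_{i=1}^{n}Z_{i}\;\le\;\sqrt{\tfrac{2\sigma^{2}\ln(1/\delta')}{n}}+\tfrac{\ln(1/\delta')}{3n}.
\]
The second step, which is the heart of the argument, establishes that with probability at least $1-\delta'$,
\[
\sigma\;\le\;\sqrt{V_{n}(\mathbf{Z})}+\sqrt{\tfrac{2\ln(1/\delta')}{n-1}}.
\]
The key observation is that $\sqrt{(n-1)V_{n}(\mathbf{Z})}=\|\mathbf{Z}-\bar{Z}_{n}\mathbf{1}\|_{2}$ is $1$-Lipschitz in the Euclidean norm on $[0,1]^{n}$ (as the composition of the orthogonal projection $I-\tfrac{1}{n}\mathbf{1}\mathbf{1}^{\top}$ with the Euclidean norm), so $\sqrt{V_{n}}$ is $1/\sqrt{n-1}$-Lipschitz. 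A self-bounding / entropy-method concentration inequality for this Lipschitz function then delivers the correct sub-Gaussian rate $1/(n-1)$ in the tail, and the desired one-sided direction is obtained by combining the upper tail for $\mathbb{E}[\sqrt{V_{n}}]-\sqrt{V_{n}}$ with the elementary lower bound $\mathbb{E}[\sqrt{V_{n}}]\ge \sigma-O(1/\sqrt{n-1})$, derivable from the unbiasedness $\mathbb{E}[V_{n}]=\sigma^{2}$.

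The last step is to union-bound the two events at level $\delta'=\delta/2$, substitute the $\sigma$-estimate into the Bernstein tail, expand $\sqrt{V_{n}}+\sqrt{2L/(n-1)}$ inside the square root using $\sqrt{a+b}\le\sqrt{a}+\sqrt{b}$, and use $\sqrt{2L/n}\cdot\sqrt{2L/(n-1)}\le L/n+L/(n-1)\le 2L/(n-1)$ to absorb the cross term. This produces the first displayed bound with the $7/3$ remainder constant. The second displayed bound (two-sided, biased variance) follows by applying the one-sided form to both $(Z_{i})$ and $(1-Z_{i})$ with a further union bound (costing the $\ln(4/\delta)$), replacing $V_{n}$ by $\widehat{\mathrm{Var}}=\tfrac{n-1}{n}V_{n}\le V_{n}$, and using $1/(n-1)\le 2/n$ for $n\ge 2$ to rewrite the remainder as $14\ln(4/\delta)/(3n)$.

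The main obstacle is Step 2: obtaining the sharp $1/\sqrt{n-1}$ deviation rate for $\sqrt{V_{n}}-\sigma$ requires the self-bounding argument applied to the Euclidean-Lipschitz function $\sqrt{V_{n}}$. A naive coordinate-wise McDiarmid applied to $V_{n}$ yields only $|V_{n}-\sigma^{2}|=O(1/\sqrt{n})$, which via the crude inequality $\sigma\le\sqrt{V_{n}}+\sqrt{|V_{n}-\sigma^{2}|}$ translates into a useless $n^{-1/4}$ rate for $\sqrt{V_{n}}$ and would destroy the entire point of producing a sample-variance remainder of order $1/n$. The sharpness of the Euclidean-Lipschitz route, together with the tight control of $\mathbb{E}[\sqrt{V_{n}}]$ near $\sigma$, is the technical core of the Maurer--Pontil argument.
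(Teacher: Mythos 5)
The paper does not actually prove the first display: it is taken verbatim from \citet{maurer_empirical_2009} as a black-box citation, and the only argument the paper supplies is the one-line derivation of the second (two-sided, biased-variance) form from the first. Your Step 4 reproduces exactly that derivation --- union bound over the pair $(Z_i)$ and $(1-Z_i)$ giving the $\ln(4/\delta)$, the identity $V_n=\tfrac{n}{n-1}\widehat{\mathrm{Var}}$ combined with $\tfrac{1}{n-1}\le\tfrac{2}{n}$ giving the factor $2$ in front of the square root and the $14/3$ remainder --- and your use of $1-Z_i$ is in fact cleaner than the paper's ``$Z_i$ and $-Z_i$'', since $-Z_i\notin[0,1]$. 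Your Steps 1--3 go beyond the paper and reconstruct the Maurer--Pontil proof itself: Bernstein with the true variance, a one-sided bound $\sigma\le\sqrt{V_n}+\sqrt{2\ln(1/\delta')/(n-1)}$, a union bound at $\delta'=\delta/2$, and the arithmetic $\tfrac{2L}{n-1}+\tfrac{L}{3n}\le\tfrac{7L}{3(n-1)}$ for the cross term, which does yield the stated $7/3$ constant. This is the correct architecture and matches the cited source.

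One imprecision in your Step 2 is worth flagging. As literally described --- sub-Gaussian concentration of the $1/\sqrt{n-1}$-Lipschitz function $\sqrt{V_n}$ around its \emph{mean}, followed by the separate estimate $\mathbb{E}[\sqrt{V_n}]\ge\sigma-O(1/\sqrt{n-1})$ --- the argument would produce $\sigma\le\sqrt{V_n}+\sqrt{2\ln(1/\delta')/(n-1)}+O(1/\sqrt{n-1})$, and the extra additive term (which does not shrink with $\delta'$) would contaminate the cross-term computation and destroy the exact $7/3$ constant. The Maurer--Pontil route you allude to with ``self-bounding / entropy method'' avoids the detour through $\mathbb{E}[\sqrt{V_n}]$ entirely: the self-bounding inequality is applied directly to $(n-1)V_n$, whose exact mean is $(n-1)\sigma^2$, giving $\Pr\bigl(\sqrt{n-1}\,\sigma-\sqrt{(n-1)V_n}>t\bigr)\le e^{-t^2/2}$ with no mean-shift correction. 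You name the right tool but describe the wrong decomposition; with that repaired, the proof is complete and the constants close.
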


\begin{theorem}[Azuma-Hoeffding Inequality~\cite{hoeffding_probability_1963, azuma_weighted_1967}]
\label{thm:AH}
Let $\{M_n\}_{n\ge 0}$ be a martingale with differences $D_n := M_n - M_{n-1}$ satisfying
$|D_n|\le c_n$ almost surely.
Then, for all $n\ge 1$ and all $\varepsilon>0$,
\[
\mathbb{P}\left(|M_n|\ge \varepsilon\right)\le 2\exp\left(-\frac{\varepsilon^2}{2\sum_{i=1}^n c_i^2}\right).
\]
Equivalently, for all $n\ge 1$ and all $\delta\in(0,1)$,
\[
\mathbb{P}\left(|M_n|\le \sqrt{2\left(\sum_{i=1}^n c_i^2\right)\ln\frac{2}{\delta}}\right)\ge 1-\delta.
\]
\end{theorem}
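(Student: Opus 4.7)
The plan is to prove this classical concentration inequality via the exponential Chernoff-Markov method combined with Hoeffding's lemma applied to each martingale difference, then a union bound to obtain the two-sided form.

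First I would establish a one-sided bound. Fix $\lambda > 0$ and apply Markov's inequality to the nonnegative random variable $e^{\lambda M_n}$:
\[
\mathbb{P}(M_n \ge \varepsilon) = \mathbb{P}(e^{\lambda M_n} \ge e^{\lambda \varepsilon}) \le e^{-\lambda \varepsilon}\, \mathbb{E}[e^{\lambda M_n}].
\]
The next step is to control the moment generating function via the martingale structure. Writing $M_n = M_{n-1} + D_n$ and using the tower property with the filtration $\{\mathcal{F}_k\}$ to which $M_k$ is adapted,
\[
\mathbb{E}[e^{\lambda M_n}] = \mathbb{E}\bigl[e^{\lambda M_{n-1}}\, \mathbb{E}[e^{\lambda D_n} \mid \mathcal{F}_{n-1}]\bigr].
\]
Because $\{M_n\}$ is a martingale, $\mathbb{E}[D_n \mid \mathcal{F}_{n-1}] = 0$, and by hypothesis $|D_n| \le c_n$ almost surely. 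I would then invoke Hoeffding's lemma, which states that for any random variable $X$ with mean zero and $|X| \le c$,
\[
\mathbb{E}[e^{\lambda X}] \le \exp\bigl(\lambda^2 c^2 / 2\bigr),
\]
applied conditionally on $\mathcal{F}_{n-1}$ to obtain $\mathbb{E}[e^{\lambda D_n} \mid \mathcal{F}_{n-1}] \le \exp(\lambda^2 c_n^2 / 2)$ almost surely. Iterating this bound $n$ times yields
\[
\mathbb{E}[e^{\lambda M_n}] \le \exp\Bigl(\tfrac{\lambda^2}{2} \sum_{i=1}^n c_i^2\Bigr).
\]

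Plugging this back into the Chernoff bound and optimizing over $\lambda > 0$ by setting $\lambda = \varepsilon / \sum_{i=1}^n c_i^2$ gives
\[
\mathbb{P}(M_n \ge \varepsilon) \le \exp\Bigl(-\frac{\varepsilon^2}{2 \sum_{i=1}^n c_i^2}\Bigr).
\]
Finally, since $\{-M_n\}$ is also a martingale with differences $-D_n$ satisfying $|-D_n| \le c_n$, the identical argument yields the symmetric bound on $\mathbb{P}(M_n \le -\varepsilon)$. A union bound combines the two tails into the factor of $2$, giving the stated inequality. The equivalent high-probability form follows by setting $\delta = 2 \exp(-\varepsilon^2 / (2 \sum c_i^2))$ and solving for $\varepsilon$.

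The main technical input is Hoeffding's lemma for bounded centered variables; everything else is routine manipulation of the exponential moment. No step presents a genuine obstacle since the result is textbook, and I would simply cite Hoeffding's lemma rather than reprove it (its standard proof uses convexity of $x \mapsto e^{\lambda x}$ on $[-c, c]$ to compare with the secant line, then optimizes a logarithmic cumulant).
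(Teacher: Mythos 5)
Your proof is correct: the Chernoff--Markov exponential moment argument, the conditional application of Hoeffding's lemma to each bounded centered difference $D_n$ given $\mathcal{F}_{n-1}$, the choice $\lambda = \varepsilon/\sum_{i=1}^n c_i^2$, and the union bound over the two tails together give exactly the stated inequality, and the high-probability form follows by inverting $\delta = 2\exp(-\varepsilon^2/(2\sum_i c_i^2))$. The paper itself provides no proof of this theorem --- it is stated as a cited classical result --- so there is nothing to compare against; your argument is the standard textbook derivation and is complete as written.
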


\begin{theorem}[Multiplicative Chernoff for Bernoulli Sums \cite{chernoff_measure_1952}]
\label{thm:chernoff}
Suppose $X_1, \dots, X_n$ are independent random variables taking values in $\{0, 1\}$. Let $X = \sum_{i=1}^n X_i$ and $\mu = \mathbb{E}[X]$. For any $0 < \delta < 1$,
\begin{align*}
\Pr(X \leq (1-\delta)\mu) & \leq \left(\frac{e^{-\delta}}{(1-\delta)^{1-\delta}}\right)^\mu \\
    & \leq e^{-\delta^2 \mu/2}.
\end{align*}
\end{theorem}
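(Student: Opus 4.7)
This is the classical multiplicative Chernoff lower-tail bound, so the plan is to follow the textbook moment-generating-function argument, which splits into four short steps that I would carry out in order.

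First, I would apply the exponential Markov inequality to the lower tail: for any $t > 0$,
$$\Pr(X \le (1-\delta)\mu) = \Pr(e^{-tX} \ge e^{-t(1-\delta)\mu}) \le e^{t(1-\delta)\mu}\,\mathbb{E}[e^{-tX}],$$
introducing a free parameter $t$ to be optimized later. Second, I would factor the MGF using independence: writing $p_i = \Pr(X_i=1)$ so that $\mu = \sum_i p_i$,
$$\mathbb{E}[e^{-tX}] = \prod_{i=1}^n \bigl(1 + p_i(e^{-t}-1)\bigr) \le \exp\bigl(\mu(e^{-t}-1)\bigr),$$
where the inequality uses $1+x \le e^x$ term by term (note $e^{-t}-1 < 0$, so each factor stays in $(0,1]$). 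Combining yields $\Pr(X\le (1-\delta)\mu) \le \exp\!\bigl(\mu(e^{-t}-1) + t(1-\delta)\mu\bigr)$ for every admissible $t$.

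Third, I would optimize the exponent over $t > 0$. Differentiating with respect to $t$ gives the stationarity condition $-e^{-t} + (1-\delta) = 0$, hence $t^\star = -\ln(1-\delta)$, which is strictly positive precisely because $\delta \in (0,1)$. Substituting $e^{-t^\star} = 1-\delta$ back collapses the exponent to $\mu\bigl[-\delta - (1-\delta)\ln(1-\delta)\bigr]$, which rearranges to $\bigl(e^{-\delta}/(1-\delta)^{1-\delta}\bigr)^{\mu}$, giving the first inequality of the theorem.

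Finally, to pass from this Kullback–Leibler style bound to the Gaussian-type form, it suffices to show $-\delta - (1-\delta)\ln(1-\delta) \le -\delta^2/2$ on $(0,1)$. Letting $g(\delta) := -\delta - (1-\delta)\ln(1-\delta) + \delta^2/2$, one checks $g(0)=0$ and $g'(\delta) = \ln(1-\delta) + \delta \le 0$ since $\ln(1-\delta) \le -\delta$ for $\delta \in [0,1)$; therefore $g$ is non-increasing and hence non-positive on $[0,1)$, which gives the second inequality. No step here is genuinely hard; the only care required is sign bookkeeping during the optimization and in differentiating $(1-\delta)\ln(1-\delta)$, so I do not anticipate a real obstacle.
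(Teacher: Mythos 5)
Your proof is correct: the exponential Markov bound, the factorization of the moment generating function via independence and $1+x\le e^x$, the optimization at $t^\star=-\ln(1-\delta)$, and the elementary comparison $-\delta-(1-\delta)\ln(1-\delta)\le-\delta^2/2$ are all carried out accurately. The paper states this result as a cited classical theorem (Chernoff, 1952) and gives no proof of its own, so there is nothing to compare against; your argument is the standard one and fills that gap correctly.
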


The following results are in the context of an online setting using definitions from Subsection~\ref{sec:counts}.
\begin{lemma}[\cite{gupta_unpacking_2022} Corollary B.7]
    \label{lemma:borrowed}
With probability at least $1-\delta$ for all $t \in \mathbb{N}, s \in \mathcal{S}, a \in \mathcal{A}$ and all $f : \mathcal{S} \to [0,B]$, we have
\[
\left| 
    \left \langle \widehat{P}_h^{t}(\cdot|s,a) - P^\star(\cdot|s,a) , f \right \rangle
\right|
\le 
\frac{\mathbb{E}_{s' \sim P^\star(\cdot|s,a)}\left[f(s')\right]}{H}
+ 
B d^{s,a}
    \frac{3   H L_3}
    {N_{h(s)}^t(s,a)}
\]
Where 
\[d^{s,a} = d_{h(s)}^{s,a} \coloneqq |\mathrm{support}(P^\star(\cdot|s,a))|\] 
and $h(s)$ corresponds to the horizon index of the state partitions that contain state $s$.
\end{lemma}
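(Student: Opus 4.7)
The plan is to reduce the stated inequality to a per-coordinate Bernstein bound for each next-state probability, sum over the support of $P^\star(\cdot\mid s,a)$, and use a weighted AM-GM step to convert the variance contribution into a small fraction of $\mathbb{E}_{s'\sim P^\star}[f(s')]$ (the additive $1/H$ term) plus a clean $B d^{s,a} H L_3 / N$ remainder. The ``for all $f$'' quantification becomes free once the coordinatewise concentration holds uniformly, because the right-hand side depends on $f$ only through linear combinations of the per-coordinate deviations.

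First I would reproduce the coupling argument already used in Lemma~\ref{lem:on_bonus_validity}: pre-generate an i.i.d.\ array $(X_{k,h}^{s,a})_{k\le T}$ with law $P_h^\star(\cdot\mid s,a)$, independent of $\mathcal G_K$, and couple the online dynamics so that $\widehat P_h^{t}(\cdot\mid s,a)=\widehat P_h^{[n]}(\cdot\mid s,a)$ on $\{N_h^t(s,a)=n\}$. This eliminates the random-stopping issue: for each fixed $n$ and each candidate $s'\in\mathcal S$, the indicators $\mathbf 1\{X_{k,h}^{s,a}=s'\}$ are i.i.d.\ Bernoulli$(P^\star(s'\mid s,a))$, so the classical Bernstein inequality gives, with probability $\ge 1-\delta'$,
\[
\bigl|\widehat P_h^{[n]}(s'\mid s,a)-P^\star(s'\mid s,a)\bigr|\le \sqrt{\frac{2\,P^\star(s'\mid s,a)\,\log(2/\delta')}{n}}+\frac{2\log(2/\delta')}{3n}.
\]
A union bound over $(t,h,s,a,s')\in[T]\times[H]\times\mathcal S\times\mathcal A\times\mathcal S$ with $\delta'=\delta/(T H|\mathcal S|^2|\mathcal A|)$ turns this into a statement simultaneously valid for every $n=N_h^t(s,a)$ and absorbs the log factors into $L_3$.

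Second, for an arbitrary $f:\mathcal S\to[0,B]$ I would write
\[
\bigl|\langle \widehat P_h^t(\cdot\mid s,a)-P^\star(\cdot\mid s,a),f\rangle\bigr|\le \sum_{s'\in\mathrm{supp}(P^\star(\cdot\mid s,a))}\bigl|\widehat P_h^t(s'\mid s,a)-P^\star(s'\mid s,a)\bigr|\,f(s'),
\]
noting that both kernels vanish outside $\mathrm{supp}(P^\star(\cdot\mid s,a))$, so that at most $d^{s,a}$ terms contribute. Plugging in the Bernstein bound and applying the weighted AM-GM inequality $\sqrt{xy}\le \tfrac{x}{2H}+\tfrac{Hy}{2}$ with $x=P^\star(s'\mid s,a)f(s')$ and $y=\tfrac{2f(s')\log(2/\delta')}{n}\le \tfrac{2B\log(2/\delta')}{n}$ yields
\[
\sqrt{\tfrac{2\,P^\star(s'\mid s,a)\,\log(2/\delta')}{n}}\,f(s')\le \frac{P^\star(s'\mid s,a)\,f(s')}{2H}+\frac{HB\log(2/\delta')}{n}.
\]
Summing over $s'\in\mathrm{supp}(P^\star(\cdot\mid s,a))$ collects the first terms into $\mathbb{E}_{s'\sim P^\star(\cdot\mid s,a)}[f(s')]/(2H)$ and multiplies the additive terms by $d^{s,a}$, producing the desired bound after absorbing $\tfrac{1}{2H}+\tfrac{2}{3H}$ into the constants and using $\log(2/\delta')\le L_3$.

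The main obstacle I anticipate is purely bookkeeping rather than conceptual: ensuring the union bound does not inflate the log factor beyond the advertised $L_3=\log(T|\mathcal S||\mathcal A|H/\delta)$. Two tricks keep it under control: the coupling already gives i.i.d.\ samples so no anytime martingale refinement is needed, and the coordinate-by-coordinate bound only has to be taken over states $s'$ in the support, so the ``extra'' $|\mathcal S|$ in the union bound can be moved into the $d^{s,a}$ factor on the right-hand side (rather than inside the log), which is exactly how $L_3$ absorbs one, rather than two, powers of $|\mathcal S|$. The randomness of $N_h^t(s,a)$ itself is handled by the coupling plus a union bound over $n\in[T]$, without any self-normalized martingale concentration.
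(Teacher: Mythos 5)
The paper offers no proof of this lemma---it is imported verbatim from \cite{gupta_unpacking_2022} (Corollary B.7)---and your argument is precisely the standard proof of that result: coordinatewise Bernstein control of $|\widehat P_h^{t}(s'\mid s,a)-P^\star(s'\mid s,a)|$, summation over the $d^{s,a}$ states in the support, and the AM--GM split $\sqrt{xy}\le \tfrac{x}{2H}+\tfrac{Hy}{2}$ to trade the variance term for $\mathbb{E}_{s'\sim P^\star}[f(s')]/H$ plus a $B\,d^{s,a}HL_3/N$ remainder, with uniformity over $f$ coming for free from the coordinatewise bounds. The one imprecision is your closing remark: the extra power of $|\mathcal S|$ from unioning over $s'$ does sit inside the logarithm (giving $\ln(2TH|\mathcal S|^2|\mathcal A|/\delta)\le 2L_3$, absorbed by the constant $3$) rather than being ``moved into'' the $d^{s,a}$ prefactor, but this is cosmetic and does not affect correctness.
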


\begin{lemma}[\cite{gupta_unpacking_2022} Lemma C.5]
For any set \( \Gamma \subseteq \mathcal{S}\times\mathcal{A} \), we have
\begin{align}
    \sum_{t,h}\frac{\mathbf{1}\{(S_h^t,A_h^t)\in\Gamma \}}{N_h^t(S_h^t,A_h^t)}
    &\le 2 \log \left ( 1 + \sum_{(s,a)\in\Gamma }N^{T}_{h(s)}(s,a)\right )\le  2|\Gamma |\log(1+T).
    \label{eq:U_sum_N_inv}
\end{align}
\end{lemma}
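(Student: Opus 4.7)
The plan is to reduce the double sum over $(t,h)$ to a collection of per-pair sums via swapping orders of summation, then bound each per-pair contribution using a harmonic-sum estimate. Because the MDP is layered, every state $s$ belongs to a unique layer $h(s)$, so the indicator $\mathbf{1}\{(S_h^t, A_h^t) \in \Gamma\}$ is nonzero only when $h = h(s)$. Swapping yields
\[
\sum_{t,h} \frac{\mathbf{1}\{(S_h^t, A_h^t) \in \Gamma\}}{N_h^t(S_h^t, A_h^t)} = \sum_{(s,a) \in \Gamma} \sum_{t=1}^{T} \frac{\mathbf{1}\{(S_{h(s)}^t, A_{h(s)}^t) = (s,a)\}}{N_{h(s)}^t(s,a)},
\]
decoupling the problem into $|\Gamma|$ independent one-dimensional sums, one per state–action pair.

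For a fixed pair $(s,a) \in \Gamma$, let $n \coloneqq N_{h(s)}^T(s,a)$ be its final visit count and index its visits chronologically by $k = 1, \dots, n$. Adopting the convention $\max\{N_h^t(s,a), 1\}$ at the first visit (consistent with the paper's separate treatment of the $N \in \{0, 1\}$ case in the bonus definitions), the $k$-th term in the inner sum equals $1/k$, so the inner sum is at most the harmonic number $H_n = \sum_{k=1}^{n} 1/k$. The classical estimate $H_n \le 1 + \ln n \le 2 \ln(1+n)$, valid for $n \ge 1$, then gives the per-pair bound $2 \ln\bigl(1 + N_{h(s)}^T(s,a)\bigr)$.

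To obtain the two inequalities in the statement, I would assemble as follows. Summing the per-pair bound over $(s,a) \in \Gamma$ and using $N_{h(s)}^T(s,a) \le T$ pointwise immediately yields the rightmost bound $2|\Gamma|\ln(1+T)$. For the tighter middle expression $2\ln\bigl(1 + \sum_{(s,a)} N_{h(s)}^T(s,a)\bigr)$, the argument is best performed globally: enumerate the visits to pairs in $\Gamma$ across $(t,h)$ in chronological order as $j = 1, \dots, M$ with $M = \sum_{(s,a) \in \Gamma} N_{h(s)}^T(s,a)$; at the $j$-th such visit the denominator $N_{h(s)}^t(s,a)$ is the per-pair count, which is in particular at least the per-pair index minus one, and a single global harmonic estimate $\sum_{j=1}^{M} 1/j \le 2\ln(1+M)$ closes the proof.

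The main obstacle, more bookkeeping than technical, is the $N = 0$ edge case. The paper's definition $N_h^t(s,a) = \sum_{i<t}\mathbf{1}\{\cdots\}$ makes the denominator vanish at the first visit of each pair, so the sum on the left only makes sense under the $\max\{N, 1\}$ convention (or, equivalently, shifting to $N + 1$). Once this is settled, the rest is a routine potential-function argument familiar from UCB-style regret analyses; the only substantive choice is selecting the harmonic estimate so as to match the constant $2$ in the stated bound.
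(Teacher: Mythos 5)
The paper does not actually prove this lemma; it imports it verbatim as Lemma~C.5 of \cite{gupta_unpacking_2022}, so your proposal has to stand on its own. Your per-pair decomposition is sound and is the standard argument: swapping the order of summation, bounding each pair's contribution by a harmonic sum under the $\max\{N,1\}$ convention (note the first \emph{two} visits each contribute $1$, so the per-pair sum is $1+H_{n-1}$ rather than $H_n$, but the bound $1+H_{n-1}\le 2\log(1+n)$ still checks out), and then using $N^T_{h(s)}(s,a)\le T$ correctly delivers the rightmost bound $2|\Gamma|\log(1+T)$, which is the only form the paper ever uses downstream (e.g.\ in the proofs of Theorems~\ref{thm:v_shaping_regret} and \ref{thm:q_shaping}).

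The gap is in your treatment of the middle expression. Your global chronological enumeration argues that at the $j$-th visit to $\Gamma$ the denominator is the per-pair count; but that count is \emph{at most} $j-1$, so $1/\max\{N,1\}\ge 1/j$ and the global harmonic sum $\sum_{j=1}^{M}1/j$ is a \emph{lower} bound on the left-hand side, not an upper bound --- the inequality points the wrong way. No repair is possible, because the middle inequality as printed is false: with $\Gamma$ consisting of two pairs each visited twice the left-hand side equals $4$ while $2\log(1+4)\approx 3.22$; asymptotically, two pairs each visited $n$ times give roughly $2\ln n + 2(1+\gamma)$ on the left versus $2\ln(1+2n)\approx 2\ln n + 2\ln 2$. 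What your per-pair argument genuinely establishes is
\begin{equation*}
\sum_{t,h}\frac{\mathbf{1}\{(S_h^t,A_h^t)\in\Gamma\}}{\max\{N_h^t(S_h^t,A_h^t),1\}}
\;\le\;
2\sum_{(s,a)\in\Gamma}\log\bigl(1+N^{T}_{h(s)}(s,a)\bigr)
\;=\;
2\log\prod_{(s,a)\in\Gamma}\bigl(1+N^{T}_{h(s)}(s,a)\bigr)
\;\le\;
2|\Gamma|\log(1+T),
\end{equation*}
and the printed middle term is almost certainly a transcription of this $\log\prod(1+N)$ into $\log(1+\sum N)$. You should state the corrected intermediate bound rather than attempt to prove the one written, and you should flag the discrepancy rather than present an argument for it.
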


\begin{lemma}[\cite{agarwal2019reinforcement} Lemma 7.5]
For any set \( \Gamma \subseteq \mathcal{S}\times\mathcal{A} \), we have
\begin{align}
    \sum_{t,h}\frac{\mathbf{1}\{(S_h^t,A_h^t)\in\Gamma \}}{\sqrt{N_h^t(S_h^t,A_h^t)}}
    &\le 2 \sqrt{|\Gamma | \sum_{(s,a)\in\Gamma }N^{T}_{h(s)}(s,a)}\le 2\sqrt{|\Gamma |TH}.
    \label{eq:U_sum_sqrt_N_inv}
\end{align}
\end{lemma}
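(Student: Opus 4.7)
The plan is to reduce this to a standard potential/pigeonhole argument in three moves: swap the order of summation, collapse the sum over visits of each fixed $(s,a)$ into a harmonic-type sum $\sum_{n} 1/\sqrt{n}$, and close with Cauchy--Schwarz followed by the elementary total-visit count.

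First I would use the layered structure. Since each state $s$ lives in a unique layer $h(s)$, the indicator $\mathbf{1}\{(S_h^t,A_h^t)=(s,a)\}$ vanishes whenever $h\neq h(s)$. Exchanging the order of summation gives
\[
\sum_{t=1}^{T}\sum_{h=1}^{H}\frac{\mathbf{1}\{(S_h^t,A_h^t)\in\Gamma\}}{\sqrt{N_h^t(S_h^t,A_h^t)}}
=\sum_{(s,a)\in\Gamma}\sum_{t=1}^{T}\frac{\mathbf{1}\{(S_{h(s)}^t,A_{h(s)}^t)=(s,a)\}}{\sqrt{N_{h(s)}^t(s,a)}}.
\]

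Second, I would fix $(s,a)\in\Gamma$ with $h=h(s)$ and note that, as $t$ ranges over the episodes where $(s,a)$ is visited at step $h$, the count $N_h^t(s,a)=\sum_{i<t}\mathbf{1}\{(S_h^i,A_h^i)=(s,a)\}$ takes the successive values $0,1,\dots,N_h^T(s,a)-1$. Using the convention $N_h^t\vee 1$ to handle the first visit (consistent with the paper's definition $b_h^{t}=R_{h+1}$ when $N_h^t\in\{0,1\}$), the inner sum telescopes into $\sum_{n=1}^{N_h^T(s,a)}1/\sqrt{n}$. A standard integral comparison, exploiting monotonicity of $1/\sqrt{x}$, yields
\[
\sum_{n=1}^{N}\frac{1}{\sqrt{n}}\le 1+\int_{1}^{N}\frac{dx}{\sqrt{x}}=2\sqrt{N}-1\le 2\sqrt{N}.
\]
Therefore each $(s,a)\in\Gamma$ contributes at most $2\sqrt{N_{h(s)}^T(s,a)}$.

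Third, I would apply Cauchy--Schwarz on the vector of square-roots against the all-ones vector of length $|\Gamma|$:
\[
\sum_{(s,a)\in\Gamma}2\sqrt{N_{h(s)}^T(s,a)}\le 2\sqrt{|\Gamma|\sum_{(s,a)\in\Gamma}N_{h(s)}^T(s,a)},
\]
which is exactly the first claimed inequality. The second inequality follows from the elementary count $\sum_{(s,a)\in\Gamma}N_{h(s)}^T(s,a)\le\sum_{h=1}^{H}\sum_{(s,a)\in\mathcal{S}_h\times\mathcal{A}}N_h^T(s,a)=TH$, since each of the $T$ episodes contributes exactly one visit per layer.

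There is no serious obstacle: the argument is essentially bookkeeping plus two classical inequalities. The only place where care is required is the first-visit convention, where $N_h^t(s,a)=0$ would nominally cause a division by zero; this is harmless either by absorbing those at most $|\Gamma|$ terms into the bound or by interpreting $1/\sqrt{N_h^t}$ via the paper's $R_{h+1}$ cap, since neither alternative changes the order of magnitude of the resulting bound.
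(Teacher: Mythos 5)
The paper does not prove this lemma at all---it imports it verbatim as Lemma~7.5 of \cite{agarwal2019reinforcement}---and your argument is exactly the standard one behind that result: regroup by $(s,a)$, bound the resulting harmonic-type sum $\sum_n n^{-1/2}$ by $2\sqrt{N}$ via integral comparison, and finish with Cauchy--Schwarz and the per-layer visit count $TH$. Your proof is correct (including the sensible handling of the zero-count first visit), so nothing further is needed.
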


\begin{proposition}[\cite{gupta_unpacking_2022} Lemma B.4]
For a given \( \Delta \), define 
\begin{equation}
\mathcal{U}
=(\mathcal{S}\times\mathcal{A}) \setminus (\PPS\times\mathcal{A}
\cup \PS)
\end{equation}
we get 
\begin{align}
    \forall (s,a)\in\mathcal{U}, \quad \mathrm{Neighbour}(s,a)
    &\subseteq \mathcal{S}\setminus \PPS
    \label{eq:U_neighbour_condition}
\end{align}
as well as
\begin{equation}\label{eq:trajectory_intersection_bound}
\sum_{t=1}^{T}
\mathbf{1}\{\tau_{t}\cap\mathcal{U}^{C}\neq\emptyset\}\le
|\BPS| n(\Delta)\quad \text{for } \Delta>0
\end{equation}
and
\begin{equation}\label{eq:total_visitation_via_boundary}
\sum_{(s,a)\in \mathcal{U}^{C}}N^{T}_{h(s)}(s,a)
\le H|\BPS| n(\Delta), 
\end{equation}
where we define 
\[n(\Delta)=\max_{h} n_h(\Delta),\]
and 
\[
\mathrm{Neighbour}(s,a) = \left \lbrace s' \in \mathcal{S} \vert ~ P^\star_h(s' \vert s,a) > 0 \right \rbrace. 
\]
\end{proposition}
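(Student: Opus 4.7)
The plan is to prove the three claims in order, using the neighbourhood inclusion as the structural lever for the two visitation bounds. For the first claim, I would argue by direct path extension. Fix any $(s,a) \in \mathcal{U}$ and any $s' \in \mathrm{Neighbour}(s,a)$, so $P^\star_{h(s)}(s' \mid s,a) > 0$. Since $(s,a) \in \mathcal{U} = (\mathcal{S}\times\mathcal{A}) \setminus (\PPS\times\mathcal{A}\cup\PS)$, we have both $s \notin \PPS$ and $(s,a,h(s)) \notin \PS$. The first condition yields a feasible path $(s_1,a_1,\dots,s_{h(s)}=s)$ from $\rho$ to $s$ that does not intersect $\PS$. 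Appending the transition $(s,a,s')$ produces a feasible path from $\rho$ to $s'$, and since the appended triple $(s,a,h(s))$ is not in $\PS$, the extended path still avoids $\PS$. By the definition of $\PPS$, this witnesses $s' \notin \PPS$.

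For the trajectory--intersection count, the plan is to charge each trajectory $\tau_t$ with $\tau_t \cap \mathcal{U}^C \neq \emptyset$ to a visit of a pair in $\BPS$. The contrapositive of the first claim guarantees that transitions from $\mathcal{U}$ cannot land in $\PPS$, so as long as the trajectory stays inside $\mathcal{U}$ its state remains outside $\PPS$. Combined with the defining property of $\PPS$ (every feasible path from $\rho$ to a $\PPS$ state intersects $\PS$), this lets one identify, along $\tau_t$, a step at which both the $\PS$ condition and the $\PPS$ state condition are met---that is, a pair in $\BPS=\{(s,a,h)\in\PS\mid s\in\PPS\}$. Since $\BPS \subseteq \PS$, Proposition~\ref{prop:max_vis} bounds each $\BPS$ visit count by $n_\Delta(h) \le n(\Delta)$, and summing the charges yields
\[
\sum_{t=1}^T \mathbf{1}\{\tau_t \cap \mathcal{U}^C \neq \emptyset\} \le \sum_{(s,a,h)\in\BPS} N_h^T(s,a) \le |\BPS|\, n(\Delta).
\]

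The total visitation bound follows immediately: a trajectory contributes at most $H$ pairs to $\sum_{(s,a)\in\mathcal{U}^C} N_{h(s)}^T(s,a)$, so this sum is bounded by $H$ times the number of intersecting trajectories, giving $H|\BPS|n(\Delta)$. The main obstacle is the second claim, specifically constructing the charging map from an intersecting trajectory to a $\BPS$ visit: a trajectory may weave in and out of $\PPS$ many times, so the canonical step used for charging (for instance, the first step after the trajectory enters $\PPS$ at which a $\PS$ pair is played) must be chosen so that the layered structure together with the definitions of $\PS$ and $\PPS$ forces it to lie in $\BPS$. Once such a map is fixed, the rest is routine counting via Proposition~\ref{prop:max_vis}.
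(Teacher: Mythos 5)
The paper does not prove this proposition itself; it imports it verbatim as Lemma B.4 of \cite{gupta_unpacking_2022}, so there is no in-paper proof to compare against. Judged on its own terms, your argument for the neighbour inclusion \eqref{eq:U_neighbour_condition} is correct and complete (extending a $\PS$-avoiding path by a triple that is not in $\PS$ preserves avoidance), and the deduction of \eqref{eq:total_visitation_via_boundary} from \eqref{eq:trajectory_intersection_bound} via the factor $H$ is fine. The genuine gap is exactly the step you flag and then leave open: the charging map for \eqref{eq:trajectory_intersection_bound}. It can be closed as follows. If $\tau_t\cap\mathcal{U}^{C}\neq\emptyset$ then $\tau_t$ plays at least one triple of $\PS$: either it does so directly, or it reaches some state of $\PPS$, and the realized prefix of $\tau_t$ up to that state is itself a feasible path from $\rho$, hence must intersect $\PS$ by the definition of $\PPS$. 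Charge $\tau_t$ to the \emph{first} step $h^\star$ with $(S_{h^\star}^t,A_{h^\star}^t,h^\star)\in\PS$. Because no earlier triple of $\tau_t$ lies in $\PS$, the realized prefix up to $S_{h^\star}^t$ is a feasible path avoiding $\PS$, so $S_{h^\star}^t\notin\PPS$; the ``weaving in and out'' you worry about is irrelevant once you charge to the first $\PS$ step rather than to an entry into $\PPS$. Each charge increments $N_{h}^T(s,a)$ for a triple of the boundary set, and Proposition~\ref{prop:max_vis} (which holds only on $\mathcal{E}^{\mathrm{Off}}_\delta\cap\mathcal{E}^{\mathrm{On}}_\delta$, a caveat you should carry along since your statement is otherwise unconditional) caps each such count by $n(\Delta)$.

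Completing the argument this way exposes a definitional issue you should not pass over: the charged triple satisfies $(s,a,h)\in\PS$ \emph{and} $s\notin\PPS$, whereas the paper defines $\BPS=\{(s,a,h)\in\PS\mid s\in\PPS\}$. With that literal definition the bound \eqref{eq:trajectory_intersection_bound} is false in general (take an instance with $\PPS=\emptyset$ but $\PS\neq\emptyset$: the right-hand side is $0$, yet the algorithm may visit $\PS$ pairs). The membership should read $s\notin\PPS$, matching the boundary set of \cite{gupta_unpacking_2022}; your proof should either adopt that corrected definition explicitly or record the discrepancy, because as written the charging map cannot land in $\BPS$ at all.
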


\end{document}